\newcommand{\cmark}{\ding{51}}%
\newcommand{\xmark}{\ding{55}}%
\newcommand{\reals}{\mathbb{R}}
\newcommand{\setS}{\mathcal{S}}
\newcommand{\indM}{\mathcal{M}}
\newcommand{\linreg}{\mathcal{L}_{\mathcal{M}}}
\newcommand{\recfield}{\RF}
\def\eqref#1{equation~\ref{#1}}
\def\1{\bm{1}}
\def\vone{{\bm{1}}}
\DeclareMathAlphabet{\mathsfit}{\encodingdefault}{\sfdefault}{m}{sl}
\SetMathAlphabet{\mathsfit}{bold}{\encodingdefault}{\sfdefault}{bx}{n}
\newcommand{\R}{\mathbb{R}}
\DeclareMathOperator*{\argmin}{arg\,min}
\theoremstyle{plain}
\newtheorem{theorem}{Theorem}[section]
\newtheorem{claim}{Claim}[section]
\newtheorem{lemma}[theorem]{Lemma}
\theoremstyle{definition}
\newtheorem{definition}[theorem]{Definition}
\theoremstyle{remark}
\definecolor{darkblue}{rgb}{0, 0, 0.5}
\title{Projecting Assumptions: The Duality Between Sparse\\ Autoencoders and Concept Geometry}
\author[1,$\star$]{\textbf{Sai Sumedh R. Hindupur}}
\author[2,3,$\star$]{\textbf{Ekdeep Singh Lubana }}
\author[4,$\star$]{\textbf{Thomas Fel}}
\author[1,4]{\textbf{Demba Ba}}
\affil[1]{School of Engineering and Applied Science, Harvard University}
\affil[2]{CBS-NTT Program in Physics of Intelligence, Harvard University}
\affil[3]{Physics of Artificial Intelligence Group, NTT Research, Inc., Sunnyvale, CA, USA}
\affil[4]{Kempner Institute, Harvard University}
\begin{document}
\doparttoc 
\faketableofcontents 


\maketitle

\vspace{-4pt}
\begin{abstract}
\vspace{-5pt}
Sparse Autoencoders (SAEs) are widely used to interpret neural networks by identifying meaningful concepts from their representations. 
However, do SAEs truly uncover all concepts a model relies on, or are they inherently biased toward certain kinds of concepts? 
We introduce a unified framework that recasts SAEs as solutions to a bilevel optimization problem, revealing a fundamental challenge: each SAE imposes structural assumptions about how concepts are encoded in model representations, which in turn shapes what it can and cannot detect. 
This means different SAEs are not interchangeable---switching architectures can expose entirely new concepts or obscure existing ones. 
To systematically probe this effect, we evaluate SAEs across a spectrum of settings: from controlled toy models that isolate key variables, to semi-synthetic experiments on real model activations and finally to large-scale, naturalistic datasets. 
Across this progression, we examine two fundamental properties that real-world concepts often exhibit: heterogeneity in intrinsic dimensionality (some concepts are inherently low-dimensional, others are not) and nonlinear separability. 
We show that SAEs fail to recover concepts when these properties are ignored, and we design a new SAE that explicitly incorporates both, enabling the discovery of previously hidden concepts and reinforcing our theoretical insights. 
Our findings challenge the idea of a universal SAE and underscores the need for architecture-specific choices in model interpretability. 
%
\end{abstract}

\vspace{-8pt}
\section{Introduction}
\label{section:intro}
\vspace{-5pt}

Interpretability has become an important research agenda for assuring, debugging, and controlling neural networks~\citep{anwar2024foundational, bengio2025international, lehalleur2025you, rudin2022interpretable, adebayo2020debugging}. 
To this end, sparse dictionary learning methods~\citep{serre2006learning, faruqui2015sparse, subramanian2018spine, arora2018linear, olshausen1996emergence}, especially Sparse Autoencoders (SAEs), have seen a resurgence in literature, since they offer an unsupervised pipeline for simultaneously enumerating all concepts a model may rely on for making its predictions~\citep{cunningham2023sparse, bricken2023monosemanticity, gao2024scaling, rajamanoharan2024jumping, fel2025archetypalsaeadaptivestable, bussmann2024batchtopk, fel2023craft, colin2024local}.
Specifically, an SAE decomposes representations into an overcomplete set of latents that (ideally) correspond to abstract, data-centric concepts which, upon aggregation, explain away the model representations~\citep{kim2018interpretability, fel2025sparks}. In other words, an SAE is expected to result in \textit{monosemantic} latents which are more interpretable than the neurons of the original model \cite{elhage2022superposition}.
For example, SAE latents derived from models in diverse domains have been demonstrated to activate for meaningful concepts such as specific monuments, behaviors, and scripts in language~\citep{templeton2024scaling, durmusevaluating}; specific objects, people, and scene properties in vision~\citep{fel2025archetypalsaeadaptivestable, thasarathan2025universalsparseautoencodersinterpretable}; and correlate with binding sites and functional motifs in protein autoregressive models~\citep{simon2024interplm, adams2025mechanistic, garcia2025interpreting}.

To the extent the concepts uncovered using SAEs faithfully represent the concepts used by a model for making its predictions, we can use this information to perform surgical interventions on a model's representations and hence achieve control over its behavior~\citep{durmusevaluating, marks2024sparse, surkov2024unpacking}.
While this forms a bulk of the motivation around research in SAEs~\citep{bricken2023monosemanticity, kantamneni2025sparse, bereska2024mechanistic}, we argue the theoretical foundations that suggest SAEs are an optimal tool for achieving this goal are lacking.
For example, is it possible that instead of uncovering all concepts a model utilizes in its computation, SAEs are biased towards identifying only a specific, narrower subset of concepts? 
Furthermore, is it possible that different SAEs, which generally achieve similar fidelity/sparsity, have qualitatively different biases and hence uncover different concepts from model representations? 
An affirmative answer to these questions may explain recent negative results on SAEs, e.g., algorithmic instability~\citep{fel2025archetypalsaeadaptivestable, paulo2025sparse} and lack of causality~\citep{bhalla2024towards, menon2024analyzing}. 
Motivated by this, we make the following contributions in this work.

\begin{figure}
    \centering
    \vspace{-15pt}
    \includegraphics[width=0.92\textwidth]{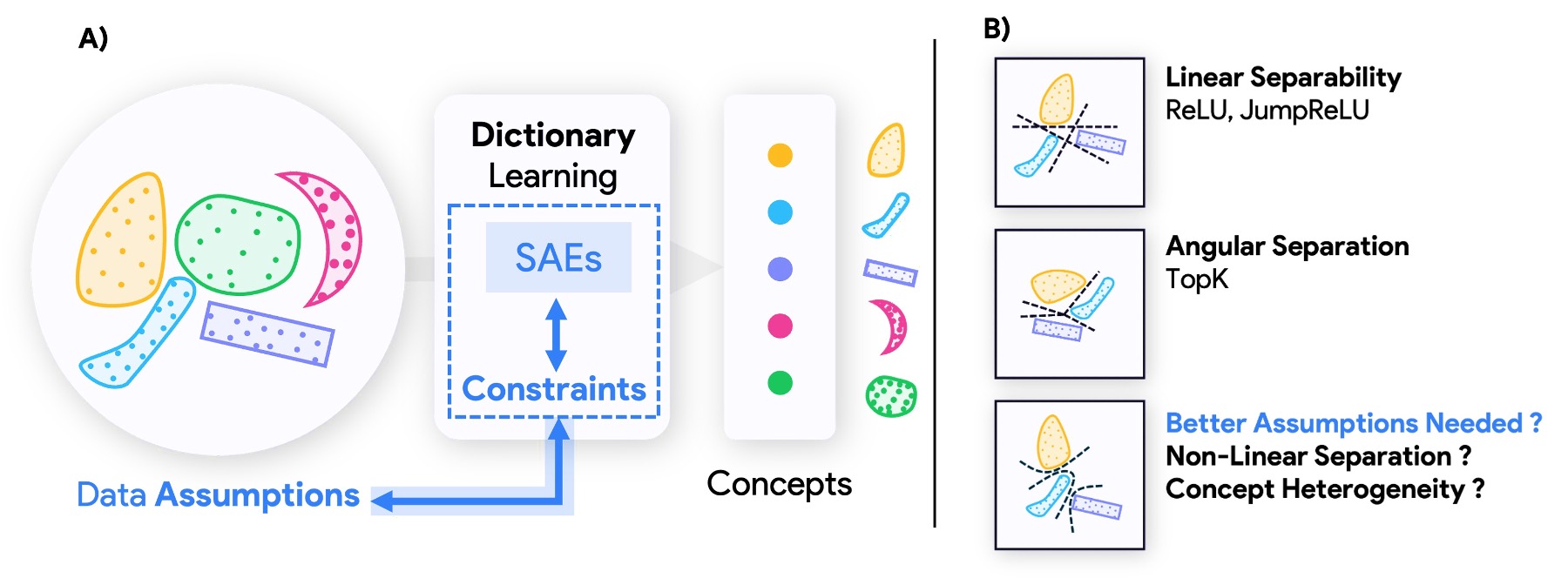}
        \vspace{-12pt}
        \caption{\textbf{The Duality Between SAEs Architectures and Their Implicit Data Assumptions.} 
        \textbf{A)} SAEs do not passively extract concepts—they impose constraints that shape what can be detected. Each SAE architecture inherently assumes a specific structure in how features are encoded, leading to a corresponding dual assumption about the data. \textbf{B)} Different SAEs rely on different assumptions: some expect features to be linearly separable (ReLU, JumpReLU) or separable by angle while having uniform intrinsic dimensionality (TopK). These assumptions dictate what an SAE can successfully extract—and what it may miss entirely.
        \vspace{-15pt}}
        \label{fig:introfig-duality-assumptions}
\end{figure}

\begin{itemize}[leftmargin=16pt, itemsep=1.5pt, topsep=1pt, parsep=1.5pt, partopsep=1pt]

    \item \textbf{Duality between Concepts' Organization and the Optimal SAE that Identifies Them.} 
    We formulate SAEs as solutions to a specific bilevel optimization problem, which highlights a fundamental \textbf{duality} between concept structure in model representations and an SAE encoder's \textit{receptive fields} (formalized in Def.~\ref{def:receptive}). 
    Crucially, this implies any SAE is implicitly biased towards identifying concepts that are organized in a specific manner (Fig.~\ref{fig:introfig-duality-assumptions}). 

    \item \textbf{Empirical Validation via Concepts that do not Follow SAEs' Implicitly Assumed Organization.} 
    %
    We evaluate SAEs on concepts with heterogeneous intrinsic dimensionality (i.e., different concepts occupy subspaces of varying dimension) and nonlinear separability through experiments on controlled synthetic setups to real-world model activations, demonstrating that SAEs failing to account for these properties systematically miss the corresponding concepts. 
    %

    \item \textbf{A Methodology for Designing Task-Specific SAEs.} 
    Our results suggest no single SAE architecture may be universally optimal, and hence SAEs should be designed by accounting for how concepts are encoded in model representations. 
    To validate this, we introduce SpaDE, a novel SAE that explicitly incorporates heterogeneity and nonlinear separability into its encoder. 
    As we show, SpaDE successfully identifies concepts that other SAEs fail to detect, reinforcing the need for data-aware choices in interpretability.
    
\end{itemize}

\vspace{-0.5em}
\section{Preliminaries}
\label{section:prelims}
\paragraph{Notation.} We denote vectors as lowercase bold (e.g., $\x$) and matrices as uppercase bold (e.g., $\X$). 
$[n]$ denotes $\{1, \dots, n\}$ and $\mathcal{B} = \{\x \mid \|\x\|_2 \leq 1\}$ the unit $\ell_2$-ball in $\mathbb{R}^d$.
We assume access to a dataset of $k$ samples, $\X = \{\x_1, \ldots, \x_k\}$, where $\x \in \mathbb{R}^d$. 
For any matrix $\X$ or vector $\x$, we use $\X \geq \bm{0}$ (resp. $\x \geq \bm{0}$) to indicate element-wise non-negativity.  
\vspace{-0.5em}
\paragraph{Sparse Coding.} Also known as Sparse Dictionary Learning~\citep{olshausen1996emergence, olshausen1997sparse}, sparse coding assumes a generative model of data as a sparse combination of latents. 
Specifically, sparse coding involves solving the following optimization problem:  
\begin{align}
\label{eq:dictionarylearning}
    \argmin_{\substack{ \z \geq \bm{0}, \D \in \mathcal{B}}} \,\, ~~ \sum_{\x} \| \x- \D\z \|_2^2 + \lambda \mathcal{R}(\z),
\end{align}
where $\z \in \R^{s}$ is a sparse latent code, $\D \in \R^{d \times s}$ are the dictionary atoms, and $\mathcal{R}(\z)$ is a sparsity-promoting regularizer, typically $\|\z\|_1$. 
Note that the optimization is performed over \textit{both} the sparse code $\z$ (with $\z \geq \bm{0}$) and the dictionary $\D$. Further details are included in Appendix \ref{section:dictlearning}.
%

\paragraph{Sparse Autoencoders.} SAEs~\citep{ng2011sparse} approximate sparse dictionary learning by using a single hidden layer to compute the sparse code from data: 
%
\begin{equation}
\begin{split}
\label{eq:sae-def}
    \text{(i)}\,\, \z = \f(\x) = \g( \W^\tr \x + \bias_{e}), \quad \text{and} \quad \text{(ii)}\,\,
    \hat{\x} = \D\z + \bias_{d},
\end{split}
\end{equation}
where $\W, \D \in \reals^{d \times s}$ and $\g: \R^s \to \R^s$ is the encoder non-linearity. \footnote{Encoder bias $\bias_e$ is not used in the TopK SAE \cite{gao2024scaling}.}
Here, sparsity is enforced on the SAE latent code $\z$. 
SAEs are trained on the sparse dictionary learning loss (Eq.~\ref{eq:dictionarylearning}), with the sparsity-promoting regularizer $\mathcal{R}$.
Different SAEs typically differ in the choice of encoder nonlinearity $\g$ and the regularizer $\mathcal{R}$, as discussed in our unified framework next. 

\vspace{-0.5em}

\section{Unified Framework for SAEs}
\label{section:framework}
\setlength\intextsep{0pt}
\begin{wraptable}{r}{0.55\textwidth}
\caption{\textbf{Projection Nonlinearities in SAE Encoders.} Each model can be understood by its nonlinear orthogonal projection $\g(\cdot)$ onto a constraint set $\setS$ which determines its activation behavior, sparsity structure, and implicit data assumptions.}
\label{table:projnonlinearities_main}
\begin{center}
\begin{small}
\begin{tabular}{c@{\hskip 0.5pt}c}
\toprule
Model & $\g(\v)$  \\
\midrule
ReLU   & $\pnonlin{\v}$, $\setS=\{ \y \in \R^s: \y \geq 0\}$ \\ 
TopK   & $\pnonlin{\v}$, $\setS=\{ \y \in \R^s: \y \geq \bm{0}, ||\y||_0 \leq k\}$ \\ 
Heaviside ($H$) & $\pnonlin{\v+\frac{1}{2}\vone}$, $\setS = \{0,1\}^s$ \\ 
JumpReLU &  ReLU($\v-\bm{\theta}$) + $\bm{\theta}\odot H(\v-\bm{\theta})$ 

\end{tabular}
\end{small}
\end{center}
\vskip -0.1in
\end{wraptable}

In this section, we develop a framework which captures multiple SAEs used in practice. 
More specifically, we analyze the following three popular SAE architectures: ReLU SAE~\citep{cunningham2023sparse, bricken2023monosemanticity}, TopK SAE~\citep{gao2024scaling, makhzani2013k} and JumpReLU SAE~\citep{rajamanoharan2024jumping, lieberum2024gemma}. 
This framework unravels a duality between how concepts are encoded in model representations and an SAE's architecture. 
%
%
The nonlinearity of the SAEs under study is an orthogonal projection onto some set, where the choice of projection set differentiates SAEs (see Fig.~\ref{fig:projencoders-projsets-main}).
We formalize such nonlinearities as projection nonlinearities, as defined below.
\begin{definition}[Projection Nonlinearity]
\label{def:projectionnonlinearities}
    Let $\v \in \mathbb{R}^s$ be a pre-activation vector. A projection nonlinearity $\pnonlin{\cdot}:\mathbb{R}^s \to \mathbb{R}^s$ is defined as:
    \begin{align}
    \label{eq:projectionnonlinearity}
        \pnonlin{\v} &= \argmin_{\projected \in \setS} \|\projected - \v\|_2^2,
    \end{align}
    where $\setS \subseteq \mathbb{R}^s$ is the constraint set onto which $\v$ is orthogonally projected. Popular SAE nonlinearities, e.g., ReLU, JumpReLU, and TopK, are orthogonal projections onto different sets (see Tab. \ref{table:projnonlinearities_main}). 
\end{definition}

%
Generalizing the variational form of projection nonlinearities allows us to formalize SAEs as follows.

\begin{figure}[t]
    \centering
    \includegraphics[width=0.9\textwidth]{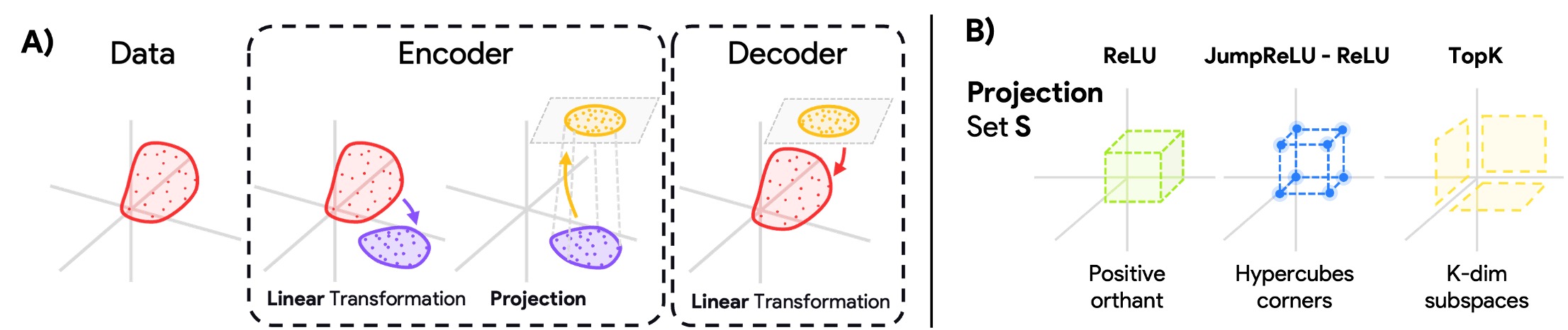}
    \caption{\textbf{Projection As The Key Architectural Difference Between SAEs.} 
    \textbf{A)} SAE encoders do more than just linearly transform data---they project it onto an architecture-specific constraint set. This projection fundamentally determines which features an SAE can extract and which it will suppress. \textbf{B)} Different SAEs rely on different projection sets $\setS$: ReLU projects onto the positive orthant, TopK onto $K-$sparse subspaces, and JumpReLU combines ReLU with a projection onto a hypercube (via a Heaviside step function). 
    }
    \label{fig:projencoders-projsets-main}
\end{figure}

\begin{claim}[Bilevel optimization of SAEs]
\label{thm:sae-bileveloptim}
    A sparse autoencoder (Eq.~\ref{eq:sae-def}) with the dictionary learning loss function (Eq.~\ref{eq:dictionarylearning}) solves the following bi-level optimization problem:
    \begin{equation}
    \begin{split}
    \label{eq:bilevel-optim-sae}
        \argmin_{\D \in \mathcal{B}, \z \geq \bm{0}} & ~~ \sum_{\x} \| \x - \D\z \|_2^2 + \lambda \mathcal{R}(\z) \\
        s.t.\;\; \z &= \f(\x) \in \argmin_{\projected \in \setS} \bm{F}(\projected, \W, \x),
    \end{split}
    \end{equation}
    where $\bm{F}$ is a variational formulation of the SAE encoder $\f$. For SAEs, $\f(\x) = \g(\W^\tr \x+ \bias_e)$ (Eq.~\ref{eq:sae-def}). Note that this inner optimization with the objective $\bm{F}$ is what differentiates different SAEs.
\end{claim}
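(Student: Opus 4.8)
The plan is to treat the claim as an unpacking of the projection-nonlinearity definition: since each SAE encoder applies a nonlinearity $\g$ that is, by Definition~\ref{def:projectionnonlinearities}, an orthogonal projection onto $\setS$, the encoder computation is \emph{itself} a solved optimization problem, so identifying its objective as $\bm{F}$ supplies the inner level while the training loss (Eq.~\ref{eq:dictionarylearning}) supplies the outer level.

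Concretely, I would first recall the encoder $\z = \f(\x) = \g(\W^\tr \x + \bias_e)$ and set $\v = \W^\tr \x + \bias_e$. Applying Eq.~\ref{eq:projectionnonlinearity} gives
\begin{equation}
    \z = \pnonlin{\v} = \argmin_{\projected \in \setS} \|\projected - (\W^\tr \x + \bias_e)\|_2^2,
\end{equation}
so the natural choice is $\bm{F}(\projected, \W, \x) = \|\projected - (\W^\tr \x + \bias_e)\|_2^2$. Expanding the square and discarding the $\projected$-independent term $\|\W^\tr \x + \bias_e\|_2^2$, which does not affect the $\argmin$, exhibits $\bm{F}$ up to an additive constant as $\|\projected\|_2^2 - 2\projected^\tr(\W^\tr \x + \bias_e)$, making transparent that the encoder solves a linearly-driven quadratic program constrained to $\setS$. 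By construction the inner relation $\z = \f(\x) \in \argmin_{\projected \in \setS} \bm{F}(\projected, \W, \x)$ then holds.

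With the inner level fixed, the outer level is immediate: the SAE is trained on the dictionary-learning loss, so minimizing $\sum_{\x}\|\x - \D\z\|_2^2 + \lambda\mathcal{R}(\z)$ over $\D \in \mathcal{B}$ and $\z \geq \bm{0}$, with $\z$ pinned to the encoder output by the inner problem, reproduces exactly the SAE training objective. For ReLU, TopK, and Heaviside the constraint set already satisfies $\setS \subseteq \{\projected \geq \bm{0}\}$, so the outer nonnegativity constraint is consistent with, and subsumed by, the projection.

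The main obstacle is the case-by-case verification that each entry of Table~\ref{table:projnonlinearities_main} is genuinely the orthogonal projection onto its listed $\setS$. For ReLU this is the elementary identity $\argmin_{y \geq 0}(y - v)^2 = \max(v,0)$; for TopK one checks that projecting onto $\{\projected \geq \bm{0},\, \|\projected\|_0 \leq k\}$ retains the $k$ largest positive coordinates and zeroes the rest; the Heaviside case projects onto the discrete cube $\{0,1\}^s$ after the half-unit shift. JumpReLU is the delicate one, since $\mathrm{ReLU}(\v - \bm{\theta}) + \bm{\theta}\odot H(\v - \bm{\theta})$ must be matched to a projection onto a threshold-dependent, non-convex set; here I would exhibit that set explicitly and confirm the projection is well-defined, i.e.\ attained and single-valued away from a measure-zero tie set. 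Once these identifications hold, the bilevel equivalence follows directly from the construction above.
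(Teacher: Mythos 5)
Your core argument is the same as the paper's proof, just written out at greater length: the outer level is the dictionary-learning training loss (Eq.~\ref{eq:dictionarylearning}), the inner level is the encoder constraint, and the identification $\bm{F}(\projected,\W,\x)=\|\W^\tr\x+\bias_e-\projected\|_2^2$ is exactly the variational form the paper uses; expanding the square and dropping the $\projected$-independent term is harmless but adds nothing, since the claim only asks that $\f(\x)$ lie in the $\argmin$. Note also that, strictly speaking, the claim does not require the case-by-case verification of Table~\ref{table:projnonlinearities_main} at all---the paper deliberately phrases $\bm{F}$ as a \emph{generalization} of projection nonlinearities, and defers the table to Theorem~\ref{thm:projectionsets_nonlinearities} in the appendix---so your proof of the claim proper is complete without your final paragraph.

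That said, the one genuinely new step you propose is the one that would fail. You plan to exhibit, for JumpReLU, a threshold-dependent set $\setS_\theta$ such that $\mathrm{JumpReLU}(\v)=\pnonlin{\v}$, with at most a measure-zero tie set. No such set exists. Working componentwise (Lemma~\ref{lemma:elementwiseprojections}), any orthogonal projection onto a set $S\subseteq\R$ fixes the points of $S$; since the scalar JumpReLU has range $\{0\}\cup(\theta,\infty)$, one would need $\overline{S}\supseteq\{0\}\cup[\theta,\infty)$. But then for any input $v\in(\theta/2,\theta)$ the nearest point of $\overline{S}$ is $\theta$, at distance $\theta-v<v$, whereas $\mathrm{JumpReLU}(v)=0$; the mismatch holds on an open interval, so it cannot be dismissed as a tie-breaking artifact. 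This is precisely why the paper's Table~\ref{table:projnonlinearities_main} and Theorem~\ref{thm:projectionsets_nonlinearities} treat JumpReLU differently from ReLU, TopK, and Heaviside: it is decomposed as $\mathrm{ReLU}(\v-\bm{\theta})+\bm{\theta}\odot H(\v-\bm{\theta})$, a \emph{combination} of shifted projections (onto the positive orthant and onto $\{0,1\}^s$ respectively), and JumpReLU enters the bilevel formulation only through the generalized objective $\bm{F}$, not as a single squared-distance projection. If you repair your last paragraph by adopting this decomposition---or by simply observing that the claim needs only some variational formulation $\bm{F}$, which exists trivially for any encoder---your argument matches the paper's.
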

\begin{proof}
    The outer optimization follows from the dictionary learning loss with sparsity-inducing penalty of the SAE (Eq.~\ref{eq:dictionarylearning}). The constraint is imposed by the SAE encoder's architecture (Eq. \ref{eq:sae-def}). The variational formulation of the encoder as the minimization of some objective $\bm{F}$ over set $\setS$ is a generalization of projection nonlinearities (Eq.~\ref{eq:projectionnonlinearity}) for which $\bm{F}(\projected, \W, \x) = \|\W^\tr\x + \bias_{e} - \projected\|_2^2$. 
\end{proof}

This framework implies that each SAE solves a different, constrained (through encoder architecture) optimization version of sparse dictionary learning. 
\textit{This constraint dictates the quality of the solution obtained}, since it restricts the search space of solutions to dictionary learning, and hence does not have to capture the full sparse coding solution.
To further formalize this claim in the next section, we now define receptive fields, a popularly used concept in neuroscience to study the response properties of biological neurons~\citep{olshausen1997sparse}. We use the term \textit{neuron} to define receptive fields in line with the inspiration from neuroscience, but they refer to \textit{neurons of SAEs} (SAE latents) in subsequent analysis.

\begin{definition}[Receptive Field]
\label{def:receptive}
    Consider a neuron $k$, which computes a function $f^{(k)}: \R^d \to \R$. The receptive field of this neuron is defined as $\RF_k = \{ \x \in \R^d \mid f^{(k)}(\x) > 0 \}$.
\end{definition}

Intuitively, $\RF_k$ represents the region of input space where neuron $k$ is active. 
%
%
%
\textit{The structure of receptive fields in an SAE is dictated by its encoder's architecture.}

\textbf{Duality}: 
%
Properties of the SAE encoder will constrain receptive fields' structure for SAE latents. 
These constraints directly translate to assumptions (often \textit{implicit}, see Sec.~\ref{section:dataassumptions-properties}) about the data structure, since ``monosemanticity''~\citep{bricken2023monosemanticity, elhage2022superposition} requires receptive fields to match structure of concepts in data. 
%
%
Alternatively, if one knows how concepts are organized in the data (model representations), duality can be used to design an appropriate SAE architecture (see Sec.~\ref{subsection:spadedesign}).
%

\begin{tcolorbox}[colback=gray!10, colframe=black, title=\bfseries Fundamental Limitation of SAEs]
    An SAE's encoder enforces \textit{implicit dual assumptions about data}, fundamentally shaping which concepts it can identify and which remain obscure. To build more effective SAEs, these assumptions must \textit{explicitly match the true structure of the data.}
\end{tcolorbox}

%

\vspace{-0.5em}
\section{Implicit SAE Assumptions and Data Properties}
\label{section:dataassumptions-properties}


%
%
In this section, we explicitly state the data assumptions made by ReLU, TopK and JumpReLU SAEs.


\begin{theorem}[Implicit Assumptions; Informal]
    An SAE makes implicit assumptions about the structure of concepts in data, reflecting it in the receptive fields of its encoder. These assumptions are explicitly stated in Tab.~\ref{table:implicit-assumptions} for ReLU, JumpReLU and TopK SAEs (derived in  App.~\ref{sec:appendix-recfields}).
    \label{thm:implicit-assumptions-sae}
\end{theorem}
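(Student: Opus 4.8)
The plan is to prove the statement constructively: for each SAE in Table~\ref{table:projnonlinearities_main}, I would compute the receptive field $\RF_k$ (Def.~\ref{def:receptive}) of an arbitrary latent $k$ explicitly, read off its geometry, and then translate that geometry into the corresponding data assumption recorded in Table~\ref{table:implicit-assumptions}. The key observation that makes this tractable is that, by Claim~\ref{thm:sae-bileveloptim} and Def.~\ref{def:projectionnonlinearities}, the encoder is an orthogonal projection $\g$ of the affine pre-activation $\W^\tr\x + \bias_e$ onto $\setS$, so the sign pattern of $\z = \f(\x)$ is determined entirely by where this pre-activation lands relative to $\setS$. Writing $\vw_k$ for the $k$-th column of $\W$, the scalar pre-activation of latent $k$ is $\vw_k^\tr\x + b_{e,k}$, and $\RF_k = \{\x : f^{(k)}(\x) > 0\}$ is governed by this quantity together with (for non-separable nonlinearities) the competing pre-activations.

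For the ReLU and JumpReLU encoders the projection acts coordinate-wise, so the receptive field factorizes and depends only on latent $k$'s own pre-activation. For ReLU, $f^{(k)}(\x) > 0 \iff \vw_k^\tr\x + b_{e,k} > 0$, so $\RF_k$ is an open half-space bounded by the hyperplane $\vw_k^\tr\x + b_{e,k} = 0$. For JumpReLU, I would first simplify the scalar map $\mathrm{ReLU}(v-\theta_k) + \theta_k\,H(v-\theta_k)$, which equals $v$ when $v > \theta_k$ and $0$ otherwise; hence $\RF_k = \{\x : \vw_k^\tr\x + b_{e,k} > \theta_k\}$, again a half-space, now with a shifted threshold and a ``dead band'' below it. In both cases the receptive field is bounded by a single hyperplane, which is exactly the statement that the associated concept must be linearly separable from the rest.

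The TopK encoder is the substantive case because its projection onto $\setS = \{\y \ge \bm{0},\ \|\y\|_0 \le K\}$ is not separable across coordinates. Here $f^{(k)}(\x) > 0$ requires two conditions simultaneously: (i) positivity, $\vw_k^\tr\x + b_{e,k} > 0$, and (ii) membership in the top-$K$, i.e., strictly fewer than $K$ indices $j$ satisfy $\vw_j^\tr\x + b_{e,j} > \vw_k^\tr\x + b_{e,k}$. I would rewrite each comparison as a half-space $(\vw_j - \vw_k)^\tr\x > b_{e,k} - b_{e,j}$, so that condition (ii) is a union, over all size-$(K{-}1)$ subsets of competing ``winners'', of intersections of such half-spaces. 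The resulting $\RF_k$ is thus a generally non-convex polyhedral region carved out by the pairwise-difference hyperplanes, intersected with the positivity half-space. To obtain the stated assumptions I would then specialize to normalized inputs and atoms, where $\vw_j^\tr\x$ is a monotone function of the angle between $\x$ and $\vw_j$: condition (ii) then selects the $K$ atoms most aligned with $\x$, yielding angular separability, while the hard cardinality constraint $\|\z\|_0 \le K$ forces every input to activate at most $K$ latents, which is the uniform intrinsic dimensionality assumption.

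The main obstacle is precisely this TopK geometry: unlike the half-space cases, $\RF_k$ depends jointly on all latents through the competition in condition (ii), so the region is neither a half-space nor convex, and connecting its shape cleanly to the two data properties requires the normalization step together with a careful argument that the decision boundaries are the difference hyperplanes $\vw_j - \vw_k$. The Heaviside row of Table~\ref{table:projnonlinearities_main} serves as a consistency check: its projection onto $\{0,1\}^s$ makes $H(\vw_k^\tr\x + b_{e,k}) = 1 \iff \vw_k^\tr\x + b_{e,k} > 0$, recovering a half-space and hence the same linear-separability reading as ReLU. Assembling these per-architecture computations into Table~\ref{table:implicit-assumptions} completes the informal statement, with the fully rigorous coordinate-by-coordinate derivations deferred to App.~\ref{sec:appendix-recfields}.
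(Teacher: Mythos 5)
Your proposal is correct and takes essentially the same route as the paper's App.~\ref{sec:appendix-recfields} derivation: compute $\RF_k$ per architecture, obtain half-spaces for ReLU/JumpReLU and a union of regions carved by the pairwise-difference hyperplanes $(\w_j-\w_k)$ for TopK, then translate the geometry into the dual data assumptions (the paper packages the identical computation through a ``gating'' lemma---each nonlinearity is affine conditional on its active index set $\indM$, so $\RF_k$ is the union of piecewise-linear regions $\linreg$ with $k\in\indM$). One simplification you could borrow: the paper's TopK encoder has no per-latent bias (only an optional shared pre-encoder bias that translates the vertex), so all comparison hyperplanes pass through a common point, each region $\linreg$ is a hyperpyramid (a cone), and the activation pattern depends only on the direction of $\x$ from that vertex---yielding angular separability immediately, without your extra step of normalizing inputs and atoms.
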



\setlength\intextsep{1pt}
\begin{wrapfigure}[14]{r}{0.35\textwidth}
\vspace{-3pt}
\centering
    \includegraphics[width=0.9\linewidth]{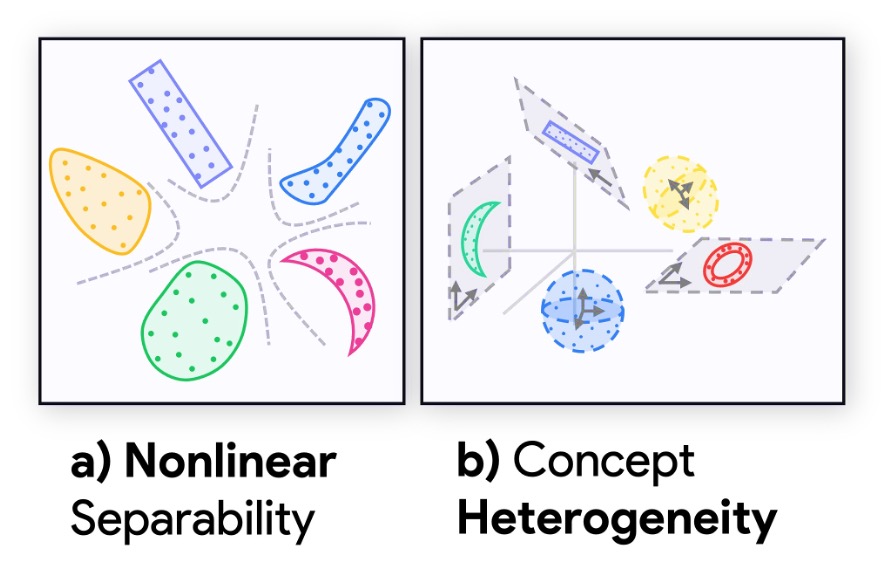}
    \vspace{-0.5em}
    \caption{
    \textbf{Illustration of Two Reasonable Data Assumptions.}
    \textbf{A)} Concepts may not be separable using hyperplanes. 
    \textbf{B)} Some concepts are inherently low-dimensional, while others span higher-dimensional spaces. 
    }
    \label{fig:newdataproperties}
\end{wrapfigure}
The optimality of the above assumptions depends on the ``true structure'' of concepts in model representations. By "true structure" of concepts, we refer to the ground truth structure in accordance with which concepts are organized in a model’s activations.
%
While concept structure is not known in its entirety, we highlight two properties of how (certain) concepts are organized in a model based on recent interpretability literature. 
\begin{enumerate}[leftmargin=16pt, itemsep=2pt, topsep=1pt, parsep=2pt, partopsep=1pt]
    \item \textbf{Nonlinear separability of concepts.} Concepts are not separable by linear decision boundaries. 
    Evidence towards such concepts include features with dependence on magnitude, such as onion features~\citep{csordás2024recurrentneuralnetworkslearn}. 
     Even ``linear features'' ~\citep{arora2018linear, park2023linear}), having different magnitudes may fail to be linearly separable (Fig. \ref{fig:newdataproperties}).
    
    \item \textbf{Heterogeneity of concepts.} Different concepts belong to subspaces with different dimensions. Evidence for this property includes unidimensional features representable as concept activation vectors \citep{kim2018interpretability}, e.g., truth \citep{burger2025truth}, multidimensional features such as days of the week in a 2-D subspace \citep{engels2024languagemodelfeatureslinear}, and higher dimensional safety-relevant features 
    ~\citep{pan2025hiddendimensionsllmalignment}. Here, higher dimensional concepts may be compositions of atomic (one-dimensional) concepts (such as safety features composed of "refusal behavior", "hypothetical narrative", and "role-playing" ~\citep{pan2025hiddendimensionsllmalignment}).
\end{enumerate}

\begin{table}[t]
\vspace{-6pt}
\caption{\textbf{Implicit Assumptions of SAEs.} The receptive fields of SAEs implicitly assume concepts are organized with a specific structure in the data, i.e., in model representations.}
\label{table:implicit-assumptions}
\vspace{-5pt}
\begin{center}
\begin{small}
\begin{tabular}{ccc}
\toprule
Model & Receptive Field & Data Assumption  \\
\midrule
ReLU  & half-spaces & Linear separability of concepts \\ 
JumpReLU & half-spaces & Linear separability of concepts \\
TopK & union of hyperpyramids & Angular separability of concepts; \\ & & same dimensionality per concept
\end{tabular}
\end{small}
\end{center}
\end{table}

\setlength\intextsep{1pt}
\begin{wraptable}{r}{0.5\textwidth}
\caption{\textbf{Compatibility of SAEs} with nonlinear separability and heterogeneity.
\vspace{-10pt}
}
\label{table:compatibility-sae-dataproperties}
\begin{center}
\begin{small}
\begin{tabular}{ccc}
\toprule
Model & Nonlinear Sep.\ &  Heterogeneity \\
\midrule
ReLU   & \xmark & \cmark \\ 
JumpReLU & \xmark & \cmark \\
TopK & \cmark & \xmark
\end{tabular}
\end{small}
\end{center}
\end{wraptable}

We characterize the compatibility of different SAEs' implicit assumptions and these properties in Tab.~\ref{table:compatibility-sae-dataproperties}.
Note that ReLU and JumpReLU can potentially capture heterogeneity since they can show different sparsity levels for each concept, but they require linear separability of concepts due to half-space receptive fields. 
TopK may be able to handle nonlinear separability to some extent (provided concepts are separable by angle), but it cannot adapt to heterogeneous concepts, since it involves a fixed choice of sparsity level for all inputs. BatchTopK~\citep{bussmann2024batchtopk}, a modification of TopK which selects average sparsity level per batch, does not capture concept heterogeneity either, since it still requires choosing the average sparsity level K.
%
%
%
To enable evaluation of our claims, we next design an SAE that accommodates the two properties above into its architecture, presented in the following subsection.

\subsection{SpaDE, or How to Design A Geometry-Driven SAE}
\label{subsection:spadedesign}

We now use the data properties studied above---nonlinear separability and concept heterogeneity---and through the duality, construct one set of sufficient conditions on the SAE to capture both properties, resulting in a novel SAE called SpaDE (Sparsemax Distance Encoder). See App.~\ref{subsection:spade-details} for details. We introduce SpaDE as a geometry-driven SAE to validate our claims about the duality between concept geometry and SAE architecture. Hence, SpaDE is expected to capture concepts better than other SAEs when its data assumptions are met.

\textit{Nonlinear separability} can be captured by SAE encoders with a competitive projection nonlinearity (allowing flexible receptive fields, shaped by locations of all weights) and compute Euclidean distances to a set of prototypes instead of linear transforms (to better exploit magnitude). For \textit{concept heterogeneity}, SAEs must demonstrate \textit{adaptive sparsity} in their latent representations, i.e., different concepts must be able to activate different numbers of latents (Fig. \ref{fig:spade-projset}). 



\setlength\intextsep{1pt}
\begin{wrapfigure}{}{0.35\textwidth}
\vspace{-7pt}
\centering
\includegraphics[width=0.65\linewidth]{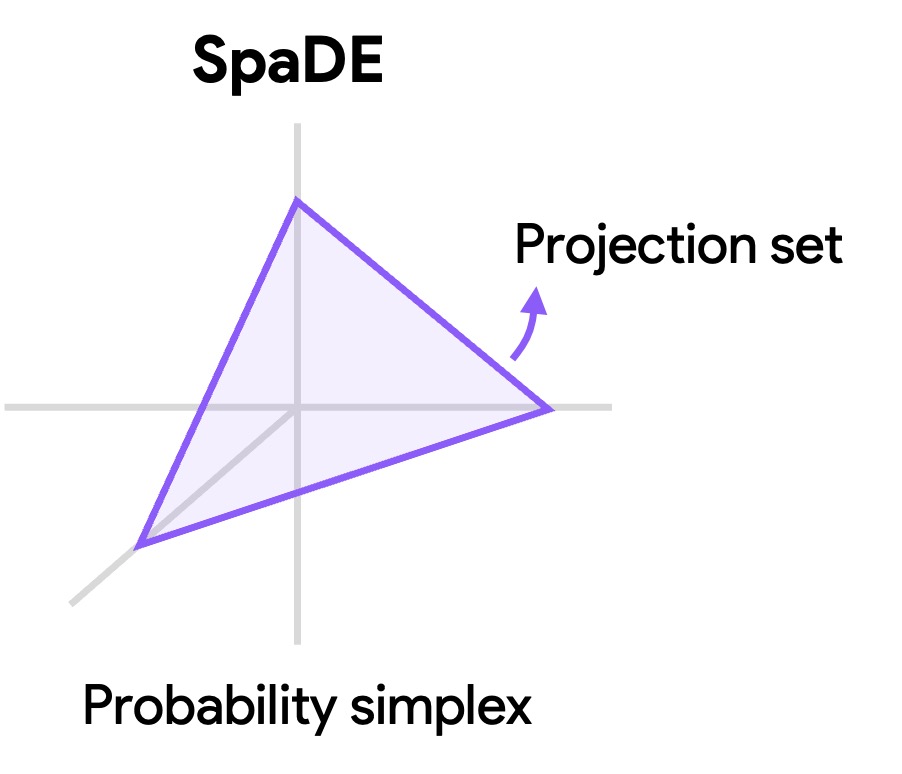}
\caption{\textbf{SpaDE shows adaptive sparsity by projecting onto the probability simplex}. In this illustrative $3D$ figure, note $\|\x\|_0=3$ for points on the face, $\|\x\|_0=2$ for points on edges along subspaces, and $\|\x\|_0=1$ for corners on coordinate axes.
\vspace{-8pt}}
\label{fig:spade-projset}
\end{wrapfigure}

To satisfy the desiderata above, we
use a simple first-order equality constraint  on the projection set $\setS$ (Eq. \ref{eq:bilevel-optim-sae}), resulting in the probability simplex $\setS = \Delta^s = \{ \x \in \R^s: \sum_i x_i = 1, \x \geq \bm{0} \}$. The non-negativity is necessary to explain away data as a combination of features with positive contributions. Projection onto the simplex (see Fig. \ref{fig:spade-projset}) results in the sparsemax nonlinearity (\cite{martins2016softmax}):
\begin{equation*}
\begin{split}
    \spmax(\v) &= \argmin_{\projected \in \Delta^s} \| \projected- \v \|_2^2. 
\end{split}
\end{equation*}

The probability simplex $\Delta^s$ admits representations with any (non-zero) level of sparsity, as illustrated in Fig. \ref{fig:spade-projset}.
%
Combining $\spmax$ with euclidean distances then yields SpaDE:
\begin{equation}
\begin{split}
\label{eq:spade}
    \z = \f(\x) &= \spmax (-\lambda \dist(\x, \W)), \\
    &\text{where } \dist(\x, \W))_i =  \|\x-\W_i\|_2^2.
\end{split}
\end{equation}
In the above, $\lambda$ is a scaling parameter (akin to inverse temperature), while $\W_i$ is the $i^{th}$ column of the encoder matrix $\W$ which behaves as a \textit{prototype} (or landmark) in input space since we compute euclidean distance from input $\x$ to $\W_i$. 
App.~\ref{subsection:spade-details} and \ref{section:appendix-recfields-spade} describe the receptive fields of SpaDE in further detail and show how it captures nonlinear separability and concept heterogeneity. 
%

We also note that the outer optimization for SpaDE is K-Deep Simplex (KDS, \cite{tasissa2023k}), a modified dictionary learning technique which incorporates locality into sparse representations. The regularizer from KDS is a distance-weighted $\ell_1$ regularizer $ \mathcal{R}(\z)=\sum_i z_i \|\x-\W_i\|_2^2 $, which encourages prototypes to move closer to data when they are active, increasing sparsity of representation
\footnote{This regularizer encourages dictionary atoms to ``stick'' to the data, addressing the recently raised concern~\cite{fel2025archetypalsaeadaptivestable, paulo2025sparse} that directions learned by SAEs may be out-of-distribution (OOD), contributing to their instability.}.
The inner optimization for SpaDE is a one-sided sparsity-regularized optimal transport (see App.~\ref{subsection:spade-details}). 

Our claim about the duality between SAE architectures and data assumptions about concepts also applies to SpaDE. Beyond nonlinear separability and concept heterogeneity, SpaDE implicitly assumes that Euclidean distances are useful in concept space---concepts are distance-separated---and distances can be used to disentangle concepts.

\section{Results: Empirical Validation of SAE behavior}
\label{section:results}
We perform a suite of experiments which involve training ReLU, JumpReLU, TopK and SpaDE SAEs on synthetic Gaussian clusters, semi-synthetic formal-language model activations and natural vision model activations. Our synthetic experiments aim to validate our claims about implicit assumptions in SAEs. Experiments on more naturalistic data seek to demonstrate our claims extend to realistic data settings. Further analysis is deferred to App.~\ref{section:empiricsextended}. The code to replicate synthetic experiments is available at: \url{https://github.com/Sai-Sumedh/SaeConceptDuality-SpaDE}, formal language experiments is at: \url{https://github.com/EkdeepSLubana/spadeFormalGrammars}, and vision experiments is at: \url{https://github.com/KempnerInstitute/Overcomplete}.

\begin{figure}[t]
    \centering
    \includegraphics[width=\linewidth]{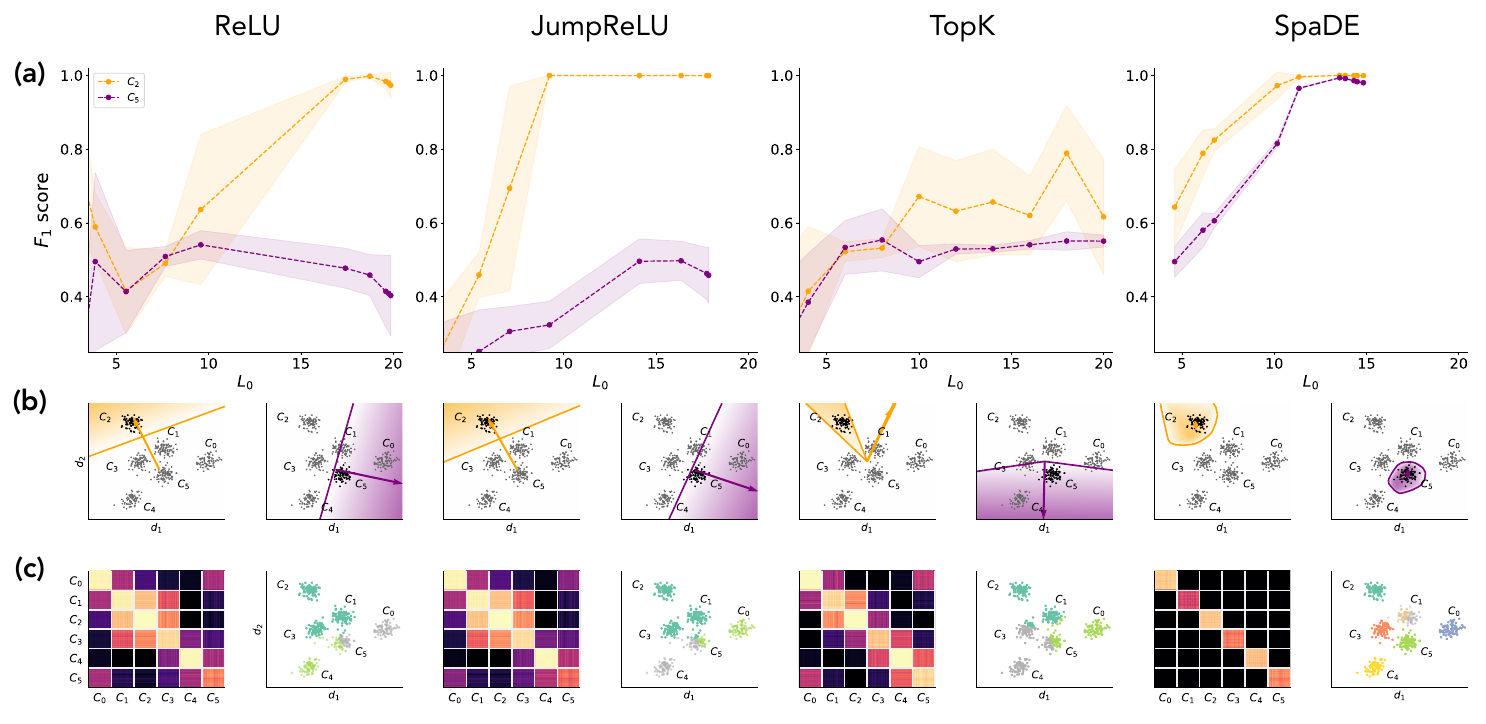}
    \caption{\textbf{Effect of Nonlinear Separability on SAEs}. Each column represents a different SAE.  \textbf{a)} $F_1$ scores of the top 5 most monosemantic latents (highest F1 scores), where shaded region is  $\pm$1SD, of each SAE on two concepts---orange (linearly separable) and purple (non-linearly separable). SAEs that assume linear separability struggle to capture the nonlinearly separable concept.  \textbf{b)} 
    Receptive fields of the most monosemantic latent for each SAE, illustrating how some architectures fail to isolate the nonlinear concept cleanly. Intensity of color indicates strength of SAE latent activation.
    \textbf{(c)} Matrix of pairwise cosine similarities between sparse codes of different datapoints, and data clusters obtained through spectral clustering on this matrix. In the scatter plot, points colored by the same color belong to one spectral cluster, which intuitively indicates that they activate a common set of SAE latents. SpaDE is able to maintain clear concept boundaries and doesn't mix distinct features, while other SAEs group subsets of different features into the same spectral cluster (same color).}
    \label{fig:separability}
\end{figure}

\vspace{-0.6em}
\subsection{Separability Experiment}
\label{section:separabilityexpt}

\textbf{Dataset and Experiment}: We construct a 2-dimensional dataset with Gaussian clusters (abstraction of concepts) of different magnitudes in order to demonstrate nonlinear separability of concepts in a simple setting which facilitates visualization. Here, each cluster is defined as its own concept, and we expect SAEs to learn latents responding to individual clusters. The concepts with smaller norm are not linearly separable, while those with larger norm are linearly separable. We train all SAEs on this dataset for a range of sparsity levels. Following our arguments about implicit assumptions in SAEs, we hypothesize that ReLU and JumpReLU will be unable to capture the nonlinearly separable concepts with monosemantic latents (measured using F1 scores; see Eq.~\ref{eq:f1-score-def}).


\textbf{Observations}: Fig.~\ref{fig:separability} shows how different SAEs fare on this experiment. 
ReLU and JumpReLU achieve an F1 score of 1 for the separable concept (orange), while their F1 scores are much lower and bounded above (by ~0.5) for the nonlinearly separable concept (purple). The receptive fields of ReLU, JumpReLU in Row (b) clearly overlap with other concepts in the nonlinearly separable case. TopK performs somewhat poorly on both concepts,  showing comparable F1 scores in both cases. 
SpaDE shows a top F1 score of 1.0 for both concepts (perfect precision and recall), with its receptive fields capturing concept structure even for nonlinearly separable concepts. 
While ReLU and JumpReLU show significant cross-concept correlations (between concepts $C_1, C_2, C_3$, Row (c)) and TopK does marginally better with smaller correlations, SpaDE shows clear delineation of different concepts with clearly separated concepts (no cross correlations, spectral clustering identifies concepts). Note SpaDE may overspecialize and lead to further subclusters, as seen by two colors within concept 1 in row (c).

\vspace{-1em}
\subsection{Heterogeneity Experiment}
\label{section:heterogeneityexpt}

\begin{figure}[t]
    \centering
    \includegraphics[width=\linewidth]{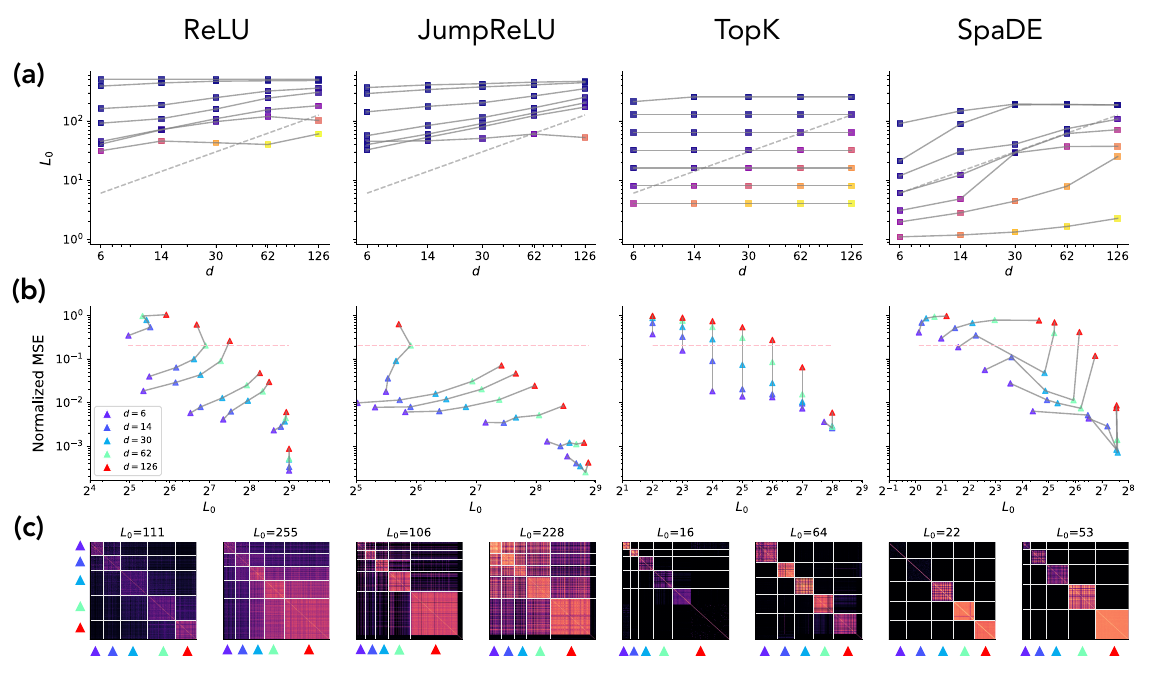}
    \caption{\textbf{Effect of Concept Heterogeneity on SAEs}. 
    \textbf{a)} Per-concept sparsity as a function of intrinsic dimension. 
    Colors indicate per-concept MSE—higher errors (red/yellow) show when an SAE fails to capture a concept effectively. Each solid line indicates one model with a specific choice of hyperparameters.
    \textbf{b)} Normalized MSE vs. per-concept sparsity. A well-performing SAE should maintain low error across all concepts. TopK SAE only achieves good reconstruction (below the dashed 20\% error threshold) when sparsity (fixed for a given model) exceeds intrinsic dimensionality, highlighting its lack of flexibility. 
    \textbf{c)} Cosine similarity between pairs of SAE latents across all concepts (showing co-occurrence), visualized for two sparsity levels. 
    }
    \label{fig:conceptheterogeneity}
\end{figure}
\vspace{-0.5em}

\textbf{Dataset and Experiment}: We generate Gaussian clusters (again an abstraction for concepts) in a 128-dimensional space. The five concepts are heterogeneous---they belong to subspaces with different intrinsic dimensions (6, 14, 30, 62, 126), but are designed to have isotropic structure within each cluster, and similar total variances across clusters. We trained ReLU, JumpReLU, TopK SAEs and SpaDE on this data with varying sparsity levels. We hypothesize that TopK will not be able to adapt its representations to the intrinsic dimension of each cluster.

\begin{figure}[t]
    \centering
    \includegraphics[width=\linewidth]{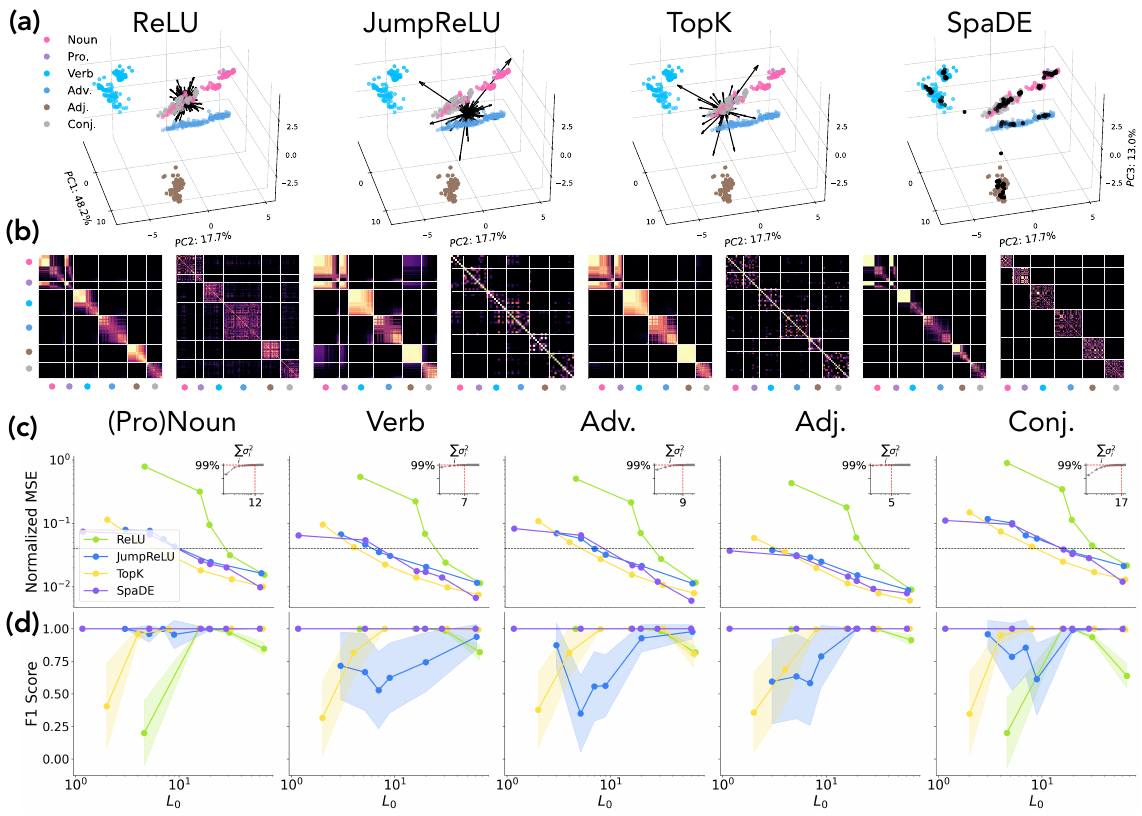}
    \caption{\textbf{Investigating SAE properties on GPT for formal languages}. \textbf{(a)} 3D PCA of model activations and SAE encoder weights, where datapoints are colored by part-of-speech (PoS). Encoder weights are indicated by points for SpaDE and arrows for the other SAEs. \textbf{(b)} Matrix of cosine similarities between pairs of data and pairs of latents (in order) for each SAE. White lines separate different PoS. \textbf{(c)} MSE normalized by PoS variance as a function of sparsity, for each PoS. \textit{Inset:} cumulative sum of variance (eigenvalues of data correlations) of each PoS, where the effective dimension (variance $>99\%$) of each PoS is shown. \textbf{(d)} Top-20 $F_1$-scores for different PoS from each SAE's latents (a measure of monosemanticity).}
    \label{fig:formallanguage}
\end{figure}


\textbf{Observations}: Fig.~\ref{fig:conceptheterogeneity} shows the results of all SAEs on this experiment. In Row (a), TopK shows the same level of sparsity per concept for all concepts, along with worse reconstruction error for higher dimensional concepts. In contrast, other SAEs---ReLU, JumpReLU and SpaDE---show adaptive sparsity to different extents by adjusting their representations to the intrinsic dimension of each concept. SpaDE can capture the intrinsic dimension nearly perfectly (along the dashed $y=x$ curve) for a specific choice of hyperparameters.

Note that a naïve estimator which predicts the mean of each concept will achieve a normalized MSE of 1. For TopK, normalized MSE (Row (b)) goes below $20\%$ (i.e., explains $80\%$ of the variance) for each concept only when $k$ exceeds the dimension of that concept. For example, $d=6$ goes below the dashed line only after $k=8$, similarly for other concepts. Other SAEs are able to stay below the $20\%$ threshold for nearly all concepts across hyperparameters. 

In Row (c), each latent is assigned a concept which it activates maximally for. Note that different concepts use different numbers of latents in ReLU, JumpReLU, and SpaDE. However, there are correlations across concepts in ReLU and JumpReLU (for the dense case), indicating co-occurrence of latents across concepts, which is reduced in the sparse case. Correlations are absent in SpaDE under both cases. TopK uses similar number of latents across concepts, inline with its lack of adaptivity.

\vspace{-0.5em}
\subsection{Formal Languages}
\label{sec:formallanguages}
\textbf{Dataset and Experiment}: Building on recent work using formal languages for making predictive claims about language models~\citep{jain2023mechanistically, lubana2024percolation, allen2023physics}, we use this setting as a semi-synthetic setup for corroborating our claims. 
Specifically, we analyze the English PCFG (Probabilistic Context-Free Grammars, formal models of language often used to study its syntactic properties, see App.\ \ref{app:pcfg}) with subject-verb-object sentence order proposed in \citep{menon2024analyzing}.
We train 2-layer Transformers~\citep{nanogpt} from scratch on strings of maximum length 128 tokens from the formal grammar above. SAEs are then trained on activations retrieved from the middle residual stream of the model.

\begin{figure}
    \centering
    \includegraphics[width=\linewidth]{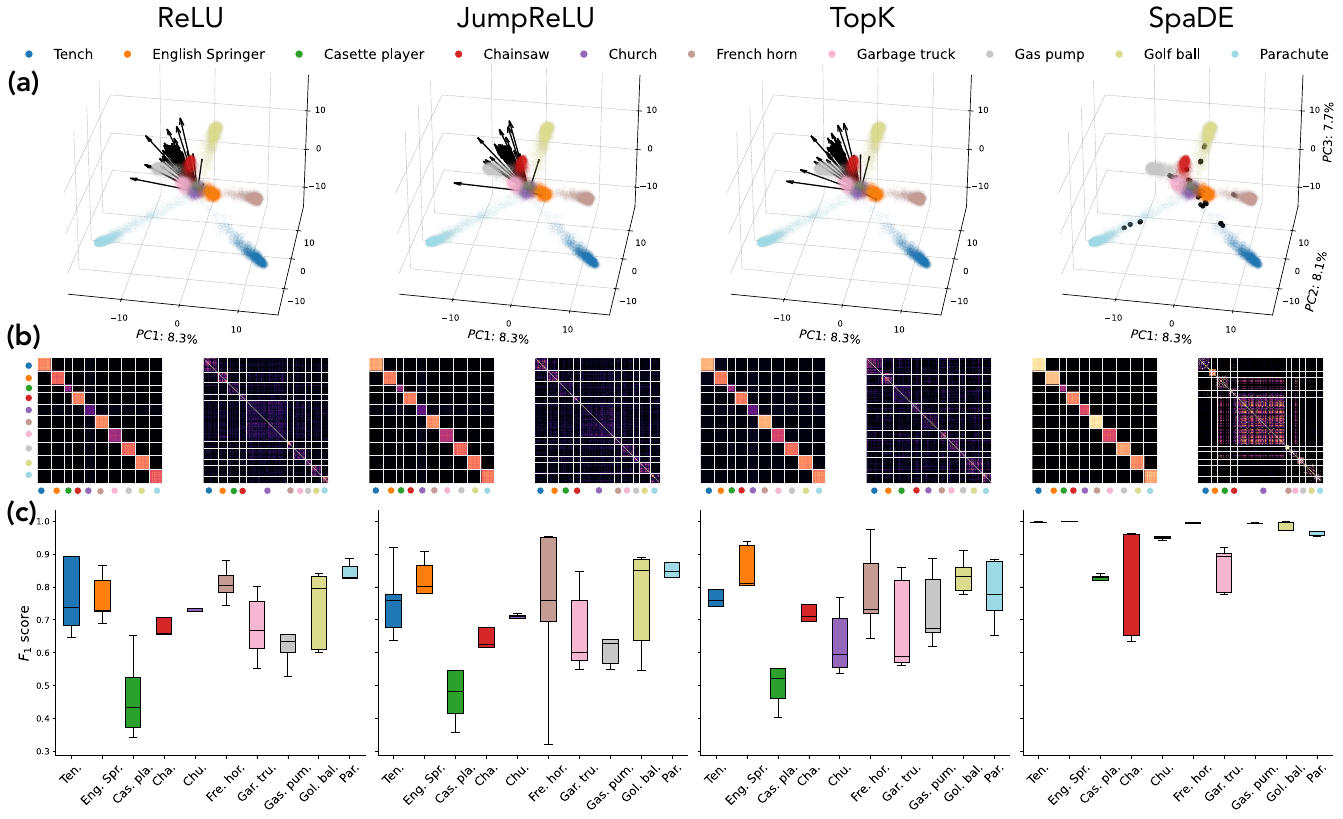}
    \caption{\textbf{SAE properties on DINOv2 activations}. \textbf{(a)} 3-D PCA of model activations colored by class, and SAE encoder weights (points for SpaDE, arrows for other SAEs). \textbf{(b)} Cosine similarities of sparse codes of pairs of data and pairs of SAE latents (in order) for each SAE. White lines separate classes. \textbf{(c)} $F_1$ scores of top-5 most monosemantic latents for each SAE across classes (color-coded)}
    \label{fig:vision}
\end{figure}

\textbf{Observations}: Results are shown in Fig.~\ref{fig:formallanguage}. Different parts of speech (PoS), the core concepts of the grammar, form clusters in a 3D PCA of their representations (see row (a)). SpaDE learns to tile the PoS clusters well. While all SAEs do a good job at making their latents uncorrelated across PoS (first column per SAE, row (b)), there are co-occurring latents across PoS in all SAEs except SpaDE (second column per SAE, row (b)). PoS seem to have different intrinsic dimensions (number of dimensions to capture $99\%$ of total variance in data, inset in row (c)), which leads to TopK requiring different values of K to explain the data (crosses $5\%$ normalized MSE with differing values of k, row (c)). PoS also appear to have differing levels of linear separability, as ReLU and JumpReLU show lower F1 scores which peak at different levels of sparsity for each concept (row (d)), while SpaDE shows a perfect F1 score of 1 in its most monosemantic latents. 

\vspace{-1em}
\subsection{Vision}

\textbf{Dataset and Experiment:} We use \textit{Imagenette}, a 10-class subset of ImageNet \cite{imagenet}, containing 1.5k images per class. Representations are extracted from the \textit{DINOv2-base} (with registers), yielding 261 tokens per image. Over the course of 50 training epochs, this yields approximately 200 million tokens. SAEs are trained on all available tokens, including spatial, CLS, and registers tokens, for 50 epochs with 200 latent dimensions. 


\textbf{Observations}: Results are shown in Fig.~\ref{fig:vision}. 
SpaDE again tiles the class structure well in the 3-D PCA (row (a)). Similarities between sparse codes of data (first column of each SAE in row (b)) show that all SAEs are able to decorrelate different classes in their latent representations. Latent co-occurrence (second column of each SAE in row (b)) is widespread in ReLU, JumpReLU and TopK SAEs, but it seems to be specific to certain pairs of latents in SpaDE. $F_1$ scores (row (c)) show that SpaDE has the most monosemantic latents across all classes. 
The varying $F_1$ scores for ReLU and JumpReLU across classes indicate different levels of linear separability across classes. 
Importantly, we find SpaDE identifies interpretable concepts such as foreground/background, different parts of objects in an image (hands, face, fins of fish, windows/ stairs in church images, eyes, ears, snout of dogs, etc), which are visualized using feature attribution maps in App.~\ref{appendix-section-vision}.

\vspace{-0.5em}
\section{Discussion and Limitations}
\label{section:discussion_limitations}

Our findings reveal critical insights into the limitations and strengths of different sparse autoencoder (SAE) architectures for concept discovery.
We observed that ReLU and JumpReLU SAEs fail to capture nonlinear separability ( low F1 scores, latent co-occurrence across concepts), while Top-K SAE struggles to capture concept heterogeneity (high MSE when concept dimension exceeds choice of K). 
A common issue across these architectures is the co-occurrence of latents across multiple concepts, indicating a lack of concept specialization. 
In contrast, SpaDE achieves the highest F1 scores for its most monosemantic latents, exhibits low latent co-occurrence across concepts and enforces adaptive sparsity, making it an effective choice for structured concept representations.
Our observations about the limitations of ReLU, JumpReLU and TopK SAEs highlight that the failure modes of different SAEs stem from a mismatch between their inductive biases and the true structure of the data. 
Specifically, ReLU and JumpReLU assume linear separability of concepts, which does not always hold, even for concepts that correspond to specific directions in the latent space. 
On the other hand, our results suggest that incorporating data geometry into SAE design significantly improves concept specialization of SAE latents, allowing it to learn latents that are better aligned with the data.

Overall, our results emphasize that there may not be a single best SAE architecture for all contexts unless the architecture explicitly integrates a sufficient set of data properties relevant to the specific problem. This suggests a shift in focus from using generic SAEs to tailoring their design based on prior knowledge about the underlying data geometry.

Our analysis of receptive fields of SAE encoders and their relation with concept geometry to study monosemanticity properties is quite general, and can also be used to study other kinds of interpreter models such as transcoders \citep{dunefsky2025transcoders, paulo2025transcoders}.

\textbf{Limitations:} While SpaDE demonstrates promising improvements over ReLU, JumpReLU and TopK SAEs in synthetic, semi-synthetic and realistic data, we do not claim it to be the optimal SAE for all scenarios. Instead, we present it as a concrete example of how incorporating reasonable data properties (nonlinear separability and concept heterogeneity) can improve interpretability. Thus, several limitations remain, as follows.
\begin{itemize}
    \item Data properties beyond those considered here may be crucial for improved SAE performance. 
    Future work may explore additional geometric structure of concepts in neural networks to design better SAEs.
    \item SpaDE implicitly assumes concepts are separated by Euclidean distance, which may still result in latent co-occurrence if concepts do not satisfy this assumption.
    \item Overly specialized latents may emerge in SpaDE if the sparsity level is too aggressive, potentially leading to latents that capture special cases rather than generalizable concepts.
    \item We have focused our attention on mutually exclusive concepts, where the presence of one concept implies the absence of others. While our arguments about SAE assumptions hold even when concepts overlap, the expected co-occurrence structure may differ in such cases. For co-occurring concepts, our receptive field analysis can be applied to the presence/absence of each concept.
\end{itemize}

Overall, we note our work is not a proposal for the best SAE, but a guiding framework for improving design of SAEs that yield useful interpretations demonstrating how better integration of data geometry can enhance model interpretability.
%
The interpretability community may need to prioritize a deeper understanding of latent space geometry, and translate novel insights into SAE design, leading to models with more faithful and structured representations of concepts. 


\vspace{-0.5em}
\section*{Acknowledgments}
This work has been made possible in part by a gift from the Chan Zuckerberg Initiative Foundation to establish the Kempner Institute for the Study of Natural and Artificial Intelligence at Harvard University.
ESL thanks Hidenori Tanaka and the Physics of Intelligence group at CBS-NTT program Harvard for useful conversations and access to compute resources. 
All authors thank the CRISP group at Harvard SEAS for insightful conversations.
%


\bibliography{ref}
\bibliographystyle{unsrt}

\newpage
\appendix
\setcounter{figure}{0}
\renewcommand{\thefigure}{\thesection.\arabic{figure}} %
\addcontentsline{toc}{section}{Appendix} 

\section{Dictionary Learning}
\label{section:dictlearning}

Sparse coding \cite{olshausen1996emergence} (alternatively known in this work as sparse dictionary learning, or just dictionary learning) was initially proposed to replicate the observed properties ("spatially localized, oriented, bandpass receptive fields") of biological neurons in the mammalian visual cortex.
It aims to invert a linear generative model with a sparsity prior on the latents:
\begin{align*}
    \x = \D^*\z^* + \eta
\end{align*}
where $\x \in \reals^n$ is the data, $\D^*\in \reals^{n\times s}$ is the set of $s$ dictionary atoms, $\z^*\in \reals^s_{+}$ is the sparse code, and $\eta$ is additive white Gaussian noise.  
Given data $\{\x^{(1)}, \dots, \x^{(P)}\}$, sparse coding performs maximum aposteriori (MAP) estimation for the dictionary $\D^*$ and representations $\z^*$ under suitably defined prior and likelihood functions \cite{elad2007analysis} by solving the following optimization problem (repeated from Eq.~\ref{eq:dictionarylearning}):
\begin{align}
    \argmin_{\D \in \mathcal{B}, \z^{(\cdot)}\geq 0} \sum_k \| \x^{(k)}-\D\z^{(k)} \|_2^2 + \lambda \mathcal{R}(\z^{(k)})
\end{align}
where $\mathcal{R}(\cdot)$ is a sparsity-promoting regularizer. The set $\mathcal{B}\subseteq \reals^{n\times s}$ includes restriction to unit norm (typical). Generally, the L1 penalty is used as the regularizer term, i.e., $\mathcal{R}(\z^{(k)}) = \|\z^{(k)}\|_1$, since using the L0 penalty makes the problem NP-hard \cite{tillman2015}. 
When the number of dictionary atoms is less than or equal to the dimension of input space, $s\leq n$, this is an undercomplete problem, and the sparse code can be readily obtained using the pseudo-inverse of the dictionary matrix $D$ (provided the dictionary atoms are linearly independent), leading to the solution $\z = (\D^T \D)^{-1}\D^T \x$. Note that in this (undercomplete) case, the sparse code is a linear transformation of the input. The more interesting setting involves using an overcomplete dictionary ($s > n$), and was initially studied in \cite{olshausen1997sparse}. Obtaining the sparse code $\z$ from input data $\x$ is nontrivial in this case.

In this case, sparse coding results in a sparse representation of the data and a dictionary which behaves as a data-adaptive basis. 
Correspondingly, sparse codes have been shown to capture interesting concepts in data~\citep{kreutz2003dictionary, sprechmann2010dictionary}, e.g., responding to wavelet-like regions when trained on natural images~\citep{olshausen1996wavelet}. 
In this (overcomplete) setting, a popular approach is using iterative shrinkage and thresholding algorithms (ISTA) \cite{daubechies2004iterative} and their variants such as FISTA (Fast ISTA) \cite{beck2009fast}. Modern approaches to this problem use ISTA to design deep residual networks with shared weights and train the network on the sparse coding objective, in a technique called Learned ISTA (LISTA) \cite{gregor2010learning}. Algorithm unrolling \cite{monga2021algorithm} is a generalization of this technique and involves designing \textit{interpretable} neural networks using iterative algorithms where each layer of the network reflects an iteration of the algorithm. These networks are interpretable since the weights correspond to an underlying process which was used to design the iterative algorithm. Unrolling has widespread applications in signal processing, and is extensively reviewed in \cite{monga2021algorithm}. 

We also note that sparse coding has been used with algorithm unrolling as a model-based interpretable deep learning technique for a wide range of applications, including image super-resolution \cite{wang2015deep}, graph signal denoising \cite{chen2021graph}, mechanical fault diagnosis \cite{an2022interpretable}, deconvolving neural activity of dopamine neurons in mice \cite{tolooshams2024interpretable}. Therefore, assuming a linear generative model of data (Eq.~\ref{eq:dictionarylearning}) where the dictionary atoms are physically relevant in some application, sparse coding using an unrolled network learns the underlying interpretable dictionary atoms. 

\section{Related Work}
\label{section:relatedwork}

SAEs are a specific instantiation of the broader agenda of dictionary learning tools for concept-level explainability~\citep{kim2018interpretability, olah2020zoom, fel2025sparks, faruqui2015sparse, subramanian2018spine, arora2018linear}.
A number of SAE architectures have been proposed recently, including ReLU SAE \citep{bricken2023monosemanticity}, TopK SAE \citep{gao2024scaling, makhzani2013k}, gated SAE \citep{rajamanoharan2024improving}, JumpReLU SAE \citep{rajamanoharan2024jumping}, Batch TopK SAE (\citep{bussmann2024batchtopk}), ProLU SAE (\citep{ProLUNonlinearity}), and so on.
While promising results have been discovered, e.g., latents that respond to concepts of refusal, gender, text script~\citep{bricken2023monosemanticity, templeton2024scaling, durmusevaluating}, foreground vs.\ background concepts~\citep{fel2023craft}, and concepts of protein structures~\citep{simon2024interplm, garcia2025interpreting, adams2025mechanistic}, a series of negative results have started to emerge on the limitations of SAEs.
For example, \citep{bhalla2024towards, wu2025axbench} show a mere prompting baseline can outperform model control compared to SAE or probing based feature ablation baseline. Similar results were observed by \citep{menon2024analyzing} in a narrower formal language setting.
Meanwhile, criticizing the underlying linear representation hypothesis that has informed design of earlier SAE architectures (specifically, the vanilla ReLU SAEs), \citep{engels2024languagemodelfeatureslinear, engels2024decomposing} has shown that SAE features can be multidimensional and nonlinear.
Importantly, recent results from \citep{fel2025archetypalsaeadaptivestable, paulo2025sparse, kissane2024saes} have shown that two SAEs trained on the exact same data, just with a different seed, can yield very different concepts and hence very different interpretations.
These results are related to the lack of canonical nature in SAE latents~\citep{leask2025sparse}
This behavior, often called algorithmic instability, makes reliability of SAEs challenging for any practical purposes.
More broadly, given the hefty research investment going into the topic, we believe it is warranted that a more formal and theoretical account help solidify the limitations and challenges SAEs (or at least the current paradigm thereof) faces.
This can help steer the research in a direction that yields meaningful improvement in SAEs, e.g., in their practical utility. 
This motivation underscores our work.
For a related effort on this front, we highlight the work by \citep{ayonrinde2024interpretability}, who contextualize SAEs from a minimum-description length perspective and enable an intuitively solid account of how features may split to overly specialized concepts (e.g., tokens).

\textbf{Disentangled Representation Learning.} As mentioned in Sec.~\ref{section:framework}, results similar to ours have been reported in the field of disentangled representation learning, wherein one aims to invert a data-generating process to identify the factors of variants (i.e., latent variables) that underlie it.
To this end, autoencoders were used as a popular tool, since they offer a method that can (ideally) simultaneously invert the generative process and identify the underlying latents~\citep{higgins2017beta}.
However, \citep{locatello2019challenging} showed that in fact this problem is rather challenging: unless one designs an autoencoder architecture that bakes-in assumptions about the generative process, i.e., the precise function mapping itself, there are no guarantees the retrieved latents will correspond to the ground-truth ones.
This result led to design of several methods focused on exploiting ``weak supervision'', i.e., extra information available from data-pairs such as multiple views of an image or temporally consistent video frames, to circumvent the theoretical challenges of disentanglement~\citep{locatello2020weakly, gresele2020incomplete, von2021self}.
Our contributions are similar in nature to these results on disentanglement, but we (i) specifically focus on the context of SAEs and (ii) provide a more concrete proof that establishes precisely what the inductive biases of popular SAEs are, i.e., what concepts the SAEs are biased towards uncovering.
Having established these results, we now believe the next step that the disentanglement community took, i.e., use of weak supervision, would make sense for the SAEs community as well.
This can involve exploiting temporal correlations between tokens in a sentence, or the fact that representations across layers do not change much, as in Crosscoders and Transcoders~\citep{lindsey2024sparse, dunefsky2025transcoders, paulo2025transcoders}.

\section{Experimental Setup}
\label{section:exptsetup}

The synthetic experiments (separability, heterogeneity) and vision experiments were run on NVIDIA A100 40GB GPUs, while the formal language experiments were run on NVIDIA RTX A6000 48GB GPUs. 

\subsection{Separability experiment}

We construct a synthetic dataset consisting of six isotropic Gaussian clusters in a two-dimensional (2D) space. The cluster centers are arranged such that adjacent clusters are separated by an angular difference of $2\pi/6$, with alternate clusters having norms of 1 and 3. Each cluster is sampled from a multivariate normal distribution with a variance of $2^{-5.5}$. The dataset consists of 1 million data points per concept, yielding a total of 6 million samples. Of these, we use 70\% (700,000 points) for training.

Our experiments evaluate four sparse autoencoder (SAE) architectures: ReLU SAE, JumpReLU SAE, TopK SAE, and SpaDE. The first three architectures are implemented following their original formulations (in \citep{bricken2023monosemanticity},\citep{rajamanoharan2024jumping},\citep{gao2024scaling}), with the decoder activations normalized in the forward pass. The SpaDE model follows the same single hidden-layer autoencoder structure but differs in that it utilizes Euclidean distance computations and a SparseMax activation function for the encoder. Across all models, the hidden-layer width is set to 128, and a pre-encoder bias is used in all cases except for SpaDE.

For training, the (inverse) temperature parameter $\lambda$ in SpaDE is initialized to $1/(2 \times \text{input dimension})$ and parameterized using the Softplus function to ensure non-negativity. This parameter trained along with the encoder and decoder weights, to allow the model to \textit{learn} its desired sparsity level. Note that large values of $\lambda$ lead to greater sparsity since $\spmax$ is scale-sensitive. In JumpReLU, the threshold is initialized at $10^{-3}$ across all latent dimensions, with a bandwidth of $10^{-3}$ for the straight-through estimator (STE), as it is proposed in \citep{rajamanoharan2024jumping}. All models are trained using the Adam optimizer with a learning rate of $10^{-2}$, which follows a cosine decay schedule from $10^{-2}$ to $10^{-4}$. The momentum parameter is set to 0.9, and we use a batch size of 512. Training runs for approximately 8000 iterations, and gradient clipping is applied (gradient norms are clipped at 1) to stabilize optimization.

Regularization parameters are selected such that sparsity levels remain comparable across models. Specifically, the regularization coefficient $\gamma$ is chosen in the range $10^{-6}$ to 1 for ReLU and JumpReLU SAEs, between 4 and 64 (powers of 2) for TopK SAE, and in the range $10^{-6}$ to 1 for SpaDE. Each model applies a different regularization strategy: ReLU SAE uses $L_1$ regularization, JumpReLU SAE applies $L_0$ regularization with a straight-through estimator (STE) as in \citep{rajamanoharan2024jumping}, TopK SAE does not use explicit regularization but incorporates an auxiliary loss term as in \citep{gao2024scaling}, with $K_{aux}=k$ (same as the choice of sparsity level $k$ in TopK) with $\gamma_{\text{aux}} = 1$ (the scaling for the auxiliary loss term), and SpaDE employs a distance-weighted $L_1$ regularization, which comes from \citep{tasissa2023k}.

All networks are initialized such that the decoder weights are initially set as the transpose of the encoder weights, though they are allowed to update freely during training. Model weights are sampled from a normal distribution $\mathcal{N}(0,1)$. To maintain consistency in scale between inputs and latent activations, a scaling factor $\lambda$ is applied to all latent units, given by $\lambda \approx 1 / 2\times \text{input dimension}$ (note that this is not trainable for ReLU, JumpReLU and TopK SAEs). Across all architectures, we use the Mean Squared Error (MSE) loss function, with the regularizers and regularizer scaling constants as described above.

For evaluation, we analyze a subset of 1000 data points per concept. The primary metric for comparison is the F1-score, which is computed based on precision and recall. Precision is defined as:

\begin{equation}
    \text{Precision} = \frac{\text{True Positives}}{\text{True Positives} + \text{False Positives}},
\end{equation}

while recall is given by:

\begin{equation}
    \text{Recall} = \frac{\text{True Positives}}{\text{True Positives} + \text{False Negatives}}.
\end{equation}

Using these definitions, the F1-score is computed as:

\begin{equation}
\label{eq:f1-score-def}
    F1 = \frac{2 \times \text{Precision} \times \text{Recall}}{\text{Precision} + \text{Recall}}.
\end{equation}

In our setup, precision and recall are computed by thresholding latent activations at $10^{-6}$. Additionally, we analyze the receptive fields by creating a 2D meshgrid, passing all points through the model, and extracting their SAE latent representations. Cosine similarities between pairs of data points are also computed by obtaining their latent representations, calculating the pairwise cosine similarity, and organizing the results by class.

To further examine latent space structure, we compute the stable rank of the representation matrix. Stable rank for the similarity matrix is computed as the sum of singular values divided by the largest singular value (alternatively called the intrinsic dimension of this matrix):

\begin{equation}
    \text{Stable Rank} = \frac{\sum \sigma_i}{\sigma_{\max}}.
\end{equation}

Finally, spectral clustering is performed on the similarity matrix derived from latent representations. The number of clusters is determined by the stable rank of this similarity matrix (rounded up), providing insights into the correlations between SAE latent representations.

\subsection{Heterogeneity experiment}

We construct a synthetic dataset consisting of five isotropic Gaussian clusters in a 128-dimensional space. The intrinsic dimensionality of each cluster follows the sequence $2^q - 2$ for different values of $q \in \{3, 4, 5, 6, 7\}$, resulting in clusters with intrinsic dimensions of $6, 14, 30, 62, 126$, respectively. The lower-dimensional clusters belong to subspaces that form strict subsets of the subspaces of higher-dimensional ones, meaning that the first six dimensions are fully contained in the next 14, which are further contained in the next 30, and so on up to 126 dimensions. Cluster centers are sampled uniformly at random from the range $[0, \frac{1}{21}]$ along each dimension. The variance of each concept is chosen to be inversely proportional to its intrinsic dimension to ensure that the total variance per concept remains constant across all concepts. The dataset contains 6.4 million data points per concept, yielding a total of 32 million samples, of which 70\% (approximately 22 million points) are used for training.

Our models follow four different sparse autoencoder (SAE) architectures: ReLU SAE, JumpReLU SAE, TopK SAE, and SpaDE. The first three are implemented according to their original formulations in \citep{bricken2023monosemanticity}, \citep{rajamanoharan2024jumping}, and \cite{gao2024scaling}, with the decoder activations normalized in the forward pass. The SpaDE model follows the same single hidden-layer autoencoder structure but differs in that it utilizes Euclidean distance computations and a SparseMax activation function for the encoder. Across all models, the SAE hidden-layer width is set to 512. A pre-encoder bias is applied in all cases except for SpaDE. Additionally, for the TopK SAE, a ReLU activation is applied before selecting the top $k$ latent dimensions.

For training, the temperature parameter $\lambda$ in SpaDE is initialized at $1/(2 \times \text{input dimension})$ and parameterized using the Softplus function to ensure non-negativity. This parameter trained along with the encoder and decoder weights, to allow the model to \textit{learn} its desired sparsity level. In JumpReLU, the threshold is initialized at $10^{-3}$ across all latent dimensions, with a bandwidth of $10^{-3}$ for the straight-through estimator (STE). All models are trained using the Adam optimizer with a learning rate of $10^{-2}$, which follows a cosine decay schedule from $10^{-2}$ to $10^{-4}$. The momentum parameter is set to 0.9, and we use a batch size of 2048. Training runs for approximately 10,000 iterations, and gradient clipping (restricting gradient norms to be less than 1) is applied to stabilize optimization.

Regularization parameters are selected such that sparsity levels remain comparable across models. Specifically, the regularization coefficient $\gamma$ is chosen in the range $10^{-3}$ to $5.0$ for ReLU SAE, $10^{-3}$ to $1$ for JumpReLU SAE, from 4 to 256 (powers of 2) for TopK SAE, and from $10^{-3}$ to $10$ for SpaDE. Each model applies a different regularization strategy: ReLU SAE uses $L_1$ regularization, JumpReLU SAE applies $L_0$ regularization with a straight-through estimator (STE) following from \cite{rajamanoharan2024jumping}, TopK SAE does not use explicit regularization but incorporates an auxiliary loss term with $\gamma_{\text{aux}} = 1$ (scaling for the auxillary term in the loss) and $K_{aux}=k$ (same as sparsity level), and SpaDE employs a distance-weighted $L_1$ regularization.

All networks are initialized such that the decoder weights are initially set as the transpose of the encoder weights, though they are allowed to update freely during training. Model weights are sampled from a normal distribution $\mathcal{N}(0,1)$. To maintain consistency in scale between inputs and latent activations, a scaling factor $\lambda$ is applied to all latent units, given by $\lambda \approx 1 / 2\times \text{input dimension}$. Across all architectures, we use the Mean Squared Error (MSE) loss function.

For evaluation, we analyze a subset of 1000 data points per concept. We report the \textit{normalized MSE}, defined as the ratio of the standard MSE to the variance of the corresponding concept:

\begin{equation}
    \text{Normalized MSE} = \frac{\text{MSE}}{\text{Variance of Concept}}.
\end{equation}

We also compute \textit{sparsity} ($L_0$) per concept, measured as the average number of active latents per data point, averaged over each concept.

To analyze latent representations, we examine cosine similarities in two contexts: (i) between pairs of SAE latent representations for different input data points (per-input co-occurrence) and (ii) between pairs of latents aggregated over all data points (global co-occurrence). For the latter, each latent is assigned a \textit{concept label} based on the concept for which it is most frequently activated on average. This assignment provides insight into how latents specialize across different underlying structures in the dataset.

\subsection{Formal Languages experiment}
\label{app:pcfg}
\textbf{Data.} The formal language setup analyzed in the main paper (Sec.~\ref{sec:formallanguages}) involves training a 2-layer nanoGPT model on strings from an English-like PCFG (Probabilistic Context-Free Grammars). 
Broadly, a PCFG is defined via a 5-tuple \( G = (\mathtt{NT}, \mathtt{T}, \mathtt{R}, \mathtt{S}, \mathtt{P}) \), where \( \mathtt{NT} \) is a finite set of non-terminal symbols; \( \mathtt{T} \) is a finite set of terminal symbols, disjoint from \( \mathtt{NT} \); \( \mathtt{R} \) is a finite set of production rules, each of the form \( A \rightarrow \alpha \beta \), where \( A \in \mathtt{NT} \) and \( \alpha, \beta \in (\mathtt{NT} \cup \mathtt{T}) \); \( \mathtt{S} \in \mathtt{NT} \) is the start symbol; and \( \mathtt{P} \) is a function \( \mathtt{P}: \mathtt{R} \rightarrow [0,1] \), such that for each \( A \in \mathtt{NT} \), \( \sum_{\alpha: A \rightarrow \alpha \in \mathtt{R}} \mathtt{P}(A \rightarrow \alpha \beta) = 1 \).
To \textit{generate} a sentence from the grammar, the following process is used. 
\begin{enumerate}
    \item Start with a string consisting of the start symbol \( S \).
    \item While the string contains non-terminal symbols, randomly select a non-terminal \( A \) from the string. Choose a production rule \( A \rightarrow \alpha \beta \) from \( \mathtt{R} \) according to the probability distribution \( \mathtt{P}(A \rightarrow \alpha) \).
    \item Replace the chosen non-terminal \( A \) in the string with \( \alpha \), the right-hand side of the production rule.
    \item Repeat the production rule selection and expansion steps until the string contains only terminal symbols (i.e., no non-terminals remain).
    \item The resulting string, consisting entirely of terminal symbols, is a sentence sampled from the grammar.
\end{enumerate}

We follow the same rules of the grammar considered in \citep{menon2024analyzing}.
The strings are tokenized via one-hot encoding via a manually defined tokenizer.

\textbf{Model training.} Models are trained from scratch on strings sampled from the grammar above. Strings are padded to length 128 (if not already that length), and a batch-size of 128 ($\sim$10K tokens per batch) is used for training. Training uses Adam optimizer with a cosine learning-rate schedule starting at $10^{-3}$ and ending at $10^{-4}$ after 70K iterations, alongside a weight decay of $10^{-4}$. The nanoGPT models used in this work have a width of 128 units, with an MLP expansion factor of 2 and also 2 attention heads per attention layer.

\textbf{SAE training.} All SAEs trained in the formal language setup involve an expansion factor of 2$\times$, i.e., 256 latents for a residual stream of 128 dimensions. Training involves a constant learning rate of $10^{-3}$ and lasts for 10K iterations ($\sim$1M tokens). We sweep regularization strength for SAEs' training, yielding SAEs with different sparsity levels. While we fix the regularization strength for SpaDE based on best values identified from the synthetic, Gaussian cluster datasets, for other SAEs (ReLU, JumpReLU, and TopK) we report the best possible results from our sweep by looking at the top-10 per-concept F1 scores; i.e., reported results are a best-case estimate of results achievable by training of these SAEs, and in practice performance can be expected to be poorer than what we analyze. Cross-task transfer for SpaDE's hyperparameters is intriguing in this regard, since we found other SAEs' hyperparameters to not transfer.

\subsection{Vision experiment}

\textbf{Data.} We use an off-the-shelf, large-scale pretrained model for our analysis in these experiments, specifically \textit{DINOv2-base} (with registers). For simplicity, we focus on a 10-class subset of ImageNet, called \textit{Imagenette}, containing 1.5k images per class. Representations are extracted from the model for images of these classes, yielding 261 tokens per image. 

\textbf{SAE training.} SAEs are trained on all available tokens, including spatial, CLS, and registers tokens, for 50 epochs with 200 latent dimensions. 
With 261 tokens per image, this amounts to $\sim$200M tokens for training SAEs over the course of 50 training epochs.
For each SAE, the best reconstruction is selected based on a sparsity-controlled learning rate sweep. 
This resulted in an optimal learning rate of $5 \times 10^{-4}$ for TopK, ReLU, and SpaDE, while JumpReLU performed best with $10^{-4}$ (using Adam optimizer). 
Additionally, we note our JumpReLU implementation employs a Silverman kernel with a bandwidth of $10^{-2}$, which we found to work best for our setting.

\section{Further Theory Results}
\label{section:theoryextended}

\subsection{Projections and Nonlinearities}
\label{appendix:projnonlinearities}


The nonlinearity of popular SAEs is commonly an orthogonal projection onto some set, where the choice of projection set differentiates SAEs (see Fig.~\ref{fig:projencoders-projsets-main}).
We formalize such nonlinearities as projection nonlinearities, as (re)defined below.

\begin{definition}[Projection Nonlinearity]
\label{def:projectionnonlinearities-app}
    Let $\v \in \mathbb{R}^s$ be a pre-activation vector. A projection nonlinearity $\pnonlin{\cdot}:\mathbb{R}^s \to \mathbb{R}^s$ is defined as:
    \begin{align}
    \label{eq:projectionnonlinearity-app}
        \pnonlin{\v} &= \argmin_{\projected \in \setS} \|\projected - \v\|_2^2,
    \end{align}
    where $\setS \subseteq \mathbb{R}^s$ is the constraint set onto which $\v$ is orthogonally projected. The structure of $\setS$ determines the properties of the nonlinearity.
\end{definition}

\setlength\intextsep{0pt}
\begin{wraptable}{r}{0.55\textwidth}
\caption{\textbf{Projection Nonlinearities in SAE Encoders.} Each model can be understood by its nonlinear orthogonal projection $\g(\cdot)$ onto a constraint set $\setS$ which determines its activation behavior, sparsity structure, and implicit data assumptions.}
\label{table:projnonlinearities}
\begin{center}
\begin{small}
\begin{tabular}{c@{\hskip 1pt}c}
\toprule
Model & $\g(\v)$  \\
\midrule
ReLU   & $\pnonlin{\v}$, $\setS=\{ \x \in \R^s: \x \geq 0\}$ \\ 
TopK   & $\pnonlin{\v}$, $\setS=\{ \x \in \R^s: \x \geq \bm{0}, ||\x||_0 \leq k\}$ \\ 
Heaviside ($H$) & $\pnonlin{\v+\frac{1}{2}\vone}$, $\setS = \{0,1\}^s$ \\ 
JumpReLU &  ReLU($\v-\bm{\theta}$) + $\bm{\theta}\odot H(\v-\bm{\theta})$ 

\end{tabular}
\end{small}
\end{center}
\vskip -0.1in
\end{wraptable}

We will say a function $\f(\cdot)$ is a \textbf{Projection Encoder} if it uses a projection nonlinearity $\g(\cdot)$ applied to a linear transformation of the input. 
This is equivalent to using $\v = \W^\tr \x + \bias_{e}$, and $\f = \g(\v)$ (see Eq.~\ref{eq:sae-def}), where $\g$ is a projection nonlinearity. Popular SAEs can be understood as a similar Projection Encoder with different projection nonlinearities, as shown in Tab.~\ref{table:projnonlinearities} (see Theorem~\ref{thm:projectionsets_nonlinearities} for a derivation). 

\begin{lemma}[Elementwise projections]
\label{lemma:elementwiseprojections}
    For projection nonlinearities whose projection sets satisfy componentwise constraints, i.e. $\setS = \{\x\in \reals^s: f(x_j)\leq 0, h(x_k)=0 \forall j,k\in [s]\}$, the projection problem can be decoupled and broken down into a combination of elementwise projections, leading to an elementwise nonlinearity. The converse is also true: any elementwise nonlinearity which is also a projection nonlinearity can be written as a combination of elementwise projections, leading to componentwise constraints on the projection set
\end{lemma}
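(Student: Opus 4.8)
The plan is to exploit the separability of the squared Euclidean objective $\|\projected - \v\|_2^2 = \sum_{j=1}^s (y_j - v_j)^2$ together with the product structure that componentwise constraints impose on $\setS$. First I would observe that the hypothesis $\setS = \{\x \in \reals^s : f(x_j) \leq 0,\, h(x_k) = 0 \;\forall j,k \in [s]\}$ means membership in $\setS$ is decided coordinate-by-coordinate, so $\setS = S_0^s$ is a Cartesian product, where $S_0 = \{t \in \reals : f(t) \leq 0,\, h(t) = 0\}$ is the common one-dimensional feasible set. The entire argument then reduces to the standard fact that, under a separable objective, Euclidean projection onto a Cartesian product factorizes into independent per-coordinate projections.

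For the forward direction, I would note that for any candidate $\projected \in \setS$ the objective splits as $\sum_j (y_j - v_j)^2$, and each summand depends only on $y_j$, which is constrained solely through $y_j \in S_0$. Hence $\min_{\projected \in \setS} \sum_j (y_j - v_j)^2 = \sum_j \min_{t \in S_0} (t - v_j)^2$, and a vector $\projected^\star$ is a joint minimizer if and only if every $y_j^\star \in \argmin_{t\in S_0}(t-v_j)^2$. Defining the scalar projection $\pi(v) = \argmin_{t \in S_0}(t-v)^2$, this yields $\bigl(\pnonlin{\v}\bigr)_j = \pi(v_j)$, i.e. the nonlinearity acts identically and independently on each coordinate, which is precisely an elementwise nonlinearity.

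For the converse, suppose $\pnonlin{\cdot}$ is simultaneously a projection onto some closed set $\setS$ and elementwise, say $\bigl(\pnonlin{\v}\bigr)_j = g(v_j)$ for a common scalar map $g$. The key observation is that a Euclidean projection fixes every point of its target set ($\pnonlin{\projected} = \projected$ for $\projected \in \setS$, since the minimal distance $0$ is attained there), so $\setS$ equals the range of $\pnonlin{\cdot}$. Because the map is elementwise, its range is exactly $\{(g(v_1),\dots,g(v_s)) : \v \in \reals^s\} = R^s$ with $R = g(\reals)$, a Cartesian product; writing $R$ as a one-dimensional constraint set (through an inequality $f$ cutting it out, together with any equality $h$) recovers the claimed componentwise form. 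I expect the main obstacle to be well-posedness rather than the algebra: when $S_0$ is nonconvex the argmin can be a set rather than a singleton, so the separability claim must be phrased at the level of argmin \emph{sets}, and closedness must be assumed so the projection is attained. Ensuring a consistent selection across coordinates is the only genuinely delicate point, since existence and the factorization themselves follow immediately from separability of the objective over the product set.
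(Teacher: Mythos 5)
Your forward direction coincides with the paper's proof: both exploit that $\|\projected-\v\|_2^2=\sum_j(\pi_j-v_j)^2$ is separable and that componentwise constraints make $\setS$ a Cartesian product, so the joint argmin factors into scalar argmins. Where you genuinely diverge is the converse. The paper dispatches it in one sentence---``the above argument can be traced backward, since all steps are invertible''---which is really an appeal to the forward chain of equalities being reversible, not an independent argument. You instead prove it directly: a Euclidean projection fixes every point of its target set, hence $\setS$ equals the range of the map; an elementwise map with common scalar component $g$ has range $g(\reals)^s$, a Cartesian power; and any such one-dimensional set can be cut out by scalar constraints (e.g., $h(t)=\mathrm{dist}(t,g(\reals))=0$, assuming closedness). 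This is both more explicit and more robust than the paper's remark, since it identifies \emph{which} structural facts (idempotence on $\setS$, product structure of the range) actually drive the converse rather than asserting reversibility. Your flagged caveats are also apt, with one small correction: for nonconvex $S_0$ the argmin is indeed set-valued (this is not hypothetical---the TopK set has ties on a measure-zero locus), and closedness is needed for attainment, neither of which the paper addresses; but ``consistent selection across coordinates'' is not actually delicate, because the coordinates decouple completely and any per-coordinate tie-breaking rule yields a valid joint minimizer---the only genuine selection issue lives \emph{within} a single coordinate. One last pedantic point: your converse assumes a common scalar map $g$ across coordinates, which matches the lemma's statement (common $f$ and $h$ for all $j,k\in[s]$), but the same argument with per-coordinate maps $g_j$ would give $\setS=\prod_j g_j(\reals)$ and per-coordinate constraints, a strictly more general statement that the paper's formulation does not cover.
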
 
\begin{proof}
    \begin{align}
    \pnonlin{\x} &= \argmin_{\projected \in \setS} \|\projected- \x\|^2 \\
    &= \argmin_{f(\pi_j)\leq 0, g(\pi_j)=0, j\in [s]} \sum_k(\pi_k- x_k)^2 \\
    &= (..., \argmin_{f(\pi_k)\leq 0, g(\pi_k)=0,} (\pi_k- x_k)^2    , ...) \\
    \text{i.e., }\;\; \Pi_{\setS}\{\x\}_k &= \argmin_{f(\pi_k)\leq 0, g(\pi_k)=0,} (\pi_k- x_k)^2 
\end{align}
This is a consequence of the objective function above (squared euclidean norm of the difference $\projected-\x$) decomposing into a sum over componentwise functions. 
The above argument can be traced backward, since all steps are invertible, which proves the converse. 
\end{proof}

\begin{theorem}[Projection Nonlinearities]
\label{thm:projectionsets_nonlinearities}
    ReLU, TopK, JumpReLU are simple combinations of orthogonal projections onto nonlinearity-specific sets: ReLU is a projection onto the positive orthant, TopK is a projection onto the union of all k-sparse subspaces, and JumpReLU is a sum of shifted ReLU and shifted Heaviside step, which itself is a projection onto the corners of a hypercube.
\end{theorem}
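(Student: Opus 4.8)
The plan is to treat each of the three nonlinearities separately, exploiting Lemma~\ref{lemma:elementwiseprojections} whenever the constraint set factorizes coordinatewise, and falling back on a combinatorial argument only for the one case (TopK) where the $\ell_0$ constraint genuinely couples coordinates. In each case the goal is to identify the claimed set $\setS$ and verify that the orthogonal projection $\pnonlin{\cdot}$ onto it reproduces the standard definition of the nonlinearity.

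For ReLU I would take $\setS = \{\x \in \R^s : \x \geq \bm{0}\}$, the positive orthant, which is cut out by the componentwise constraints $-x_j \leq 0$. Lemma~\ref{lemma:elementwiseprojections} then reduces the projection to $s$ independent scalar problems $\argmin_{\pi_k \geq 0}(\pi_k - v_k)^2$, whose solution is $\pi_k = \max(v_k, 0)$: the unconstrained optimum $v_k$ is feasible when $v_k \geq 0$, and otherwise the minimizer sits on the boundary at $0$. This is exactly ReLU. The Heaviside case is identical in spirit: $\setS = \{0,1\}^s$ is a product set, so projecting the shifted input $\v + \tfrac{1}{2}\vone$ decouples into $\argmin_{\pi_k \in \{0,1\}}(\pi_k - (v_k + \tfrac{1}{2}))^2$, which returns $1$ precisely when $v_k > 0$ and $0$ when $v_k < 0$, i.e.\ the step function $H(v_k)$; hence $H$ is the projection onto the corners of the unit hypercube.

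The genuinely different case is TopK, where $\setS = \{\x \geq \bm{0}: \|\x\|_0 \leq k\}$ and the $\ell_0$ budget is not a componentwise constraint, so Lemma~\ref{lemma:elementwiseprojections} does not apply. Here I would first fix a candidate support $S$ with $|S| \leq k$ and solve the projection restricted to it: on $S$ the problem is the ReLU projection above, while off $S$ the coordinates are forced to $0$. The resulting squared error splits as $\sum_j \min(v_j,0)^2 + \sum_{j \notin S}\max(v_j,0)^2$, where the first term is independent of $S$ (using $v_j^2 - \min(v_j,0)^2 = \max(v_j,0)^2$). Minimizing the second term over all supports of size at most $k$ then amounts to \emph{including} in $S$ the coordinates with the largest values of $\max(v_j,0)^2$, i.e.\ keeping the $k$ largest positive entries and zeroing the rest, which is precisely TopK composed with ReLU. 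This reduction of a nonconvex $\ell_0$-constrained projection to a sort-and-threshold rule is the main obstacle and the technical heart of the theorem, since it is the only step that cannot be argued coordinatewise.

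Finally, JumpReLU requires no new projection. I would simply verify algebraically that $\mathrm{ReLU}(\v - \bm{\theta}) + \bm{\theta}\odot H(\v - \bm{\theta})$ equals $v_k$ when $v_k > \theta_k$ and $0$ when $v_k < \theta_k$, matching the standard JumpReLU. Since the two summands are, respectively, a threshold-shifted instance of the ReLU projection onto the positive orthant and a threshold-shifted, coordinatewise-scaled instance of the Heaviside projection onto the hypercube corners already established above, JumpReLU is exhibited as a simple combination of orthogonal projections, completing the proof.
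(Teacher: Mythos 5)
Your proposal is correct and takes essentially the same route as the paper: the positive orthant and hypercube-corner cases via the coordinatewise decoupling lemma, the identical algebraic split of JumpReLU into a shifted ReLU plus a shifted Heaviside, and the identification of TopK with projection onto nonnegative $k$-sparse vectors. Your TopK step (fixing a candidate support, splitting the squared error using $v_j^2 = \min(v_j,0)^2 + \max(v_j,0)^2$, and minimizing over supports) is actually \emph{more} complete than the paper's, which simply asserts that the projection keeps the $k$ largest nonnegative entries.
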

\begin{proof}
    First consider the ReLU nonlinearity, defined for $\x\in \reals^s$ as:
    \begin{align}
         \z &= \ReLU(\x) \\
         z_i &= \begin{cases}
             x_i \; \text{ if }\; x_i \geq 0 \\
             0 \; \text{else}
         \end{cases}
    \end{align}
    This is an elementwise nonlinearity, so it suffices to show that each component can be written as a projection ( from Lemma \ref{lemma:elementwiseprojections}). Consider this reformulation:
    \begin{align}
        z_i = \argmin_{\pi_i\geq 0} (x_i-\pi_i)^2 
    \end{align}
    This is equivalent to ReLU, since for all non-negative values, it equals the input, while it is $0$ (nearest non-negative point) for all negative inputs. Using Lemma \ref{lemma:elementwiseprojections}, ReLU is a projection nonlinearity with projection set $\setS = \{\x\in \reals^s: x_i \geq 0 \forall i\in [s] \}$. 

    JumpReLU is defined as:
    \begin{align}
        \JumpReLU(\x) &= \x\odot \mathbb{H}(\x-\bm{\theta}) \\
        &= (\x-\bm{\theta} + \bm{\theta})\odot \mathbb{H}(x-\theta) \\
        &= \ReLU(\x-\bm{\theta}) + \bm{\theta}\odot \mathbb{H}(\x-\bm{\theta})
    \end{align}
    where the heaviside step function $\mathbb{H}$ is:
    \begin{align}
        \mathbb{H}(\x) &= \mathbb{I}(\x>0)
    \end{align}
    which is performed elementwise. Thus, JumpReLU (and the heaviside step) is also an elementwise nonlinearity. Consider the step function:
    \begin{align}
        \mathbb{H}(\x)_i = \mathbb{H}(x_i) &= \begin{cases}
            1 \; \text{ if } x_i \geq 0 \\
            0 \; \text{ else }
        \end{cases} \\
        & = \argmin_{\pi_i \in \{0, 1\} } (x_i+0.5 - \pi_i)^2
    \end{align}
    which is a shifted version of a projection. Again using Lemma \ref{lemma:elementwiseprojections}, $\mathbb{H}$ is a projection nonlinearity with projection set $\setS = \{ \x\in \reals^s: x_i \in \{0,1\} \}$, i.e., the corners of a unit hypercube. 

    The TopK nonlinearity is defined as:
    \begin{align}
        y_j &= \ReLU(x_j) \\
        \TopK(\x)_j &= y_j \;  \mathbb{I}\big( y_j \geq y_p \forall p\in \indM: |\indM|=s-K \big)
    \end{align}
    where $s$ is the dimension of the space. Note that topK typically includes a ReLU applied first (\cite{gao2024scaling}), making all entries of the vector non-negative followed by choosing the $k$-largest entries of $ReLU(\x)$. Consider a projection onto the union of all $k$-dimensional axis-aligned subspaces. With non-negative entries (due to ReLU), this would lead to choosing the k largest entries of $x$:
    \begin{align}
        \argmin_{\pi:\; \pi\; \text{is}\; k-\text{sparse}} \|x-\pi\|_2^2 &=  \argmin_{\projected:\; \projected\; \text{is}\; k-\text{sparse}} \sum_i (x_i-\pi_i)^2 \\
        &= TopK(\x)
    \end{align}
    This completes the proof.
\end{proof}

\begin{theorem}
\label{thm:projections}
    Projection nonlinearities satisfy the following properties:
    \begin{enumerate}
        \item For points within the set $\setS$, projection is an identity map
        \begin{align*}
             \x\in \setS \implies \pnonlin{\x}=\x
        \end{align*}
        \item For points outside the set $\setS$, projection is onto the boundary 
        \begin{align*}
            \x\notin \setS \implies \pnonlin{\x}\in \partial \setS
        \end{align*}
        \item If $\partial \setS$ is a flat (linear manifold), or a subset of a flat (with flat boundaries), projection of points outside the set $\setS$ is either piecewise linear or constant:
        \begin{align*}
            \pnonlin{\alpha \x_1 + \beta \x_2} &= \alpha \pnonlin{\x_1} + \beta \pnonlin{\x_2} \;\; \text{for } \alpha, \beta \in \mathcal{T} , \text{ OR} \\
            \pnonlin{\x} &= \bm{c}, \; \x\in \D \text{ (a linear piece)}
        \end{align*}
        where $\x_1,\x_2\notin \setS$, $\mathcal{T} \subseteq \reals$ is suitably defined to confine $\x$ to the corresponding linear piece
    \end{enumerate}
\end{theorem}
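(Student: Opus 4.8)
The plan is to dispatch the three claims in increasing order of difficulty, since Properties 1 and 2 follow almost immediately from the variational definition (Eq.~\ref{eq:projectionnonlinearity}) while Property 3 carries all the geometric content. For Property 1, I would note that the objective $\|\projected - \x\|_2^2$ is non-negative and attains the value $0$ at $\projected = \x$; if $\x \in \setS$ then $\x$ is feasible and globally optimal, so $\pnonlin{\x} = \x$. For Property 2, I would argue by contradiction: suppose $\projected^\star = \pnonlin{\x}$ lies in the topological interior of $\setS$ while $\x \notin \setS$ (so $\projected^\star \neq \x$). Then a ball around $\projected^\star$ is contained in $\setS$, and the point $\projected_t = \projected^\star + t(\x - \projected^\star)$ with small $t>0$ stays feasible while $\|\projected_t - \x\|_2 = (1-t)\|\projected^\star - \x\|_2 < \|\projected^\star - \x\|_2$, contradicting optimality. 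Hence $\projected^\star \in \partial\setS$.

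For Property 3, the guiding observation is that orthogonal projection onto a flat (affine subspace) is an affine map, and projection onto a single point is a constant map; the real work is reducing the constrained projection onto $\setS$ to an unconstrained projection onto one such flat. First I would use the flat-boundary hypothesis to write $\partial\setS$ as a union of flat faces $\{\gF\}$, and for $\x \notin \setS$ invoke Property 2 to place $\projected^\star = \pnonlin{\x}$ on some face $\gF$. The central lemma I would prove is: if $\projected^\star$ lies in the relative interior of a flat face $\gF$, then first-order stationarity forces $\x - \projected^\star$ to be orthogonal to $\gF$ at $\projected^\star$, which are precisely the optimality conditions for the unconstrained projection onto the affine hull $\mathrm{aff}(\gF)$; hence $\pnonlin{\x} = \Pi_{\mathrm{aff}(\gF)}(\x)$ on that region.

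I would then define the ``linear piece'' as the cell $\D = \{\x : \pnonlin{\x} \in \mathrm{relint}(\gF)\}$ of inputs routed to a common face. On such a cell the projection coincides with the affine map $\Pi_{\mathrm{aff}(\gF)}(\x) = \bm{c}_0 + Q(\x - \bm{c}_0)$, with $Q$ the orthogonal projector onto the direction space of $\gF$. This map preserves affine combinations — and is genuinely linear when $\gF$ passes through the origin, as for the $k$-sparse subspaces of TopK — giving the piecewise-linear identity $\pnonlin{\alpha\x_1 + \beta\x_2} = \alpha\pnonlin{\x_1} + \beta\pnonlin{\x_2}$ with $\mathcal{T}$ chosen to keep $\alpha\x_1 + \beta\x_2$ inside $\D$ (and, for a strictly affine face, to the affine combinations $\alpha + \beta = 1$). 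If instead $\dim\gF = 0$, then $\Pi_{\mathrm{aff}(\gF)}$ is the constant map $\pnonlin{\x} = \bm{c}$, yielding the second alternative.

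I expect the main obstacle to be this central lemma — showing that landing in the relative interior of a flat face lets one drop the remaining (possibly non-convex) part of the constraint and replace the projection onto $\setS$ by projection onto the full affine hull. This needs a careful normal-cone / stationarity argument together with the observation that, on the relative interior, the inequality constraints cutting off the rest of $\setS$ are inactive and so do not enter the optimality conditions. I would also need to verify that each cell $\D$ is convex (so affine combinations remain in the same piece), which for flat/polyhedral $\setS$ follows because $\D$ is a preimage of a face under the metric projection and is an intersection of half-spaces.
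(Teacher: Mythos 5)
Your proposal is correct and takes essentially the same approach as the paper's (much terser) proof sketch: the same feasibility argument for Property 1, the identical segment-toward-$\x$ contradiction for Property 2, and for Property 3 the same reduction of the constrained projection to an orthogonal projection onto the affine hull of the nearest flat boundary section, with zero-dimensional faces yielding the constant alternative. Your relative-interior stationarity lemma, the convexity check on the cells $\D$, and the distinction between linear and affine combinations simply make rigorous what the paper compresses into ``extend the closest section of the boundary to a subspace'' and its unspecified choice of $\mathcal{T}$.
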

\begin{proof}
    (sketch) (1) is trivial and follows from the definition of projection nonlinearities (Eq.~\ref{eq:projectionnonlinearity}). \newline
    For (2), suppose $\pnonlin{\x}$ is in the interior of $\setS$. This implies that $\exists \y \in Int(\setS)$ such that $\y=\alpha \x + (1-\alpha) \pnonlin{\x}, \alpha\in (0,1]$ and therefore $\|\y-\x\|^2< \|\x-\pnonlin{\x}\|^2$, which is a contradiction. Thus $\pnonlin{\x} \in \partial \setS$. \newline
    For (3), one can consider the section of the boundary $\partial \setS$ that is closest to $\x$, and extend it to form a subspace (possible since it is flat). Since projections onto subspaces are linear operations, $\pnonlin{\x}$ is linear in some neighborhood, and thus piecewise linear. In some cases, there is a single \textit{corner} point of $\setS$ that is closest to $\x$, in which case the projection is a constant.
\end{proof}

Projection nonlinearities are orthogonal projections onto various sets. For points within the set $\setS$, projection is the point itself, while for points outside, the projection is onto the boundary $\partial\setS$ (Theorem~\ref{thm:projections} in Appendix). For projections to be well defined everywhere, the set $\setS$ must be closed (so that the boundary belongs to the set, i.e., $\partial \setS \in \setS$). Note that if the set $\setS$ is a subspace of $\reals^s$, projection is a linear map. Therefore, the nonlinearity of projection nonlinearities comes from choosing either a subset of a subspace, or a non-flat manifold. Sparsity in projection nonlinearities is a consequence of the projection set having edges/corners along sparse subspaces.

\subsection{Receptive fields of various SAEs}
\label{sec:appendix-recfields}

First, we (re)define the four SAE encoders we study in this section:
\begin{align}
\label{eq:relu_lintx}
  \text{ReLU SAE: }\; \z &= \ReLU (\W^T \x + \bias)
\end{align}

\begin{align}
    \label{eq:jumprelu_lintx}
  \text{JumpReLU SAE: }\; \z &= \JumpReLU (\W^T \x + \bias)
\end{align}

\begin{align}
\label{eq:topk}
 \text{TopK SAE: }\; \z &= \TopK (\W^T \x)
\end{align}

\begin{align}
\label{eq:spmaxdist}
\text{SpaDE: }\; \z &= \spmax ( -\lambda \dist(\x, \W)) \\
    \dist(\x, \W)_i &= \|\x-\w_i\|_2^2
\end{align}

This section discusses the piecewise linear (affine) regions (by showing that each of the above is a piecewise linear function) and neuron receptive fields in input space for each of the four SAEs (ReLU, JumpReLU, TopK, SpaDE). Projection nonlinearities become piecewise linear when the projection sets have flat faces. Under the requirement of monosemanticity, the structure of receptive fields directly implies the assumption that concepts in data have the same structure as the receptive field.

\paragraph{Intuition behind Theorem \ref{thm:implicit-assumptions-sae}} Receptive fields for ReLU and JumpReLU SAEs are half-spaces because of the linear transform in the encoder, which needs to be positive for the SAE latent to be active (due to the ReLU/ JumpReLU nonlinearity). In TopK SAE, for a given latent to be active, its linear transform in the encoder must be non-negative (since a ReLU is used in TopK SAE implementation \citep{gao2024scaling}), and must exceed at least $s-K$ other linear transforms (where $s$ is the SAE width). This gives us an intersection of multiple half-spaces through the origin, leading to hyperpyramids and thus angular separation. 

For projection-based encoders, the receptive field can be rewritten as 
\begin{align*}
    \RF_k = \f^{-1} \big( \setS \cap \{ z_k > 0 \} \big),
\end{align*}
where $\setS$ is the projection set of the encoder. 

That is, $\RF_k$ is the pre-image of the intersection of the projection set with the half-space $\{z_k > 0\}$. Alternatively, it can be viewed as the complement of the pre-image of the set $\setS \cap \{z_k=0\}$, where the hyperplane $z_k=0$ indicates latent $k$ is ``dead''. This expression shows the explicit relation between the projection set and the receptive field properties of the SAE.

First note that all four nonlinearities have some level of sparsity, i.e., some neurons are \textit{turned off} at times. The following observation is crucial in formulating the piecewise linear regions:
\begin{lemma}[Gating]
\label{lemma:gating}
    Given the indices $\indM = \{i_1, i_2, ..., i_{|\indM|} \}$ of active neurons (with nonzero outputs), ReLU, JumpReLU, TopK and $\spmax$ are all affine functions of their inputs.
\end{lemma}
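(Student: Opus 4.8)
The plan is to prove the lemma one nonlinearity at a time, exploiting that each of the four maps is piecewise affine and that \emph{fixing} the active set $\indM$ selects a single affine piece. First I would dispatch the three \emph{gating} nonlinearities --- ReLU, JumpReLU, and TopK --- together, since conditioning on $\indM$ collapses each to a coordinate-selection map. On the region where the active set equals $\indM$: ReLU returns $v_i$ for $i\in\indM$ and $0$ otherwise; TopK returns $v_i$ for $i\in\indM$ and $0$ otherwise; and JumpReLU returns $v_i$ for $i\in\indM$ and $0$ otherwise, because on active coordinates the two $\bm{\theta}$ terms in $\ReLU(\v-\bm{\theta})+\bm{\theta}\odot\mathbb{H}(\v-\bm{\theta})$ cancel. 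Each is thus the linear map $\v\mapsto P_\indM\v$, where $P_\indM$ is the diagonal coordinate-selection matrix with ones on $\indM$ and zeros elsewhere --- hence affine (in fact linear). I would also note that for these three the argument $\v=\W^\tr\x+\bias$ is itself affine in $\x$, so the composite is affine in $\x$ as well; for SpaDE only the nonlinearity (not the whole encoder, whose argument is the quadratic distance) is affine, which is exactly what the lemma claims.

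The genuinely nontrivial case is sparsemax, and I expect this to be the main obstacle: unlike the gating maps, fixing the support does not decouple the coordinates, since the simplex projection couples all active entries through a shared threshold. The approach is to invoke the KKT characterization of the Euclidean projection onto $\Delta^s$, giving the closed form $\spmax(\v)_i=\max(0,\,v_i-\tau(\v))$ with a scalar threshold $\tau(\v)$ fixed by the constraint $\sum_i\max(0,v_i-\tau)=1$. Restricting to the piece whose support is exactly $\indM$, this reduces to $\sum_{i\in\indM}(v_i-\tau)=1$, which I solve for $\tau=\big(\sum_{j\in\indM}v_j-1\big)/|\indM|$. Substituting back yields
\[
  \spmax(\v)_i \;=\; v_i - \frac{1}{|\indM|}\sum_{j\in\indM} v_j + \frac{1}{|\indM|}, \qquad i \in \indM,
\]
and $0$ for $i\notin\indM$. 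This is manifestly affine in $\v$ --- a coordinate selection minus the average of the active coordinates plus a constant --- i.e.\ $\v\mapsto A_\indM\v+\vb_\indM$, where the rows of $A_\indM$ indexed by $\indM$ form the centering operator $I-\tfrac{1}{|\indM|}\vone\vone^\tr$ (restricted to $\indM$) and the rest are zero, with constant $\vb_\indM=\tfrac{1}{|\indM|}\vone$ on $\indM$.

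To finish, I would observe that in all four cases the restricted map has the affine form $\v\mapsto A_\indM\v+\vb_\indM$, establishing the claim. The one subtlety worth flagging is self-consistency of the sparsemax piece: I would invoke the standard property of simplex projection that the threshold $\tau$ determined from $\indM$ indeed reproduces $\indM$ as the set of strictly positive coordinates on the relative interior of the corresponding region, so the affine formula is valid precisely where the active set is $\indM$. No boundary analysis is needed, since the lemma asserts affineness only \emph{given} the active set rather than global continuity across pieces.
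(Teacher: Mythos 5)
Your proposal is correct and matches the paper's (largely implicit) reasoning: the paper states the lemma without a standalone proof, but its appendix treats ReLU/JumpReLU/TopK exactly as coordinate-selection maps on each gated region, and its derivation of SpaDE's receptive fields obtains precisely your formula $\spmax(\v)_i = v_i - \tfrac{1}{|\indM|}\sum_{j\in\indM} v_j + \tfrac{1}{|\indM|}$ on the support, citing Proposition~1 of Martins and Astudillo rather than re-deriving the threshold from the KKT conditions as you do. Your explicit check that the two $\bm{\theta}$ terms cancel for JumpReLU and your self-consistency remark for the sparsemax support are both sound and make the argument more self-contained than the paper's version, with no substantive difference in approach.
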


Lemma \ref{lemma:gating} indicates that the nonlinearity in these transformation lies only in their \textit{gating}, or selection of active indices. Thus, each linear (affine) region is characterized by a specific choice of indices $\indM$ of active neurons. Note that not all choices of indices may be allowed by the nonlinearity. Denote the set of allowed indices by $\mathbb{M}$. \newline
Let $\linreg \subseteq \reals^n$ denote the piecewise linear (affine) region corresponding to active indices $\indM$. 
\begin{lemma}
    The set $\{\linreg: \indM \in \mathbb{M}\}$ of all piecewise linear regions forms a partition of $\reals^n$.
\end{lemma}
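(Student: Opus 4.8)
The plan is to realize each region $\linreg$ as a level set of a single-valued \emph{active-set map} and then read off the partition property from that. For an input $\x \in \reals^n$ I would define $\indM(\x) := \{ i \in [s] : \f(\x)_i > 0 \}$, the set of latents the encoder leaves active at $\x$. By the definition of the regions, $\linreg = \{\x \in \reals^n : \indM(\x) = \indM\}$ is exactly the preimage of $\indM$ under this map, so the entire statement reduces to showing that $\indM(\cdot)$ assigns to each $\x$ precisely one allowed index set.

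First I would verify that $\indM(\cdot)$ is single-valued, i.e.\ that $\f$ is a genuine function. For ReLU and $\spmax$ this is immediate, since both are projections onto closed convex sets (the positive orthant and the simplex $\Delta^s$), and projection onto a closed convex set is unique; hence the support of the output is unambiguous at every $\x$. For TopK and JumpReLU the projection is onto a nonconvex set (a union of $k$-sparse subspaces, resp.\ the corners of a hypercube), so in principle the minimizer can fail to be unique on the measure-zero locus where coordinates tie. I would dispatch this by appealing to the fact that each implemented nonlinearity has a fixed closed form (with a deterministic tie-breaking rule for TopK), so $\indM(\cdot)$ is defined unambiguously on all of $\reals^n$.

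With single-valuedness in hand, the two defining properties of a partition follow at once. For disjointness I would argue that if $\indM \neq \indM'$ then $\linreg \cap \mathcal{L}_{\indM'} = \varnothing$, since a point in the intersection would force $\indM(\x)$ to equal two distinct sets. For coverage I would note that every $\x$ has a well-defined active set $\indM(\x)$, which lies in the collection $\mathbb{M}$ of allowed index sets by definition, so $\x \in \mathcal{L}_{\indM(\x)}$ and the regions exhaust $\reals^n$. Restricting to the nonempty regions then yields a genuine partition.

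The step I expect to be the main obstacle is establishing well-definedness of $\indM(\cdot)$ along the boundaries between adjacent regions for the nonconvex-projection encoders, where the active set changes discontinuously and is a priori ambiguous. This is purely a boundary (measure-zero) phenomenon: in the interior of each region the active set is locally constant, and by Lemma~\ref{lemma:gating} the encoder is affine there, so all ambiguity is confined to the shared boundaries and is removed once the tie-breaking convention built into each nonlinearity is made explicit.
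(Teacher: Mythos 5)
Your proposal is correct and matches the paper's (implicit) reasoning: the paper states this lemma without proof, relying exactly on the observation that the encoder is a single-valued map so each $\x$ has a unique active index set, making the regions $\linreg$ the level sets of that map and hence pairwise disjoint and exhaustive. Your explicit handling of the nonconvex-projection cases (deterministic tie-breaking for TopK, threshold convention for JumpReLU) and the restriction to nonempty regions are careful touches the paper glosses over, but they do not change the route---this is the same argument, spelled out.
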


Using the Gating lemma, we can associate each set of active indices to a piecewise linear region, and identify receptive fields as unions of such piecewise linear regions.

\begin{lemma}[Receptive fields and piecewise linear regions] A neuron's receptive field is a union of piecewise linear regions where the neuron is active:
\begin{align*}
    \recfield_k &= \cup_{\indM: k\in \indM} \linreg
\end{align*}
\end{lemma}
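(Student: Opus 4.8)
The plan is to establish the set identity $\recfield_k = \bigcup_{\indM:\, k \in \indM} \linreg$ by a pointwise argument, deducing it almost directly from the definition of the regions $\linreg$ together with the partition lemma above. The one fact to nail down first is the reading of $\linreg$: by the Gating Lemma (Lemma~\ref{lemma:gating}) each region is indexed by a fixed active set $\indM$, and I would take $\linreg$ to be exactly the locus of inputs whose active index set---the indices of neurons with \emph{nonzero} output---equals $\indM$. With this reading the decomposition $\{\linreg : \indM \in \mathbb{M}\}$ is automatically the partition asserted above, since every $\x$ has a single well-defined active set.

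First I would record the equivalence ``neuron $k$ is on at $\x$'' $\iff k \in \indM(\x)$, where $\indM(\x)$ denotes the active set at $\x$. Since the encoders of ReLU, TopK and SpaDE are orthogonal projections onto subsets of the nonnegative orthant, every output coordinate is $\geq 0$, so a nonzero coordinate is strictly positive; for JumpReLU an active coordinate equals $x_i > \theta_i \geq 0$, again strictly positive. Hence in all four cases $f^{(k)}(\x) > 0 \iff k \in \indM(\x)$, which ties the receptive-field condition of Def~\ref{def:receptive} to set membership.

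Then the two inclusions close immediately. If $\x \in \recfield_k$ then $f^{(k)}(\x) > 0$, so $k \in \indM(\x)$; letting $\indM := \indM(\x)$ be the unique region index with $\x \in \linreg$ (partition lemma), we have $k \in \indM$, so $\x$ lies in the right-hand union. Conversely, if $\x \in \linreg$ with $k \in \indM$, then the active set at $\x$ is $\indM \ni k$, so $f^{(k)}(\x) > 0$ and $\x \in \recfield_k$. Equivalently, one simply collects the partition pieces on which $k$ is active, which is by definition the right-hand side.

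The hard part will not be the logic but the boundary bookkeeping: at points where a pre-activation sits exactly on a piece boundary (a ReLU/JumpReLU input at its threshold, or a tie in the TopK selection) the corresponding output coordinate is $0$, and the convention ``active set $=$ indices of nonzero outputs'' must be applied consistently so that such points are assigned to the lower-dimensional region with the \emph{smaller} index set rather than to an adjacent full-dimensional piece. This consistency is exactly what makes the partition lemma and the equivalence $f^{(k)}(\x) > 0 \iff k \in \indM(\x)$ hold simultaneously on all of $\reals^n$, and once it is in force the identity follows with no further computation.
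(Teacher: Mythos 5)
Your proof is correct and takes essentially the same route the paper intends: the paper states this lemma without any written proof, treating it as an immediate consequence of the Gating lemma and the partition of $\reals^n$ into the regions $\linreg$, which is precisely the pointwise two-inclusion argument you spell out via the equivalence $f^{(k)}(\x) > 0 \iff k \in \indM(\x)$. Your closing remark on boundary bookkeeping is in fact a refinement the paper glosses over—its explicit region definitions use $\geq 0$ for active coordinates (e.g., in the ReLU claim), which clashes on measure-zero boundaries with the strict inequality in Definition~\ref{def:receptive}, and your convention that the active set consists of indices with \emph{nonzero} output is exactly what reconciles the partition property with the receptive-field identity.
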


We now use the above results and obtain the piecewise linear regions for each of the four SAEs defined previously.

\subsubsection{ReLU, JumpReLU SAE}
First note that the piecewise linear regions and receptive fields of ReLU and JumpReLU SAEs are the same---since in both cases, the gating appears through the heaviside step function ($ReLU(\x) = \x\odot \mathbb{I}(\x\geq 0)$). Thus, we develop the linear pieces and receptive fields only for ReLU, since the corresponding ones for JumpReLU are identical.
The piecewise linear regions of latents in ReLU SAE are described by the following claim:
\begin{claim}
    For a layer defined as in Eq.~\ref{eq:relu_lintx}, $\linreg$ is given as:
    \begin{align}
        \linreg &= \{ \x\in \reals^n: \w_m^Tx+b_m \geq 0 \forall m\in \indM,  \w_q^Tx+b_q < 0 \forall q\notin \indM \}
    \end{align}
    Thus, $\indM$ is an intersection of N half-spaces, and thus is a convex polytope which may be bounded or unbounded.
\end{claim}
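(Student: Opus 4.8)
The plan is to characterize the linear region $\linreg$ directly from the defining equation of the ReLU SAE encoder (Eq.~\ref{eq:relu_lintx}), using the Gating Lemma (Lemma~\ref{lemma:gating}) to reduce the problem to identifying which inputs $\x$ produce exactly the active set $\indM$. First I would recall that $\z = \ReLU(\W^T\x + \bias)$ acts componentwise, so the $m$-th latent $z_m$ is active (nonzero) precisely when its pre-activation $\w_m^T\x + b_m \geq 0$, and is zero (dead) when $\w_m^T\x + b_m < 0$. This is the direct consequence of the definition of ReLU established in Theorem~\ref{thm:projectionsets_nonlinearities}, where ReLU was shown to be a projection onto the positive orthant.

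The core argument is then a simple translation of the condition ``$\indM$ is exactly the set of active indices'' into a system of inequalities. The region $\linreg$ consists of all inputs $\x$ such that every index $m \in \indM$ satisfies $\w_m^T\x + b_m \geq 0$ (active) and every index $q \notin \indM$ satisfies $\w_q^T\x + b_q < 0$ (inactive). I would write this as
\begin{align*}
    \linreg &= \{ \x \in \reals^n : \w_m^T\x + b_m \geq 0 \;\; \forall m \in \indM, \;\; \w_q^T\x + b_q < 0 \;\; \forall q \notin \indM \},
\end{align*}
which matches the claimed expression. Within this region, by the Gating Lemma, the nonlinearity is affine in $\x$, confirming it is genuinely a single linear piece.

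The final step establishes the geometric claim: each defining constraint $\w_m^T\x + b_m \geq 0$ (or its strict counterpart $< 0$) carves out a half-space in $\reals^n$, and $\linreg$ is the intersection of these $N = s$ half-spaces (one per latent). Since an intersection of half-spaces is by definition a convex polytope, and since no constraint forces boundedness, $\linreg$ is a convex polytope that may be bounded or unbounded. I would note that the receptive field $\RF_k$ of a single latent $k$ is then the union over all $\indM \ni k$ of these polytopes, but because the only constraint on $\x$ that determines whether latent $k$ is active is the single half-space $\w_k^T\x + b_k \geq 0$, this union collapses to exactly that half-space—recovering the ``half-spaces'' entry of Tab.~\ref{table:implicit-assumptions}.

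I do not expect any serious obstacle here, as the statement is essentially a restatement of the definition of ReLU combined with the componentwise structure guaranteed by Lemma~\ref{lemma:gating}. The only point requiring mild care is the treatment of the boundary (the choice of $\geq$ versus $>$ at the active/inactive threshold), which must be handled consistently so that the collection $\{\linreg : \indM \in \mathbb{M}\}$ forms a genuine partition of $\reals^n$ rather than overlapping or leaving gaps; adopting the convention $\geq 0$ for active and $< 0$ for inactive resolves this cleanly.
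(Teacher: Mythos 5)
Your proposal is correct and follows essentially the same route as the paper, whose proof is simply the one-line observation that the claim follows from Lemma~\ref{lemma:gating} and the definition of the ReLU model in Eq.~\ref{eq:relu_lintx}; you merely spell out that translation into inequalities explicitly, and your closing remark on the $\geq 0$ versus $< 0$ boundary convention matches the paper's convention for making $\{\linreg : \indM \in \mathbb{M}\}$ a partition. Your final observation that the union defining $\RF_k$ collapses to the single half-space $\{\x : \w_k^T\x + b_k \geq 0\}$ is also correct and anticipates the paper's immediately following claim rather than diverging from it.
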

\begin{proof}
    This is a consequence of the observation in Lemma \ref{lemma:gating} and the definition of the relu model \ref{eq:relu_lintx}.
\end{proof}
\begin{lemma}
    If $b=0$ in Eq.~\ref{eq:relu_lintx}, then $\linreg$ are unbounded convex polytopes with only one corner at the origin and flat faces, i.e., they are (unbounded) hyperpyramids.
\end{lemma}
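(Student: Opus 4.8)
The plan is to start from the description of $\linreg$ in the preceding claim and exploit the single structural consequence of setting $\bias = \bm{0}$: every bounding constraint becomes \emph{homogeneous}. Concretely, $\linreg = \{\x \in \reals^n : \w_m^T\x \geq 0 \text{ for } m \in \indM,\ \w_q^T\x < 0 \text{ for } q \notin \indM\}$, so each defining hyperplane $\{\x : \w_i^T\x = 0\}$ passes through the origin. Two of the claimed properties are then immediate. Convexity holds because $\linreg$ is an intersection of (open and closed) half-spaces, each convex, and intersections of convex sets are convex. The faces are flat because every face of $\linreg$ lies on some hyperplane $\{\x : \w_i^T\x = 0\}$, which is a linear manifold.

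The key step, which simultaneously yields unboundedness and the apex-at-origin structure, is scale-invariance under positive dilations. I would show that for every $\alpha > 0$, $\x \in \linreg \implies \alpha\x \in \linreg$: since $\w_m^T(\alpha\x) = \alpha\,\w_m^T\x \geq 0$ and $\w_q^T(\alpha\x) = \alpha\,\w_q^T\x < 0$ whenever $\alpha > 0$, all defining inequalities are preserved. Hence $\linreg$ is a convex cone with apex at the origin. Unboundedness then follows at once: being a cone, $\linreg$ either reduces to $\{\bm{0}\}$ or contains a nonzero $\x$, and in the latter (full-dimensional, generic) case the whole ray $\{\alpha\x : \alpha > 0\} \subseteq \linreg$ witnesses unboundedness.

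Finally, to establish ``a single corner at the origin,'' I would pass to the closure $\overline{\linreg} = \{\x : \w_m^T\x \geq 0,\ \w_q^T\x \leq 0\}$, a closed polyhedral cone, and reason about its vertices. A vertex is a point at which $n$ linearly independent constraints are active; because all constraints are homogeneous, any such point satisfies $\w_i^T\x = 0$ for $n$ linearly independent directions $\w_i$, forcing $\x = \bm{0}$. Thus the origin is the only possible vertex, and $\linreg$ is an (unbounded) hyperpyramid with apex there. I expect the main obstacle to be exactly this pointedness step: a homogeneous polyhedral cone need not have any vertex---if the active weight vectors fail to span $\reals^n$, the cone contains a full line (equivalently $\overline{\linreg} \cap (-\overline{\linreg}) \neq \{\bm{0}\}$) and is not a genuine pyramid. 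The clean statement therefore implicitly assumes the full-dimensional (generic) regime; I would either add ``the active $\{\w_i\}$ span $\reals^n$'' as a hypothesis or remark that in the overcomplete setting ($s > n$) with generic weights this spanning holds almost surely, so the apex at the origin is indeed the unique corner.
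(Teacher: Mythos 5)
Your proof is correct, and it actually fills a hole: the paper states this lemma with no proof at all---it appears bare between the claim describing $\linreg$ as an intersection of half-spaces and the remark about the role of the bias. Your route (homogeneity of every constraint when $b=0$, hence invariance under positive dilations, hence a convex cone whose rays give unboundedness, plus the standard vertex argument that $n$ linearly independent active homogeneous constraints force $\x=\bm{0}$) is the natural completion of the paper's one-line assertion, so there is nothing to compare against except the bare statement itself.

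Your closing caveat is moreover a genuine catch rather than pedantry: as written, the lemma is false when the weight vectors $\{\w_i\}_{i\in[s]}$ fail to span $\reals^n$. In that case any nonzero $\v$ orthogonal to all $\w_i$ lies in the lineality space of every region, so each nonempty $\linreg$ contains full lines and has no vertex anywhere---it is a wedge, not a hyperpyramid, and ``only one corner at the origin'' fails. The paper never states this spanning hypothesis, though it is harmless in the intended overcomplete regime ($s>n$ with generic weights), and your proposal to add it explicitly (or note it holds generically) is the right repair. Two negligible loose ends you could tighten: $\linreg$ may be empty for index sets $\indM$ not realized by the architecture, where the statement is vacuous; and for $\indM \neq [s]$ the origin lies only in the closure of $\linreg$ because of the strict inequalities on the inactive constraints---your passage to the closed cone $\overline{\linreg}$ already handles this, but it is worth saying that the ``corner'' is a vertex of the closure.
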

Thus, bias plays an important role in ReLU layers, allowing piecewise linear regions that are convex polytopes with multiple corners anywhere in space. The greater flexibility in defining the pieces allows greater expressivity by capturing a larger class of functions.
The following (somewhat obvious) claim describes the receptive fields of model 1 neurons.
\begin{claim}
    In Model 1 (\ref{eq:relu_lintx}), for a given neuron $k\in [n]$, the receptive field $\recfield_k$ is given as:
    \begin{align}
        \recfield_k = \{\x \in \reals^n: \w_k^T \x + b_k \geq 0\}
    \end{align}
    which is a half-space defined by the normal vector $\w_k$ and bias $b_k$.
\end{claim}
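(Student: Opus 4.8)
The plan is to reduce the vector-valued ReLU encoder to the scalar behavior of its $k$-th coordinate and then read off the receptive field directly from Definition~\ref{def:receptive}. The essential observation is that the ReLU nonlinearity acts componentwise (exactly the property exploited in Lemma~\ref{lemma:elementwiseprojections} and the Gating Lemma~\ref{lemma:gating}), so the output of latent $k$ in Eq.~\ref{eq:relu_lintx} is a function of its own pre-activation alone and does not couple to the other coordinates. This decoupling is what makes the half-space structure fall out with no further work.

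First I would isolate the $k$-th output. Writing $\w_k$ for the $k$-th column of $\W$ and $b_k$ for the corresponding bias entry, the $k$-th encoder coordinate is
\begin{align*}
  f^{(k)}(\x) \;=\; \ReLU\!\big(\w_k^\tr \x + b_k\big) \;=\; \max\{0,\, \w_k^\tr \x + b_k\}.
\end{align*}
Second, I would substitute this into the receptive-field definition $\recfield_k = \{\x : f^{(k)}(\x) > 0\}$ and use the elementary fact that $\max\{0,t\} > 0$ holds precisely when $t > 0$. This immediately gives $\recfield_k = \{\x : \w_k^\tr \x + b_k > 0\}$, i.e.\ the positivity region of a single affine functional, which is exactly a half-space whose bounding hyperplane has normal vector $\w_k$ and whose offset is fixed by $b_k$, as claimed.

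I do not expect any substantive obstacle here: the statement is essentially immediate once componentwise action isolates the single coordinate, and the claimed set is just the sign condition on an affine function. The only point worth flagging is a boundary convention. On the hyperplane $\w_k^\tr \x + b_k = 0$ the output equals $0$, so under the strict inequality of Definition~\ref{def:receptive} those points are \emph{excluded}, whereas the closed condition $\w_k^\tr \x + b_k \geq 0$ in the statement describes the closed half-space. The two sets differ only on this measure-zero bounding hyperplane, and I would simply remark that the choice of $>$ versus $\geq$ affects nothing beyond whether the boundary is counted as part of the receptive field.
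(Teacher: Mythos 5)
Your proof is correct and takes essentially the same route as the paper, which simply asserts the claim as a straightforward consequence of Eq.~\ref{eq:relu_lintx}: the componentwise action of ReLU isolates $f^{(k)}(\x) = \max\{0, \w_k^\tr \x + b_k\}$, and Definition~\ref{def:receptive} then reads off the half-space. Your boundary remark is a fair catch: under the strict inequality in Definition~\ref{def:receptive} the receptive field is the open half-space $\{\x : \w_k^\tr \x + b_k > 0\}$, so the closed inequality in the paper's claim is a harmless (measure-zero) imprecision in its statement, not a flaw in your argument.
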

This is a straightforward consequence of the definition of the ReLU model in Eq.~\ref{eq:relu_lintx}.

\subsubsection{TopK SAE}

\begin{claim}
    For a layer defined as in Eq.~\ref{eq:topk}, $\linreg$ is given as:
    \begin{align}
        \linreg &= \{ \x\in \reals^n: \w_m^T\x \geq \w_q^T \x \forall m\in \indM, q\notin \indM \}
    \end{align}
    Thus, $\indM$ is an intersection of $K(N-K)$ half-spaces all passing through the origin, and thus is a convex polytope which may be bounded or unbounded. In fact, it is an unbounded hyperpyramid, with a corner at the origin and flat faces. The normals to these half-spaces are pairwise differences between active and inactive weight vectors.
\end{claim}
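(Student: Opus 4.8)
The plan is to invoke the Gating Lemma (Lemma~\ref{lemma:gating}) to reduce the problem to characterizing, for each admissible active set $\indM$, the region of inputs on which $\TopK$ selects exactly the coordinates in $\indM$. Since Lemma~\ref{lemma:gating} guarantees that $\TopK$ is affine once the active index set is fixed, all of the nonlinearity is carried by the selection rule alone. Hence the piecewise-linear region $\linreg$ is, by definition, precisely the set of inputs $\x$ for which the $K$ retained pre-activations are the $K$ largest among all $N$, and the computation reduces to translating ``is among the top $K$'' into explicit linear inequalities.

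First I would write the selection condition coordinatewise: index $m$ is retained while index $q$ is discarded exactly when the pre-activation $\w_m^\tr\x$ dominates $\w_q^\tr\x$. Imposing this simultaneously over every active $m\in\indM$ and every inactive $q\notin\indM$ yields
\begin{align*}
\linreg = \{\x\in\reals^N : \w_m^\tr\x \geq \w_q^\tr\x \ \text{ for all } m\in\indM,\ q\notin\indM\},
\end{align*}
which is the stated characterization. I would then rewrite each inequality as $(\w_m-\w_q)^\tr\x \geq 0$, exhibiting it as a half-space whose bounding hyperplane passes through the origin and whose normal is the difference vector $\w_m-\w_q$. Counting pairs $(m,q)$ with $m\in\indM$ and $q\notin\indM$ gives exactly $|\indM|\,(N-|\indM|)=K(N-K)$ such constraints, with normals equal to the active/inactive weight differences as claimed.

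Next I would read off the geometric consequences. An intersection of half-spaces is convex and polyhedral; moreover, because every defining inequality is homogeneous of degree one in $\x$, the region is invariant under positive scaling, i.e.\ a convex cone. Consequently it is unbounded whenever it is nonempty, and it has its unique apex at the origin, where all constraints hold with equality ($\w_m^\tr\vzero = 0 = \w_q^\tr\vzero$). Flatness of the faces is immediate, since each boundary piece lies on a hyperplane $(\w_m-\w_q)^\tr\x=0$. Together these observations identify $\linreg$ as an unbounded hyperpyramid with a corner at the origin.

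The main obstacle I anticipate is the $\ReLU$ that precedes coordinate selection in the $\TopK$ encoder (see the definition used in Theorem~\ref{thm:projectionsets_nonlinearities}): strictly speaking a coordinate can rank among the top $K$ yet be clamped to zero when its pre-activation is negative, so a fully rigorous description also carries sign constraints $\w_m^\tr\x\geq 0$ for $m\in\indM$. I would argue these extra constraints are themselves homogeneous half-spaces through the origin, so they only refine the cone without disturbing the hyperpyramid conclusion, and are redundant on the subregion where the top-$K$ pre-activations are nonnegative. The remaining care is over tie cases (equalities $\w_m^\tr\x=\w_q^\tr\x$), which constitute the measure-zero faces shared between adjacent regions; these are absorbed by the choice of weak inequalities, consistent with the regions forming a partition of $\reals^N$.
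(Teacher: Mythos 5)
Your proposal is correct and follows essentially the same route as the paper, whose entire proof is a one-line appeal to the Gating Lemma (Lemma~\ref{lemma:gating}): once the active set $\indM$ is fixed, the map is affine, and the region is read off by translating ``the coordinates in $\indM$ are the top $K$'' into the pairwise inequalities $(\w_m-\w_q)^\tr\x\geq 0$, which are homogeneous half-spaces through the origin. Your explicit handling of the pre-selection $\ReLU$ (the extra sign constraints $\w_m^\tr\x\geq 0$, which are likewise homogeneous and so preserve the cone/hyperpyramid conclusion) is a careful addition that the paper's terse proof leaves implicit.
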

This again follows from the Gating Lemma \ref{lemma:gating}.

\begin{claim}
    In Model 2 (\ref{eq:topk}), for a given neuron $k\in [n]$, the receptive field $\recfield_k$ is given as:
    \begin{align}
        \recfield_k = \cup_{\indM: k\in \indM} \linreg
    \end{align}
    which is a union of hyperpyramids with a corner at the origin. Note that in typical implementations of TopK, a pre-encoder bias is included, so the corner of the hyperpyramids is at the pre-encoder bias.
\end{claim}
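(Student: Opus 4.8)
The plan is to derive this claim as a direct specialization of the general relationship between receptive fields and piecewise-linear regions established just above, combined with the characterization of TopK's linear pieces, so the work is almost entirely bookkeeping.

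First I would unwind the definition of the receptive field (Def.~\ref{def:receptive}): $\recfield_k = \{\x : f^{(k)}(\x) > 0\}$, so a point lies in $\recfield_k$ exactly when neuron $k$ produces a strictly positive output. By the Gating Lemma (Lemma~\ref{lemma:gating}), on each linear region $\linreg$ the set of active neurons is precisely $\indM$; equivalently, neuron $k$ is active on $\linreg$ if and only if $k \in \indM$. Since the collection $\{\linreg : \indM \in \mathbb{M}\}$ partitions $\reals^n$, the locus where neuron $k$ is active is exactly the union of those regions whose active index set contains $k$, which yields $\recfield_k = \cup_{\indM : k \in \indM} \linreg$ immediately. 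This is really the whole content of the identity; it is just the ``Receptive fields and piecewise linear regions'' lemma instantiated for the TopK nonlinearity.

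Next, to establish the geometric shape, I would appeal to the preceding TopK claim, which shows each $\linreg$ is the intersection of the $K(N-K)$ half-spaces $\{\w_m^\tr \x \geq \w_q^\tr \x\}$ over $m \in \indM$, $q \notin \indM$. Because every bounding hyperplane is homogeneous in $\x$ (it passes through the origin), the intersection is a convex cone with a single apex at $0$ and flat faces, i.e.\ an unbounded hyperpyramid. Hence $\recfield_k$, as a union over the admissible index sets containing $k$, is a union of such hyperpyramids all sharing the apex at the origin.

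Finally, for the pre-encoder-bias case I would replace $\w^\tr \x$ by $\w^\tr \x + b$ throughout: the selection comparisons become $\w_m^\tr \x + b_m \geq \w_q^\tr \x + b_q$, which are no longer homogeneous, so the common apex shifts from $0$ to the point at which all boundary equalities coincide, namely the pre-encoder bias location. I do not expect a substantive obstacle here; the only subtlety is the treatment of region boundaries where the TopK selection ties and the projection is non-unique, but this is a measure-zero set and does not affect the open receptive field $\{f^{(k)} > 0\}$. Everything else follows mechanically from the Gating Lemma and the TopK linear-region claim.
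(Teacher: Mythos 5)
Your proposal is correct and follows essentially the same route as the paper, which presents this claim as an immediate instantiation of the general lemma $\recfield_k = \cup_{\indM: k\in \indM} \linreg$ together with the preceding claim that each TopK region $\linreg$ is an intersection of homogeneous half-spaces $\{\w_m^\tr \x \geq \w_q^\tr \x\}$, hence a hyperpyramid with apex at the origin. One minor imprecision: with an arbitrary per-latent bias $b_m$ the hyperplanes $\w_m^\tr\x + b_m = \w_q^\tr\x + b_q$ need not share a common apex; the shifted-apex conclusion holds because the pre-encoder bias enters as $\W^\tr(\x - \bias)$, i.e.\ $b_m = -\w_m^\tr\bias$, making the comparisons homogeneous in $\x - \bias$ with common apex at $\x = \bias$.
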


\subsubsection{SpaDE}
\label{section:appendix-recfields-spade}
\begin{claim}
    For a layer defined as in Eq.~\ref{eq:spade}, $\linreg$ is given as:
\begin{align}
    \linreg &= \Bigg\{ \x \in \reals^n:   \|\x-\w_m\|_2^2 - \frac{1}{|\indM|} \sum_{j \in \indM} \|\x-\w_j\|_2^2  
        - \frac{1}{\lambda |\indM|} \; 
        \begin{cases}
            \leq 0, \;\; \text{if } m\in \indM \\
            > 0, \;\; m\notin \indM
        \end{cases}
    \Bigg\} \\
    &= \Bigg\{ \x \in \reals^n:\\ 
    &\Big( \w_m^T - \frac{1}{|\indM|} \sum_{j \in \indM} \w_j^T \Big) \x - \Big( \|\w_m\|_2^2 - \frac{1}{|\indM|} \sum_{j\in \indM} \|\w_j\|_2^2 \Big)
        + \frac{1}{\lambda |\indM|} \; 
        \begin{cases}
            \geq 0, \;\; \text{if } m\in \indM \\
            < 0, \;\; m\notin \indM
        \end{cases}
    \Bigg\}
\end{align}

    Thus, $\indM$ is an intersection of $N$ half-spaces, and thus a convex polytope. Note that the normal to each half space is now chosen in an input-adaptive fashion ($m\in \indM$) and is locally centered using the mean of other nearby prototypes that are active, i.e., $\Big( \w_m^T - \frac{1}{|\indM|} \sum_{j \in \indM} \w_j^T \Big)$ where $\indM$ is input adaptive. An alternate interpretation is using the first equation above, which defines the region as the set of points whose distance to active prototypes is within a tolerance of the average distance to all active prototypes, while distance to inactive prototypes is larger than the average distance to active prototypes. 
\end{claim}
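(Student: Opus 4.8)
The plan is to combine the closed-form solution of the sparsemax projection with the Gating Lemma (Lemma~\ref{lemma:gating}) so that the claim reduces to one explicit computation per active set. First I would recall that projecting a pre-activation $\v$ onto the simplex $\Delta^s = \{\projected \in \R^s : \sum_i \projected_i = 1,\ \projected \geq \bm{0}\}$ has a known closed form (Martins \& Astudillo): writing $\indM$ for the support of $\spmax(\v)$, there is a single threshold $\tau = \tfrac{1}{|\indM|}\big(\sum_{j\in\indM} v_j - 1\big)$ with $\spmax(\v)_i = v_i - \tau$ for $i\in\indM$ and $\spmax(\v)_i = 0$ otherwise, and the support is characterized by $\indM = \{i : v_i > \tau\}$. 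This is exactly the affine-on-each-gating-region behavior asserted by Lemma~\ref{lemma:gating} specialized to $\setS = \Delta^s$, and it identifies the active set $\indM$ as the combinatorial label of the linear piece $\linreg$.

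Next I would substitute the SpaDE pre-activation $v_i = -\lambda\|\x-\w_i\|_2^2$ into this closed form. On the support the threshold becomes $\tau = \tfrac{1}{|\indM|}\big(-\lambda\sum_{j\in\indM}\|\x-\w_j\|_2^2 - 1\big)$, giving $z_m = v_m - \tau = -\lambda\big(\|\x-\w_m\|_2^2 - \tfrac{1}{|\indM|}\sum_{j\in\indM}\|\x-\w_j\|_2^2\big) + \tfrac{1}{|\indM|}$ for $m\in\indM$. The region $\linreg$ is then precisely the set of inputs for which this gating is self-consistent: $z_m > 0$ for every $m\in\indM$ and $v_q \leq \tau$ for every $q\notin\indM$. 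Dividing the activity condition $z_m > 0$ through by $-\lambda < 0$ flips the inequality and produces $\|\x-\w_m\|_2^2 - \tfrac{1}{|\indM|}\sum_{j\in\indM}\|\x-\w_j\|_2^2 - \tfrac{1}{\lambda|\indM|} \leq 0$, while the same manipulation on $v_q \leq \tau$ yields the complementary strict inequality for $q\notin\indM$. This reproduces the first, distance-based form of the claim.

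Finally I would pass to the affine form by expanding each squared distance as $\|\x-\w_i\|_2^2 = \|\x\|_2^2 - 2\w_i^{T}\x + \|\w_i\|_2^2$. The crucial observation---which is the actual content of the claim rather than a difficulty---is that the common $\|\x\|_2^2$ term enters $\|\x-\w_m\|_2^2$ and the average $\tfrac{1}{|\indM|}\sum_{j\in\indM}\|\x-\w_j\|_2^2$ identically, so it cancels in their difference. What survives is affine in $\x$, with input-adaptive normal $\big(\w_m - \tfrac{1}{|\indM|}\sum_{j\in\indM}\w_j\big)$ and a constant offset assembled from $\|\w_m\|_2^2$, the averaged $\|\w_j\|_2^2$, and $\tfrac{1}{\lambda|\indM|}$; collecting terms yields the second displayed form. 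Since each of the $N$ conditions is linear in $\x$, $\linreg$ is an intersection of half-spaces and hence a convex polytope, as asserted.

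The main obstacle I anticipate is not the algebra but justifying that the support identification is well-posed. I must argue that the active set $\indM$ is uniquely determined on the interior of each region and that the pieces $\{\linreg\}$ genuinely partition $\reals^n$ (the companion partition lemma), so that the self-consistency conditions are both necessary and sufficient for membership in $\linreg$. The strict-versus-nonstrict distinctions at the boundaries where $z_m = 0$ are a measure-zero bookkeeping matter, which I would settle by assigning boundary hyperplanes to regions consistently, exactly as in the ReLU and TopK derivations.
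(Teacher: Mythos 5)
Your proposal is correct and takes essentially the same route as the paper, whose entire proof is a one-line appeal to the sparsemax closed form (Proposition~1 of Martins \& Astudillo) that you have simply written out in full: substitute $v_i=-\lambda\|\x-\w_i\|_2^2$ into the threshold characterization of the support, divide the self-consistency conditions $z_m>0$ and $v_q\leq\tau$ by $-\lambda$, and expand the squares so the common $\|\x\|_2^2$ cancels, leaving one affine condition per index. The only deviations are cosmetic: the strict-versus-nonstrict boundary conventions you already flag, and the fact that your (correct) algebra carries a factor of $2$ on the linear term $\w_m^{T}\x$ that the claim's second display drops (the paper's own expansion in its SpaDE appendix retains it as $2\lambda$), neither of which affects the half-space or convex-polytope conclusion.
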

\begin{proof}
    This is again a consequence of the definition of sparsemax \cite{martins2016softmax}.
\end{proof}

\begin{claim}
    In SpaDE (\ref{eq:spmaxdist}), for a given neuron $k\in [n]$, the receptive field $\recfield_k$ is given as:
    \begin{align}
        \recfield_k = \cup_{\indM: k\in \indM} \linreg
    \end{align}
    which is a union of convex polytopes, each of which includes the latent $k$ in the set of active indices $\indM$. Due to the use of euclidean distances in choosing active indices, the receptive field is a union of convex polytopes in the vicinity of the \textit{prototype} $a_k$ of latent $k$. This incorporates the notion of locality and flexibility in receptive field shapes, allowing latents to capture nonlinearly separable concepts.
    \end{claim}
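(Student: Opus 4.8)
The plan is to treat this statement as the specialization of two general structural results already in hand—the Gating Lemma (Lemma~\ref{lemma:gating}) and the general ``Receptive fields as unions of piecewise linear regions'' lemma—to the sparsemax nonlinearity of SpaDE, and then to extract the geometric content (convex polytopes clustered near the prototype $\w_k$) directly from the explicit half-space description supplied by the preceding claim characterizing $\linreg$ for SpaDE.

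First I would make precise the pivotal equivalence: for a fixed input $\x$, sparsemax returns an output whose support is exactly the active index set $\indM(\x)$, and by Definition~\ref{def:receptive} neuron $k$ is active (i.e.\ $z_k > 0$) if and only if $k \in \indM(\x)$. This is read off the closed form of the projection onto the simplex: writing $\v = -\lambda\,\dist(\x,\W)$, the solution is $z_i = \max(0,\, v_i - \tau(\x))$ with a threshold $\tau(\x)$ fixed by $\sum_i z_i = 1$, so $z_k > 0 \iff v_k > \tau(\x)$, which is precisely membership of $k$ in the support. The Gating Lemma then guarantees that, once $\indM$ is held fixed, $\spmax$ is affine in $\x$, so the input space decomposes into the regions $\linreg$ on which a common support is selected, and these regions partition $\reals^n$.

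Granting this, the receptive-field identity is immediate:
\begin{align*}
    \recfield_k = \{\x : z_k > 0\} = \{\x : k \in \indM(\x)\} = \bigcup_{\indM:\, k\in\indM} \linreg,
\end{align*}
which is exactly the general receptive-field lemma instantiated for SpaDE. To see that each piece is a convex polytope, I would invoke the preceding claim, which writes $\linreg$ as an intersection of $N$ half-spaces whose defining inequalities are affine in $\x$: in each constraint the quadratic term $\|\x\|_2^2$ from $\|\x-\w_m\|_2^2$ cancels against the averaged term $\tfrac{1}{|\indM|}\sum_{j\in\indM}\|\x-\w_j\|_2^2$, leaving a linear function of $\x$. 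A finite intersection of half-spaces is a convex polytope, so $\recfield_k$ is a finite union of convex polytopes, each carrying $k$ in its active set.

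The remaining and most delicate part is the geometric interpretation—that these polytopes cluster in the vicinity of $\w_k$. Here I would read the active-set condition off the preceding claim: $k\in\indM$ forces $\|\x-\w_k\|_2^2 \le \tfrac{1}{|\indM|}\sum_{j\in\indM}\|\x-\w_j\|_2^2 + \tfrac{1}{\lambda|\indM|}$, i.e.\ $\x$ lies no farther from $\w_k$ than the average distance to the active prototypes, up to a $1/\lambda$ slack; hence $\w_k$ must be among the nearest prototypes to every $\x \in \recfield_k$, giving locality together with the flexible, non-half-space shape needed to isolate nonlinearly separable concepts. The main obstacle is twofold: making this ``vicinity'' statement quantitatively faithful rather than heuristic, and reconciling the open condition $z_k>0$ with the half-open polytopes $\linreg$, whose closed faces ($\leq 0$ for $m\in\indM$) include the measure-zero set where a sparsemax tie yields $z_k=0$. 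I would dispatch the latter by noting that $\recfield_k = z_k^{-1}((0,\infty))$ is open (sparsemax is continuous), so the union above is to be read up to shared boundary faces, which do not alter the receptive field as an open set.
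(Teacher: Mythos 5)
Your proposal is correct and follows essentially the same route as the paper: the paper likewise obtains $\recfield_k = \cup_{\indM:\, k\in\indM}\linreg$ by instantiating the Gating Lemma and the general receptive-fields-as-unions-of-linear-regions lemma for sparsemax, and reads off the convex-polytope structure and locality directly from the half-space description of $\linreg$ in the preceding SpaDE claim. Your added care---making the support equivalence explicit via the sparsemax threshold $\tau(\x)$, noting the cancellation of the quadratic terms, and reconciling the open condition $z_k>0$ with the closed faces of $\linreg$---only fills in details the paper leaves implicit.
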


\subsection{KDS and Sparse Coding}
K-Deep Simplex (KDS) \cite{tasissa2023k} is the sparse coding framework which forms the outer optimization in the SpaDE. While this is a different framework, in this section we show that it is general enough to capture the standard sparse coding, i.e., for data generated using standard sparse coding, there exists a corresponding KDS framework that could have generated the same data. Note that we may have to increase the latent dimension (number of dictionary atoms) by one to obtain the corresponding KDS framework. This is stated and proved (with a constructive proof) in the following theorem.
\begin{theorem}[KDS can capture standard sparse coding]
\label{thm:kds-sparsecoding}
    Given data $\mathcal{D}=\{\x^{(1)}, ..., \x^{(P)}\}$ generated from a standard sparse coding generative model, i.e., $\x=\D\z + \eta$, where dictionary atoms (columns of $\D$) have unit norm and $\z$ is unconstrained, there exists a scaling of the data such that it can be represented using the K-Deep Simplex \cite{tasissa2023k} framework, i.e., $\Tilde{\x} = \kappa \x = \Tilde{\D} \Tilde{\z} + \Tilde{\eta}$, where $\Tilde{\z}\in \Delta^s$.
\end{theorem}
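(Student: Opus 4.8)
The plan is to give a constructive proof. Write $r = \mathrm{rank}(\D) \le s$, and observe that the \emph{clean} parts $\D\z^{(p)}$ of the finitely many data points all lie in the $r$-dimensional subspace $\mathrm{col}(\D) \subseteq \reals^n$. Since a KDS representation $\tilde{\D}\tilde{\z}$ with $\tilde{\z} \in \Delta^{s'}$ is exactly a convex combination of the atoms, what I must produce is a \emph{fixed} set of atoms (landmarks) whose convex hull contains every scaled clean point $\kappa\D\z^{(p)}$; the barycentric coordinates then furnish the simplex-constrained codes, and the leftover $\kappa\eta^{(p)}$ becomes $\tilde{\eta}^{(p)}$. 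The only genuine obstacle is the sign of the code: because $\z$ is unconstrained, a naive attempt to reuse the atoms $\d_i$ (flipping $\d_i \mapsto -\d_i$ for negative coefficients) would double the dictionary to $2s$ atoms. I avoid this by choosing entirely new atoms and exploiting a dimension count: the convex hull of $s$ points has affine dimension at most $s-1$, too small to contain an $r$-dimensional data cloud when $r = s$, whereas $r+1 \le s+1$ points suffice. This is precisely the ``increase by one'' noted before the theorem.

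Concretely, I would first establish boundedness: the dataset is finite, so $R := \max_p \|\D\z^{(p)}\| < \infty$ and all clean points sit in the radius-$R$ ball of $\mathrm{col}(\D)$. Next, inside $\mathrm{col}(\D)$ I fix a regular $r$-simplex $T$ centred at the origin whose $r+1$ vertices $\tilde{\d}_1,\dots,\tilde{\d}_{r+1}$ lie on the unit sphere; such a simplex contains an origin-centred ball of some radius $\rho_r > 0$ depending only on $r$. Choosing the scaling $\kappa = \rho_r / R$ then guarantees $\|\kappa\D\z^{(p)}\| \le \rho_r$, so every scaled clean point lies in this inscribed ball and hence inside $T$. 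This is the essential role of $\kappa$: it shrinks the data enough to fit inside a simplex whose vertices \emph{still} satisfy the unit-norm dictionary constraint $\tilde{\D} \in \mathcal{B}$, so the resulting atoms are admissible landmarks.

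Finally I read off the codes. Since $\kappa\D\z^{(p)} \in T$, it has unique barycentric coordinates $\tilde{\z}^{(p)} \in \Delta^{r+1}$ with respect to the vertices, i.e.\ $\kappa\D\z^{(p)} = \tilde{\D}\tilde{\z}^{(p)}$. Setting $\tilde{\eta}^{(p)} := \kappa\eta^{(p)}$ gives
\[
\tilde{\x}^{(p)} = \kappa\x^{(p)} = \kappa\D\z^{(p)} + \kappa\eta^{(p)} = \tilde{\D}\tilde{\z}^{(p)} + \tilde{\eta}^{(p)}, \qquad \tilde{\z}^{(p)} \in \Delta^{r+1},
\]
which is exactly a KDS decomposition. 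Because $r+1 \le s+1$, at most one atom is added relative to the original $s$; when $r < s$ one pads $\tilde{\D}$ with dummy atoms carrying zero code to realize $\tilde{\z}$ in the larger simplex literally (reconciling the notation $\Delta^s$ in the statement with the $s+1$ coordinates of the worst case). I expect the main point requiring care to be the sign/dimension argument justifying why a \emph{single} extra atom suffices rather than the $2s$ atoms a sign-splitting construction would need, together with checking that $\kappa$ can be chosen to respect the unit-ball constraint on the atoms; the barycentric-coordinate and boundedness steps are then routine.
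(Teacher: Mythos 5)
Your proof is correct, but it is genuinely different from the paper's. The paper's proof keeps the \emph{original} dictionary: it sets $\kappa = \left(\max_{\x \in \mathcal{D}} \sum_i z_i(\x)\right)^{-1}$, so that the rescaled codes $\hat{\z} = \kappa \z$ satisfy $\sum_i \hat{z}_i \le 1$, and then appends a single all-zeros atom $\Tilde{\D} = [\D, \mathbf{0}]$ whose coefficient $\beta = 1 - \sum_i \hat{z}_i$ absorbs the slack, yielding $\Tilde{\z} = [\hat{\z}^\top, \beta]^\top \in \Delta^s$. Note that this construction silently relies on $\z \ge \bm{0}$ (consistent with the paper's Eq.~1 and App.~A, though the theorem statement says ``unconstrained''): with signed codes, $\hat{z}_i$ could be negative and the paper's argument breaks. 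Your construction — discard the dictionary, inscribe the scaled clean points $\kappa\D\z^{(p)}$ in a regular $r$-simplex inside $\mathrm{col}(\D)$ via $\kappa = \rho_r/R$, and read off barycentric coordinates — handles genuinely sign-unconstrained codes, so it is actually more faithful to the literal hypothesis, and your dimension count correctly explains why $r+1 \le s+1$ atoms suffice. What you lose is what the paper's version preserves: in the paper's proof the KDS atoms are the original dictionary atoms and $\Tilde{\z}$ retains the support of $\z$ (plus one slack coordinate), so sparsity and atom semantics survive the translation; in your proof the new atoms bear no relation to $\D$, and barycentric coordinates of a generic interior point are fully dense, so the \emph{sparse} structure is destroyed even though the stated conclusion ($\Tilde{\z} \in \Delta^s$ representability) holds. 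Two trivial loose ends in your write-up: the degenerate case $R = 0$ needs a one-line remark (any $\kappa$ works, code every point as the barycenter), and the care you take over the unit-ball constraint on $\Tilde{\D}$ is unnecessary here, since in the KDS framework as described in App.~D.4 the dictionary atoms are unconstrained.
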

\begin{proof}
    Consider the following scalar:
    \begin{align*}
        \kappa = \left(\max_{\x\in \mathcal{D}} \sum_{i} z_i (\x) \right)^{-1}
    \end{align*}
    Normalizing data using $\kappa$ above gives us,
    \begin{align*}
        \Tilde{\x} &= \kappa \x \\
        &= \D \frac{\z}{\max_{\x\in \mathcal{\D}} \sum_{i} z_i (\x)} + \kappa \eta \\
        &= \D\hat{\z} + \Tilde{\eta}
    \end{align*}
    By definition, $\hat{\z}$ defined above always satisfies $\sum_i \hat{z}_i \leq 1$, so let $\beta = 1-\sum_i \hat{z}_i$. Appending an all-zeros dictionary atom to $\D$, $\Tilde{\D} = [\D, \mathbf{0}]$ and assigning the residual to $\Tilde{\z} = [\hat{\z}^T, \beta]^T$ gives us the following:
    \begin{align*}
        \Tilde{\x} &= \Tilde{\D}\Tilde{\z} +\Tilde{\eta}, \;\; \text{ where } \Tilde{\z}\in \Delta^s
    \end{align*}
    implying that the original data can be represented in the framework of KDS.
\end{proof}

\subsection{SpaDE}
\label{subsection:spade-details}
Sparsemax is a projection onto the probability simplex, which can be written as (see Proposition 1 in \cite{martins2016softmax})
\begin{align}
    \text{Let  } \z &= \spmax(\y) \\
    \text{Then,  } z_i &= \ReLU( y_i - \frac{1}{|\indM|}\sum_{j\in \indM} y_j + \frac{1}{|\indM|})
\end{align}
 SpaDE is defined using squared euclidean distances between an input vector and some \textit{prototypes} (or landmarks) in input space (Eq.~\ref{eq:spade}), which gives us
 \begin{align}
     y_i &= -\lambda |\x-\w_i|_2^2 \\
     \implies \spmax(\y)_i &= \ReLU \left(2\lambda (\w_i - \frac{1}{|\indM|}\sum_j \w_j)^T \x -\lambda (|\w_i|^2-\frac{1}{|\indM|}\sum_j |\w_j|^2)  +\frac{1}{|\indM|}\right) \\
     &= ReLU(\Tilde{\W}(\x) \x + \Tilde{\bias}_{e} (\x) )
 \end{align}
 where $\Tilde{\W}(\x) = 2\lambda (\w_i - \frac{1}{|\indM|}\sum_j \w_j)$, $\Tilde{\bias}_{e} = -\lambda (|\w_i|^2-\frac{1}{|\indM|}\sum_j |\w_j|^2)  +\frac{1}{|\indM|}$ and $\indM$ is the set of active indices, which is uniquely determined by the constraint $\sum_i \spmax(\y)_i = 1$ (see Proposition 1 in \cite{martins2016softmax} for uniqueness). Note that $\Tilde{\W}(\x), \Tilde{\bias}_{e}(\x)$ are both \textit{piecewise constant} on regions of input space marked by the same choice of active indices. 
 
Thus, SpaDE is equivalent to a ReLU SAE, but with a linear transformation and bias that are input-adaptive (piecewise constant). SpaDE is thus piecewise linear and continuous (continuity follows from the continuity of sparsemax). Note that this is a nontrivial result: despite appearing quadratic in input due to the use of squared euclidean distances, SpaDE is a piecewise linear function of the input. This result is also exact, and is NOT a first order Taylor series approximation.

However, SpaDE differs from a ReLU SAE by using linear transformations defined with respect to a local origin, which is uniquely determined by the set of active SAE latents, similar to recent work on steering~\citep{wu2025reft}.

Since SAEs are completely described by their inner and outer optimization problems (see Theorem~\ref{thm:sae-bileveloptim}), we now describe these components for SpaDE.

The inner optimization (Eq.~\ref{eq:bilevel-optim-sae}) for the SpaDE is as follows:
\begin{equation}
\begin{split}
    \F(\projected, \W, \x) &= \sum_i \pi_i \|\x-\w_i\|_2^2 + \frac{1}{2\lambda} \|\projected\|_2^2 \\
    \setS &= \{ \projected\in \reals^s:\; \pi_i \geq 0, \sum_i \pi_i = 1 \}
\end{split}
\end{equation}
This resembles one-sided optimal transport with a squared 2-norm regularizer. This problem is one-sided because there is no constraint on how much \textit{weight} sits on each prototype across different inputs (optimization is performed independently for each input). The squared $2-$norm regularizer is known to lead to sparse transport plans in the optimal transport literature (see \cite{liu2022sparsity}). 

The outer optimization for SpaDE (Eq.~\ref{eq:bilevel-optim-sae}) is a locality-enforced version of dictionary learning called K-Deep Simplex (KDS) \cite{tasissa2023k}. In this framework, the sparse code is constrained to belong to the probability simplex, i.e., $\z\in \Delta^s = \{\y\in \reals^s: \sum_i y_i = 1, y_i\geq0 \forall i\}$, while the dictionary atoms $\D$ are unconstrained. The distance-weighted L1 regularizer encourages each datapoint to use those dictionary atoms which are close to itself in euclidean distance, inducing a soft clustering bias. Even though this is a different dictionary learning framework than standard sparse coding, it is expressive enough to capture the standard sparse coding setup, i.e., for any standard sparse coding problem, there exists an equivalent KDS problem (see Theorem~\ref{thm:kds-sparsecoding} in Appendix).

While this outer optimization (KDS) is a different problem than the standard dictionary learning problem, it may be useful for interpretability since it has the following advantages:
\begin{enumerate}
    \item It avoids shrinkage, since the L1 norm of the sparse representation $\z(\x)$ is constrained to equal $1$ for all inputs
    \item Constraining the sparse code to the probability simplex finds support in an oft-cited paper demonstrating the linear representation hypothesis in word embeddings under a random-walk based generative model of language \cite{arora2018linear}. Their main result (Theorem 2) shows that representations are \textit{convex} combinations of concepts, as opposed to unconstrained linear combinations, which is better interpreted as assigning vectors (with magnitude and direction; alternatively, locations) to concepts rather than directions (without magnitude). This idea of concepts as vectors has also been demonstrated both theoretically and empirically in the final layer representations of language models \cite{park2024geometry}. 
\end{enumerate}

Note how SpaDE satisfies the two data properties of nonlinear separability and heterogeneity:
\begin{enumerate}
    \item The projection set $\setS$ in SpaDE is the probability simplex, which admits edges/corners with varying levels of sparsity, thereby allowing the representation of heterogeneous concepts. For any choice of $k\in \{1, 2, ..., s\}$, there are $s\choose k$ choices of indices $\indM_k$ for a $k$-sparse representation, and points $\{\x \in \R^s: x_i = 0, i \notin \indM_k, \sum_{j \in \indM_k} x_j = 1, x_j \geq 0\} \subseteq \Delta^s$ which admit this level of sparsity, thereby capturing concept heterogeneity.
    \item The receptive fields of SpaDE (see App.~\ref{section:appendix-recfields-spade}) are  local to each prototype (encoder weight vector), and are flexibly defined as the union of convex polytopes. This allows latents in SpaDE to become monosemantic to concepts which are nonlinearly separable from the rest of the data.
\end{enumerate}

\section{Further Results}
\label{section:empiricsextended}
In this section, we present a more detailed analysis of the results from each of our four experiments. 

\subsection{Separability Experiment}

\begin{figure}[h!]
    \centering
    \includegraphics[width=\linewidth]{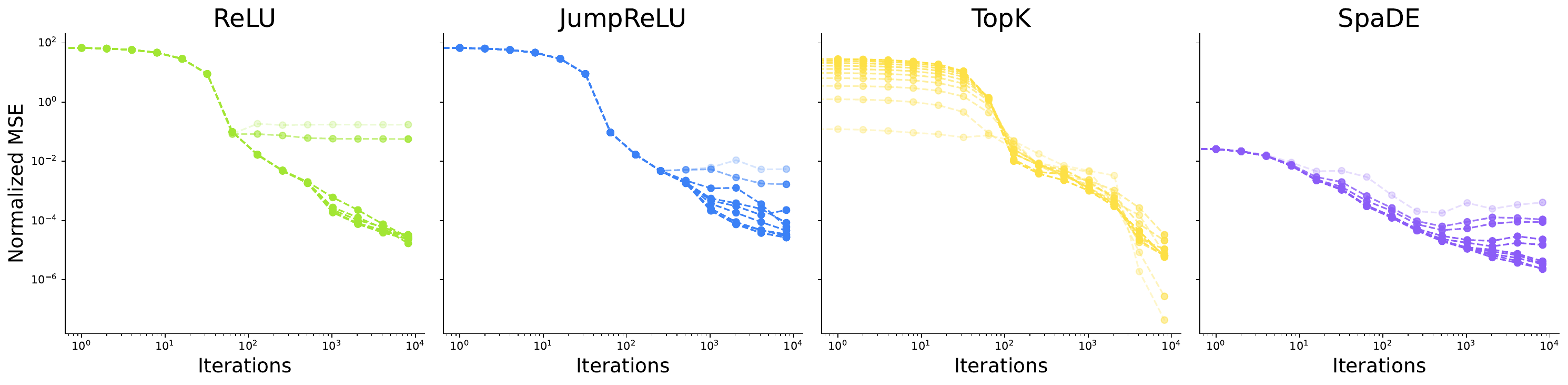}
    \caption{Evolution of normalized MSE with training iterations for various SAEs on the \textit{separability experiment}. Color intensity is proportional to $L_0$ (darker colors imply more dense SAE latents).}
    \label{fig:App-sep-msevsiter}
\end{figure}

\begin{figure}[h!]
    \centering
    \includegraphics[width=\linewidth]{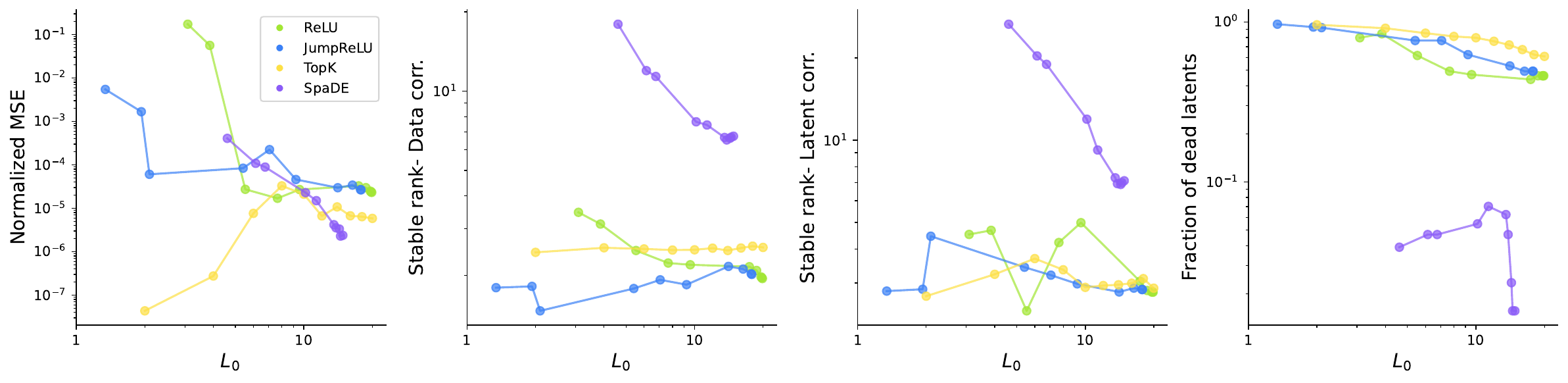}
    \caption{Normalized MSE (normalized with variance of data), Stable ranks (of data correlations, latent correlations matrices), and fraction of dead latents as a function of sparsity ($L_0$) for the \textit{separability experiment} (Sec.~\ref{section:separabilityexpt})}
    \label{fig:App-sep-nmse-stablerank-numdead}
\end{figure}

Fig.~\ref{fig:App-sep-msevsiter} shows the evolution of normalized MSE (NMSE- MSE normalized by the variance of data) with training iterations for each SAE, for different levels of sparsity. Note that denser representations (higher $L_0$ and thus darker colors in Fig. \ref{fig:App-sep-msevsiter}) lead to lower NMSE. While all SAEs end up at similar levels of NMSE, their ability to extract concepts from data is markedly different (as described in Sec.~\ref{section:dataassumptions-properties}). A per-concept breakdown of training dynamics is shown in Fig.~\ref{fig:App-sep-msedynamics}. For comparison, this figure also includes the mean of the squared norm of each concept (which equals MSE if the SAE predicts the origin for all inputs), variance of each concept (which equals MSE if the SAE predicts the mean of each concept). Thus, SAEs whose MSE saturates at the concept variance are likely to be predicting the mean of the concept for all points, whereas when MSE goes below concept variance, the SAE explains within-concept variance. Also shown in gray is MSE with respect to the center of each concept, which ideally must match concept variance if the SAE reconstructs all points (which is observed in most cases).

In Fig.~\ref{fig:App-sep-nmse-stablerank-numdead}, final NMSE as a function of sparsity ($L_0$)  shows that while all SAEs have comparable MSE-sparsity curves at dense representations (high $L_0$), TopK's NMSE goes down significantly more than others. This is a consequence of TopK learning a redundant solution, by just using two latents as an orthogonal basis to represent all data. Fraction of dead latents show large numbers of dead latents at high sparsity levels for ReLU, JumpReLU and TopK, with this going down (exponentially) as representations become more dense. However, SpaDE shows significantly fewer dead latents at all levels of sparsity. Stable ranks of cosine similarities between latent representations of pairs of data points (data corr.), and between pairs of latents across all data points (latent corr.) show that SpaDE has very high stable ranks, indicating high specialization of latents. The other SAEs have comparable stable ranks, all much lower than the desirable stable rank of $6$ (equal to the number of clusters in data). 

The SAE latent activation profiles for each concept are shown as histograms in Fig.~\ref{fig:App-sep-latenthist}. While variations exist across concepts, there is a common structure to the profiles for each SAE (SpaDE appears \textit{pointy}, indicating a second mode other than zero).

Cosine similarities between latent representations of pairs of data points are shown for different levels of sparsity in Fig. \ref{fig:App-sep-datacorr}. Notice that SpaDE has the lowest cross-concept correlations of all SAEs, and these correlations do not decrease much especially in ReLU and JumpReLU. The corresponding figure with similarities between pairs of latents across all datapoints is in Fig. \ref{fig:App-sep-latentcorr}. Here, the number of dead latents increases with increasing sparsity, leading to very few active latents (only active latents are shown). Broadly, note the decrease in co-occurrences with increase in sparsity- also note how ReLU and JumpReLU result in newer correlation structures with greater sparsity.

\begin{figure}[h!]
    \centering
    \includegraphics[width=\linewidth]{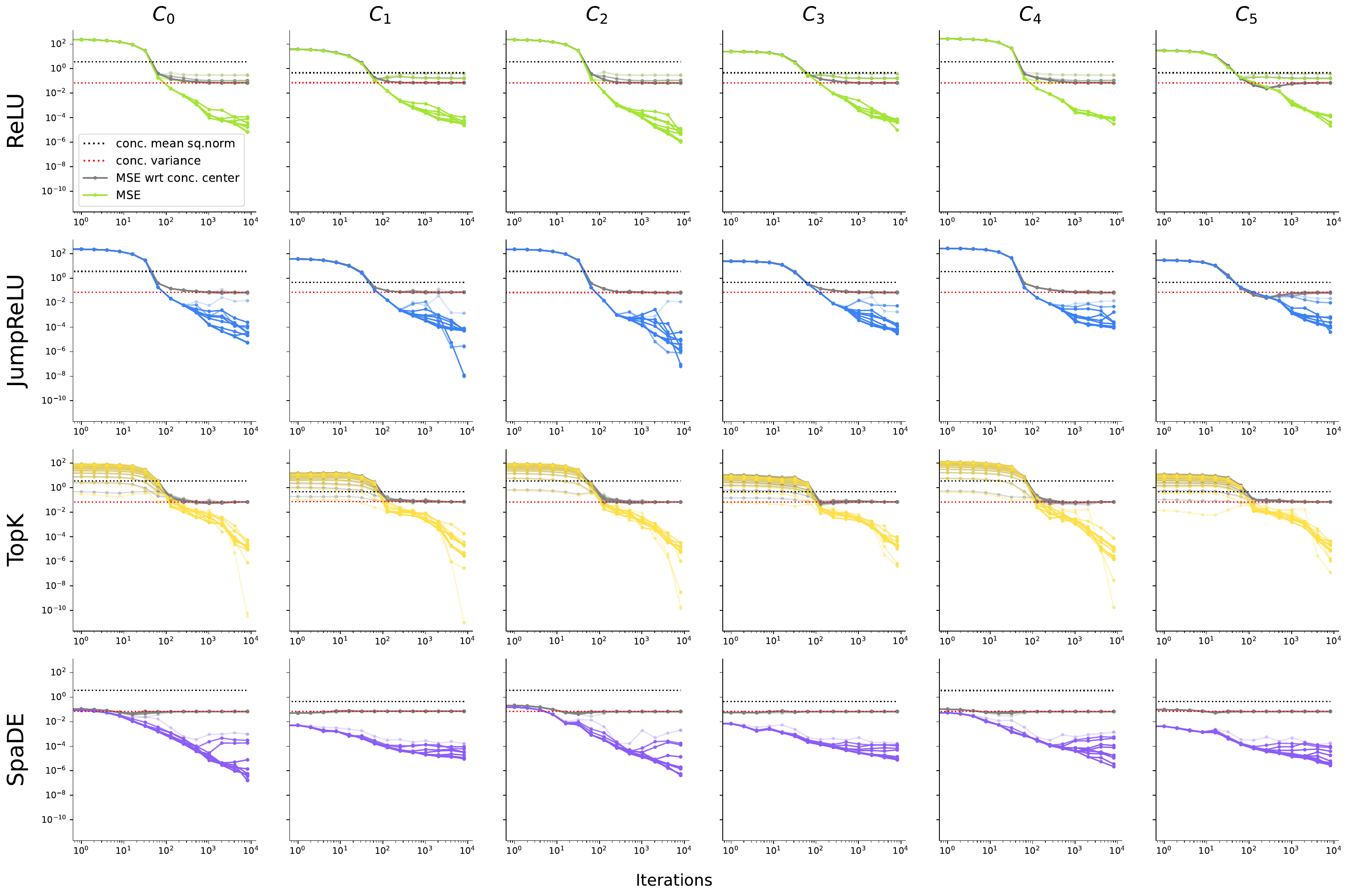}
    \caption{Training dynamics for each concept (column) across SAEs (rows) for \textit{separability experiment}: colored solid lines are MSE, with intensity of color proportional to $L_0$. Gray lines show MSE of SAE predictions with respect to the center of each cluster; intensity is again proportional to $L_0$. . Black dotted line shows the mean squared norm of each cluster, which would equal the MSE if the SAE predicted the origin for all datapoints. Red dotted line shows variance of each cluster, which again equals MSE if an SAE predicts the center of the cluster. Note that when a model reconstructs data well, MSE wrt cluster center equals the variance of the cluster (as observed here)}
    \label{fig:App-sep-msedynamics}
\end{figure}

\newpage
\begin{figure}[h!]
    \centering
    \includegraphics[width=\linewidth]{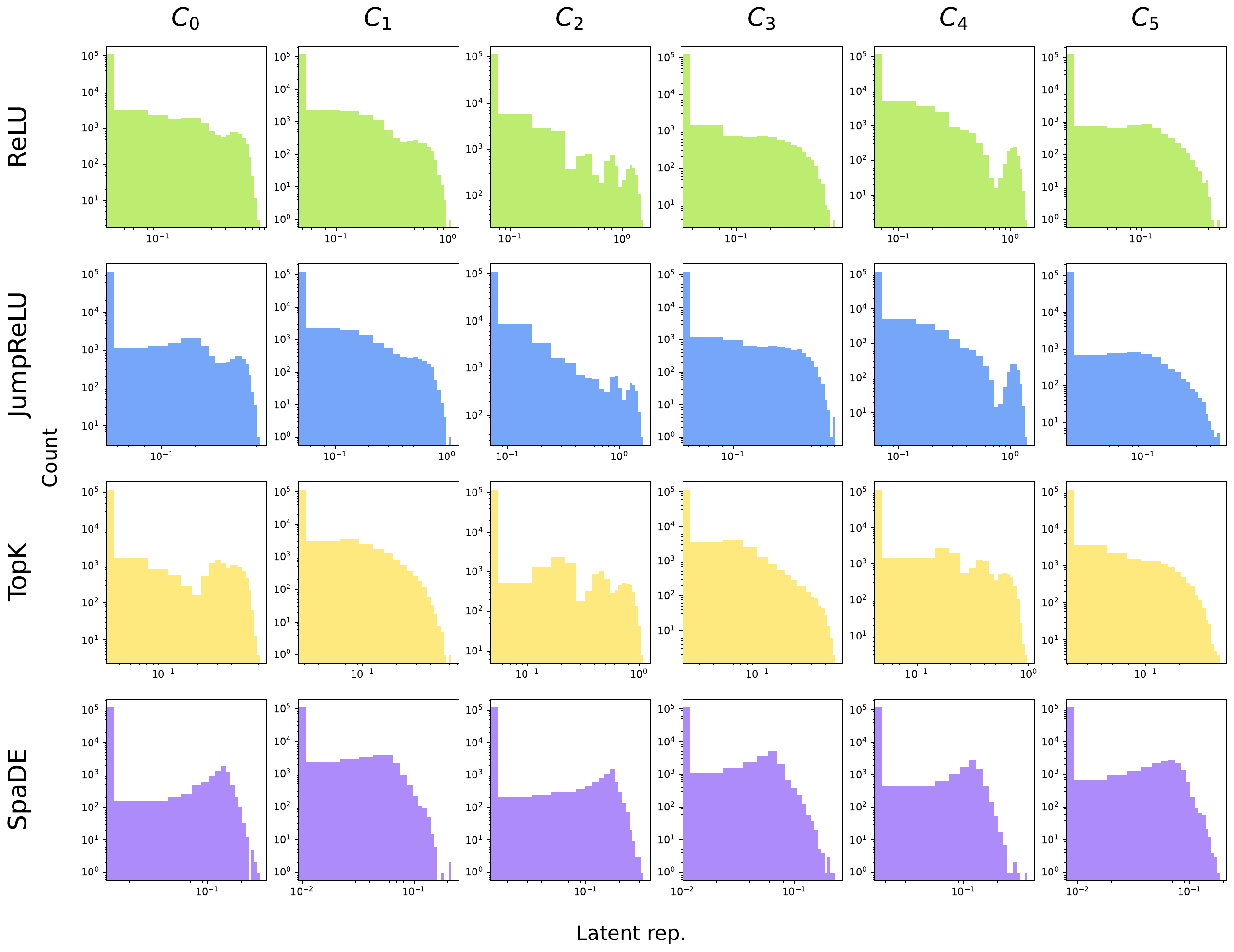}
    \caption{Histogram of latent representations for each concept of various SAEs on the \textit{separability experiment}.}
    \label{fig:App-sep-latenthist}
\end{figure}

\begin{figure}[h!]
    \centering
    \includegraphics[width=\linewidth]{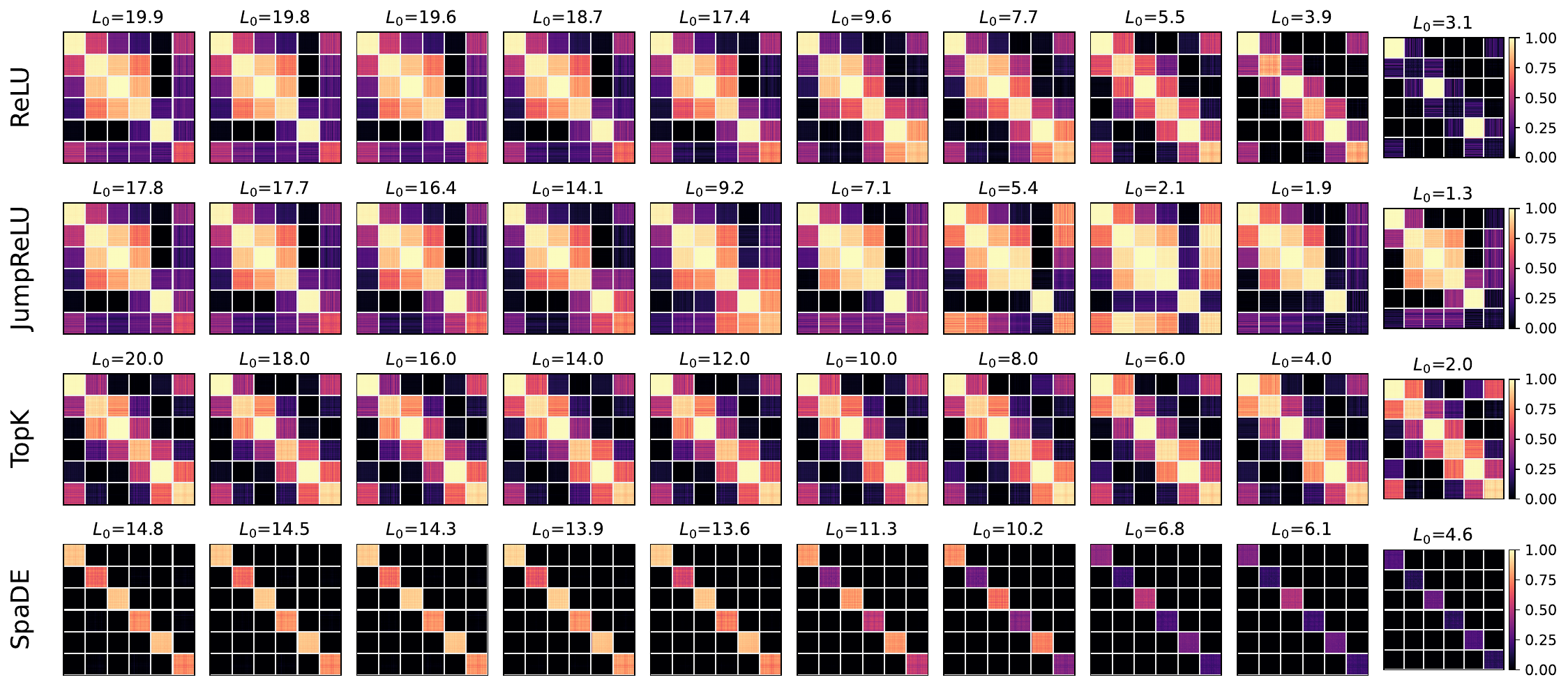}
    \caption{Data correlations for various sparsity levels on the \textit{separability experiment}: Pairwise cosine similarities between SAE latent representations of datapoints. White lines separate different concepts.}
    \label{fig:App-sep-datacorr}
\end{figure}

\begin{figure}[h!]
    \centering
    \includegraphics[width=\linewidth]{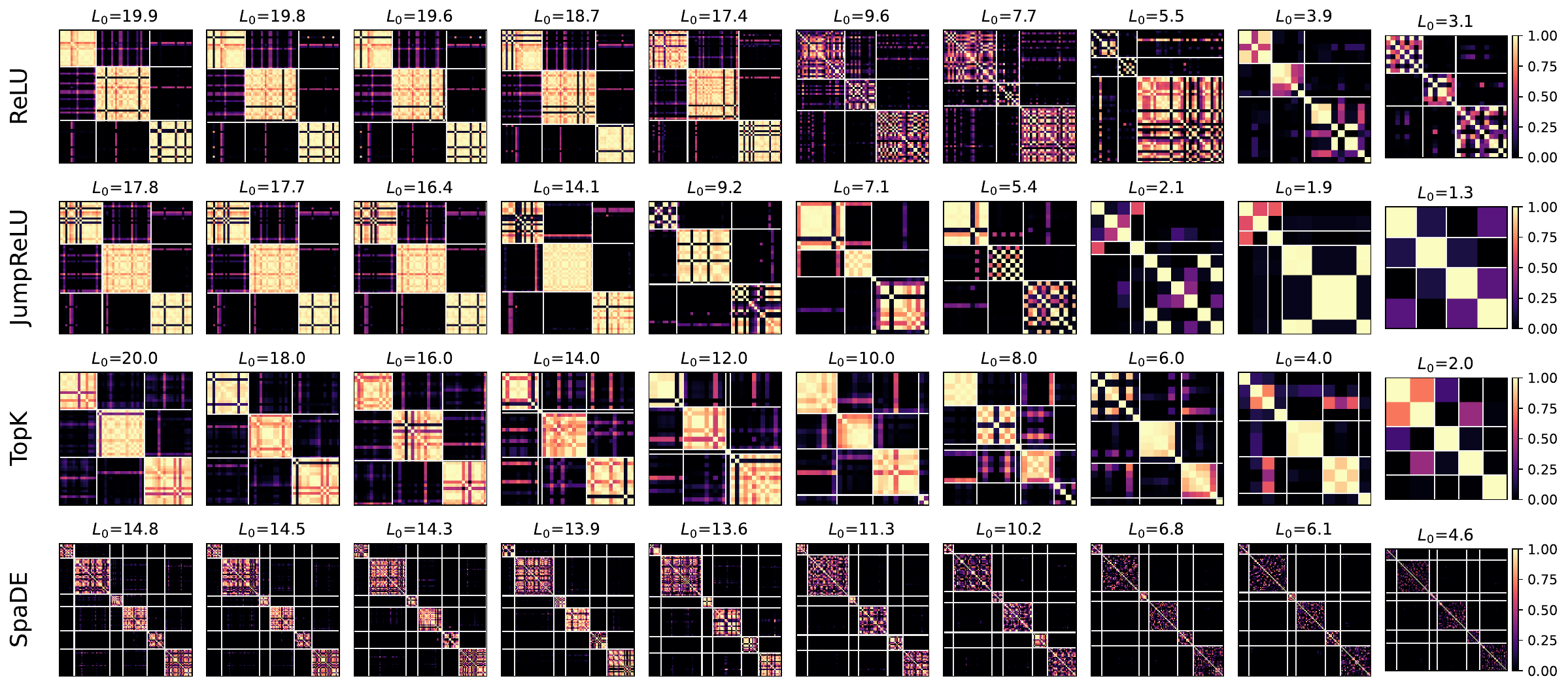}
    \caption{Latent correlations for various sparsity levels on the \textit{separability experiment}: Pairwise cosine similarities: pairwise cosine similarities between different SAE latents, computed across data from all concepts.}
    \label{fig:App-sep-latentcorr}
\end{figure}
\clearpage

\newpage
\subsection{Heterogeneity Experiment}

The overall training dynamics (on data from all concepts) is shown in Fig. \ref{fig:App-hetero-msevsiter}- note, again, that for low sparsity (high $L_0$, darker color) all SAEs reach similar levels of NMSE, but differ for higher sparsity levels. The per-concept breakdown of MSE, and comparison with mean squared norm, concept variance and MSE with respect to the center of each concept is in Fig. \ref{fig:App-sep-msedynamics} . The \textit{kink} in gray lines is precisely the point where the model transitions from learning to represent the mean, to learning to explain the within-concept variance, clearly demonstrating two phases in learning: learning the right \textit{scale} for the data (since initial model predictions may not match the true scale of data), thereby predicting the mean well, followed by learning the distribution of the data. 

Fig.~\ref{fig:App-hetero-latenthist} shows latent activation profiles for each concept and each SAE ($k=32$ in TopK). Since TopK with $k=32$ cannot allocate enough latents for large intrinsic dimension concepts, it increases activations on smaller number of concepts instead. Cosine similarities between SAE latent representations for pairs of data points, and pairs of latents across all datapoints, is shown for varying levels of sparsity in Fig. \ref{fig:App-hetero-datacorr}, \ref{fig:App-hetero-latentcorr} respectively. All SAEs (except JumpReLU) do a decent job at reducing correlations between pairs of data points, but in the latent correlation plots, we see how TopK fails to adaptively allocate latents to heterogenous concepts, especially at moderate levels of sparsity, while the other SAEs do well- have different sized blocks in block-structured matrix. 

\begin{figure}[h!]
    \centering
    \includegraphics[width=\linewidth]{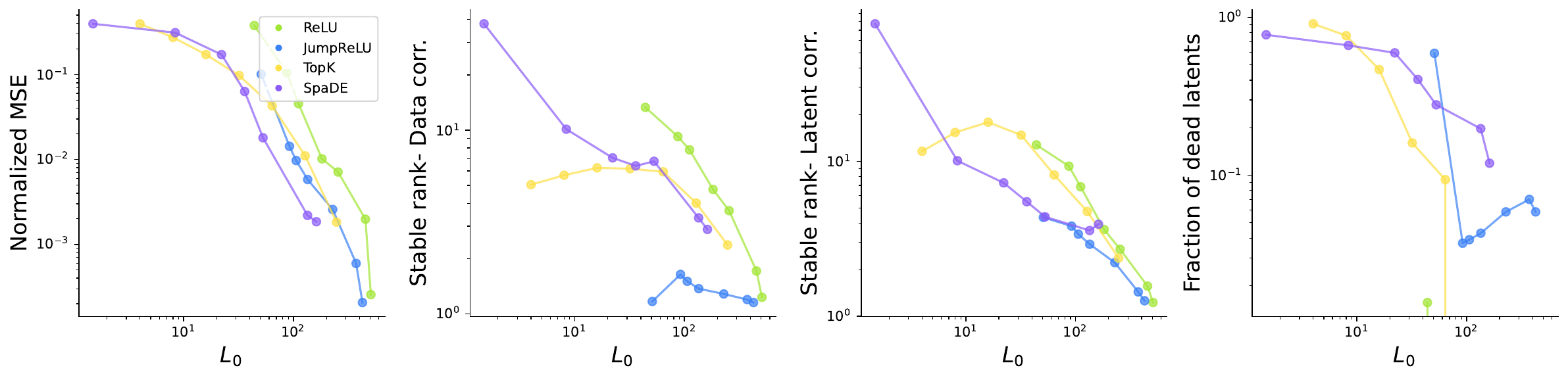}
    \caption{Normalized MSE (normalized with variance of data), Stable ranks (of data correlations, latent correlations matrices), and fraction of dead latents as a function of sparsity ($L_0$) for the \textit{heterogeneity experiment} (Sec.~\ref{section:heterogeneityexpt})}
    \label{fig:App-hetero-nmse-stableranks-numdead}
\end{figure}

\begin{figure}[h!]
    \centering
    \includegraphics[width=\linewidth]{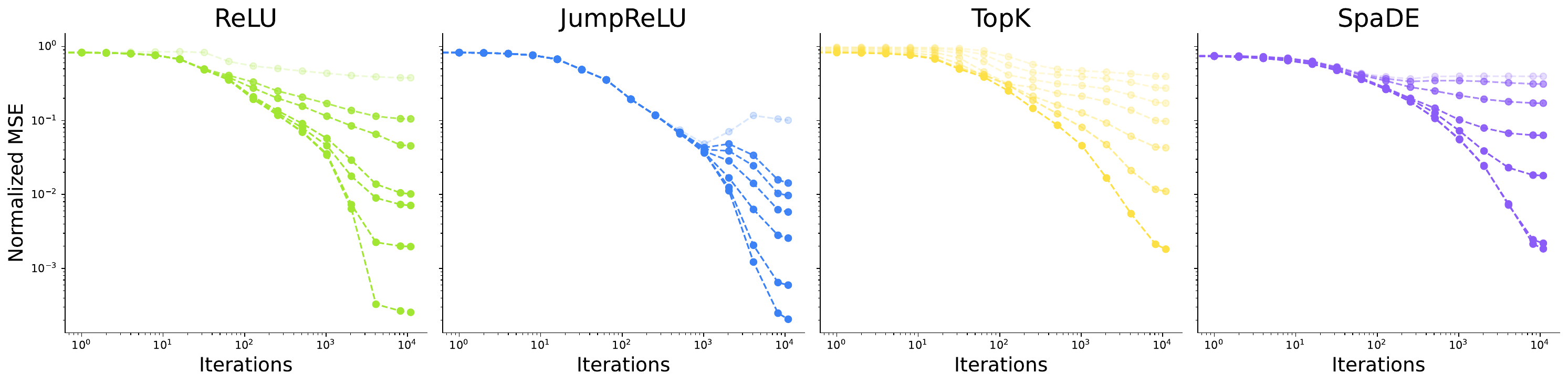}
    \caption{Evolution of normalized MSE with training iterations for various SAEs on the \textit{heterogeneity experiment}. Color intensity is proportional to $L_0$ (darker colors imply more dense SAE latents).}
    \label{fig:App-hetero-msevsiter}
\end{figure}

\begin{figure}[h!]
    \centering
    \includegraphics[width=\linewidth]{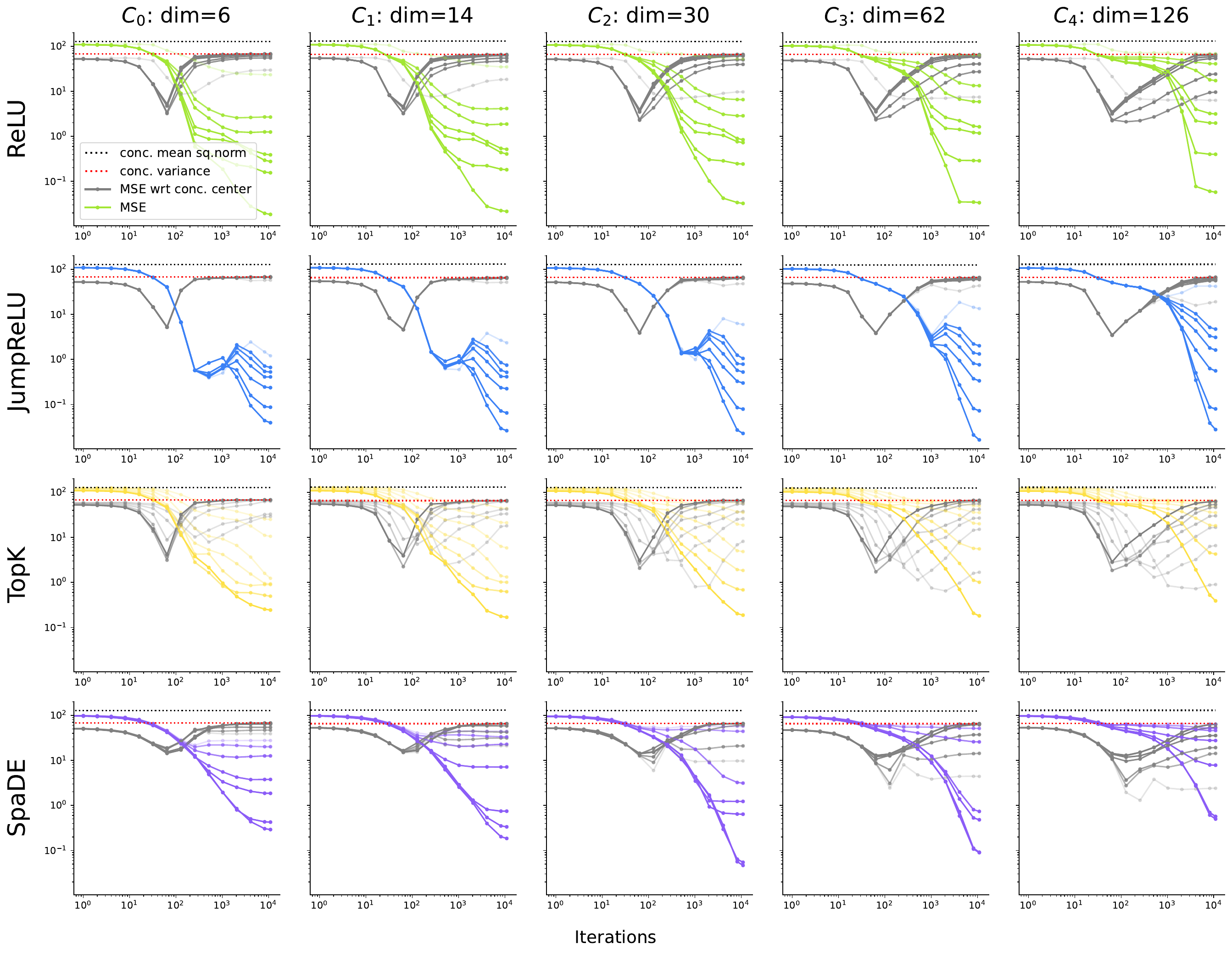}
    \caption{Training dynamics for each concept (column) across SAEs (rows) for \textit{heterogeneity experiment}: colored solid lines are MSE, with intensity of color proportional to $L_0$. Gray lines show MSE of SAE predictions with respect to the center of each cluster; intensity is again proportional to $L_0$. . Black dotted line shows the mean squared norm of each cluster, which would equal the MSE if the SAE predicted the origin for all datapoints. Red dotted line shows variance of each cluster, which again equals MSE if an SAE predicts the center of the cluster. Note that when a model reconstructs data well, MSE wrt cluster center equals the variance of the cluster (as observed here)}
    \label{fig:App-hetero-msedynamics}
\end{figure}

\begin{figure}[h!]
    \centering
    \includegraphics[width=\linewidth]{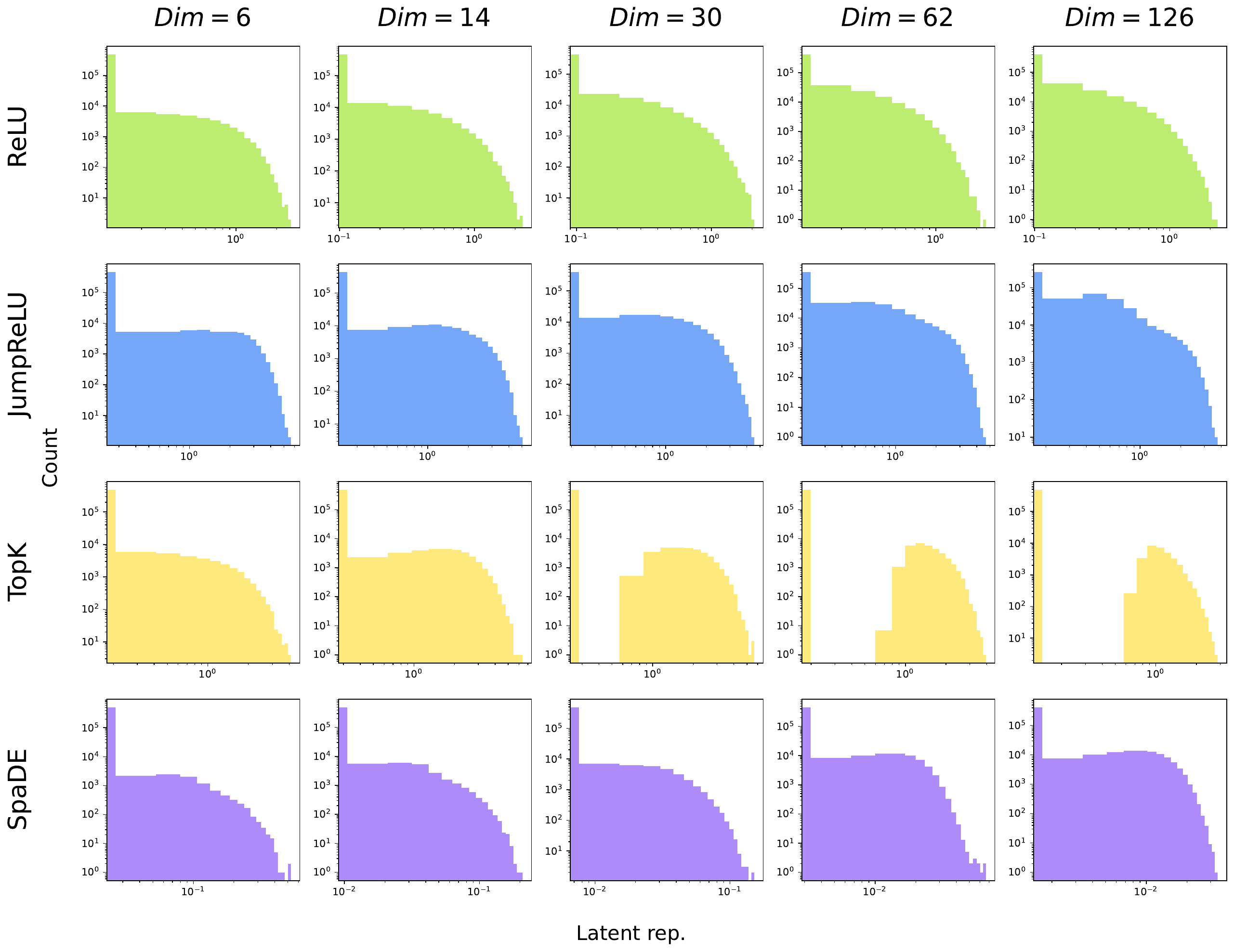}
    \caption{Histogram of latent representations for each concept of various SAEs on the \textit{heterogeneity experiment}.}
    \label{fig:App-hetero-latenthist}
\end{figure}

\begin{figure}[h!]
    \centering
    \includegraphics[width=\linewidth]{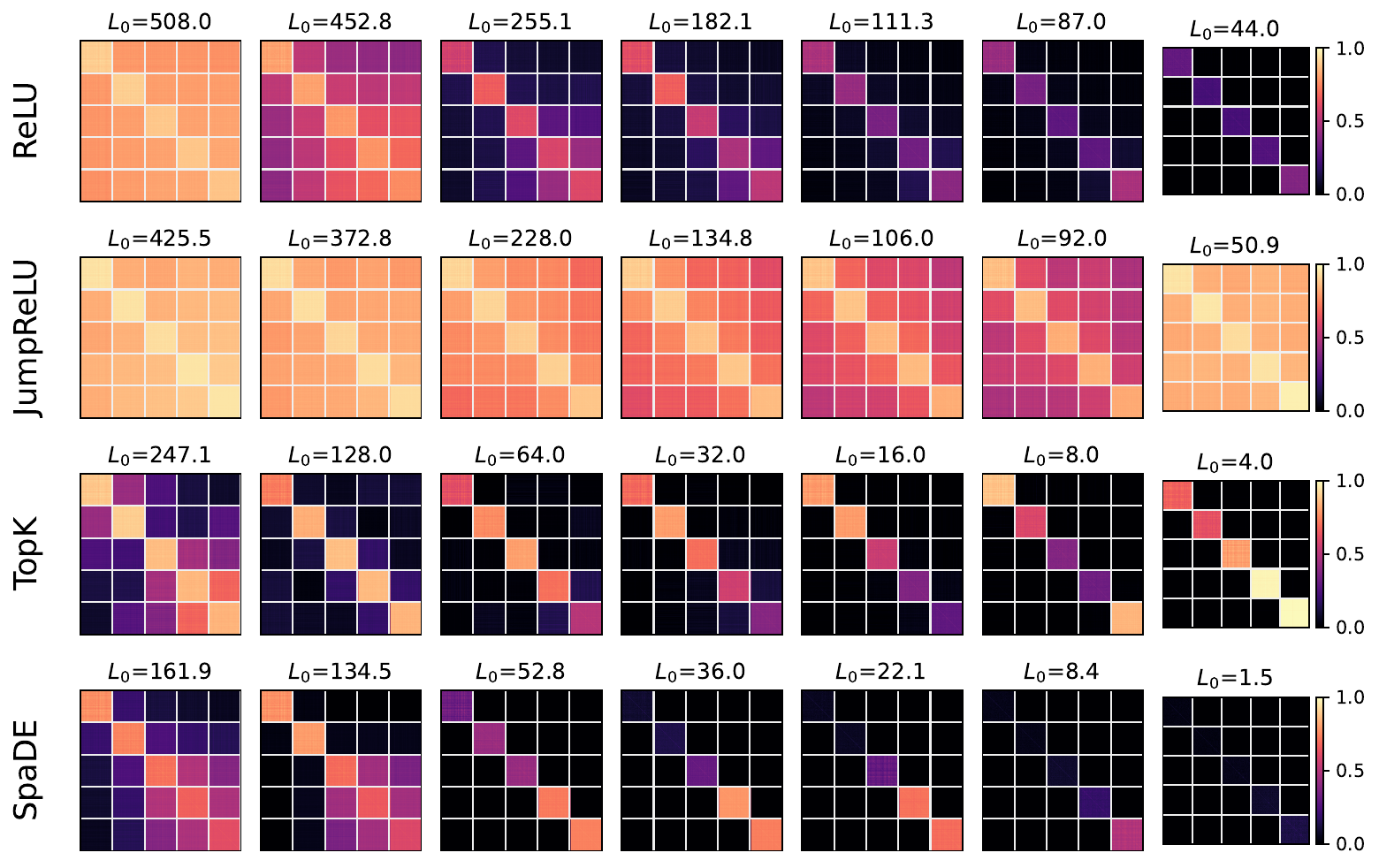}
    \caption{Data correlations for various sparsity levels on the \textit{heterogeneity experiment}: Pairwise cosine similarities between SAE latent representations of datapoints. White lines separate different concepts.}
    \label{fig:App-hetero-datacorr}
\end{figure}

\begin{figure}[h!]
    \centering
    \includegraphics[width=\linewidth]{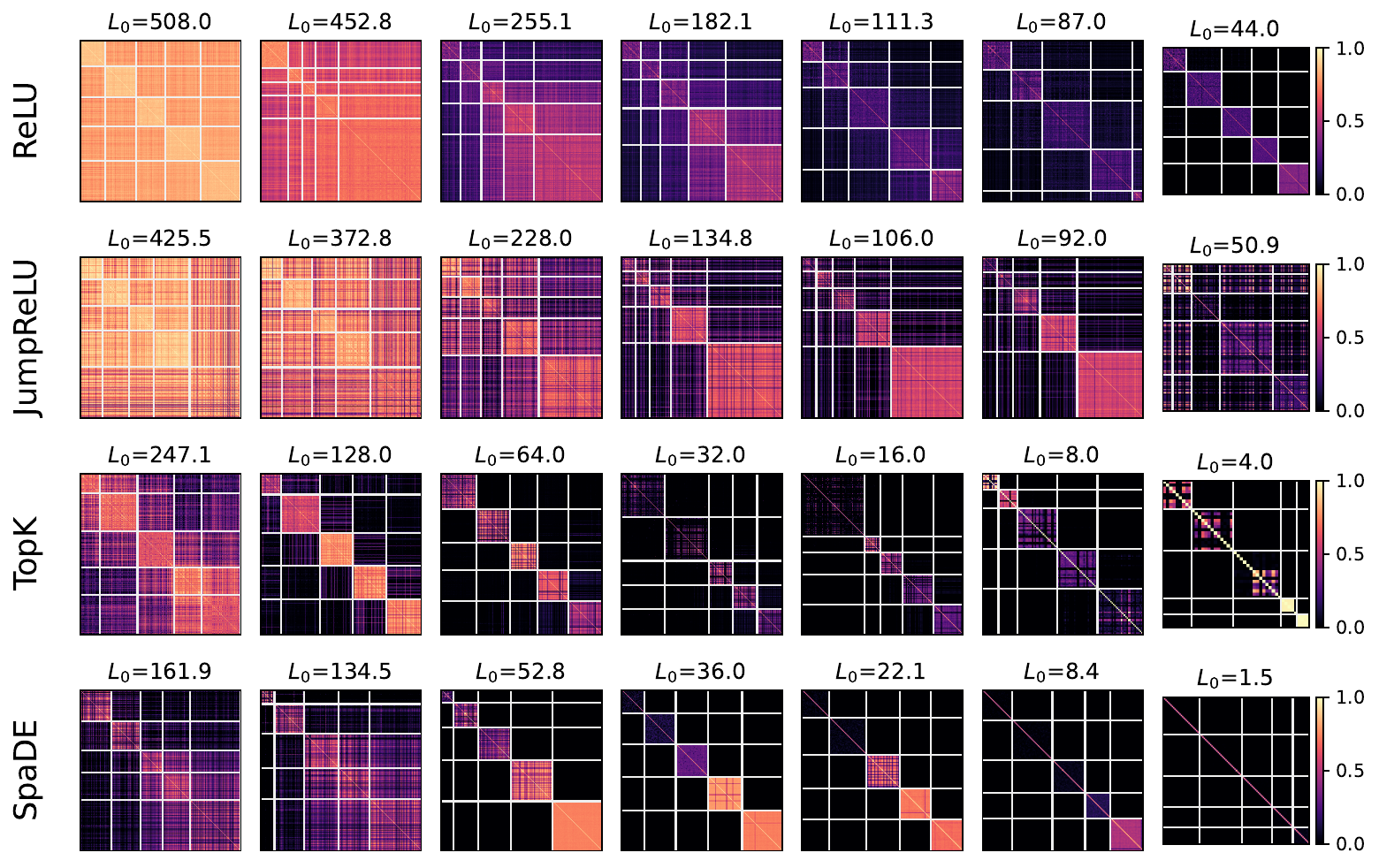}
    \caption{Latent correlations for various sparsity levels on the \textit{heterogeneity experiment}: Pairwise cosine similarities: pairwise cosine similarities between different SAE latents, computed across data from all concepts.}
    \label{fig:App-hetero-latentcorr}
\end{figure}
\clearpage

\newpage
\subsection{Formal Language Experiments}

In this section, we report several more results in the formal language experimental setup.
Specifically, we show how with changing sparsity of the latent code, fidelity metrics, e.g., normalized MSE scales changes and stable rank of both data and latent correlations changes (Fig.~\ref{fig:fl_fidelity}); how monosemanticity changes, i.e., how F1 scores averaged across latents and the concept they achieve maximum F1 score on change (indicating their specialization to that concept) (Fig.~\ref{fig:fl_mono_dead} Left); and how percentage of dead latents change (Fig.~\ref{fig:fl_mono_dead} Right).
These results are repeated at the concept-level, i.e., at the level of parts-of-speech, in Figs.~\ref{fig:fl_per_pos_fidelity},~\ref{fig:fl_per_pos_dead_latents}.
Inline with results on heatmaps demonstrating correlation between sparse codes of samples from different concepts and between vector denoting which samples a given latent activates for, we retrieve results in Fig.~\ref{fig:fl_data_corrmap},~\ref{fig:fl_latents_corrmap}.
The results above are perfectly inline with our findings from the main paper, e.g., that SpaDE achieves highly monosemantic features.
The new and intriguing results involve demonstrations of how effective SpaDE can be at discerning position of a concept (part-of-speech) in a sentence, when compared to other protocols which learn a more uniform representation.

Further, we also provide 2D and 3D PCA visualizations of different SAEs' retrieved latents in two different manners: (i) assess which datapoints a latent activates for and project it into a low-dimensional space identified using PCA, and (ii) assess which latents a datapoint activates, and project this activation vector.
The former helps assess how monosemantic latents are, i.e., whether they activate for specific concepts, and the latter helps assess how specific latents are, i.e., whether a datapoint only activates a specific latent and hence there is no regularity present.
Results show most SAEs, when they perform well, organize latents in a very structured manner (like a tetrahedron), but SpaDE succeeds at this throughout.

\begin{figure}[h!]
    \centering
    \vspace{20pt}
    \includegraphics[width=\linewidth]{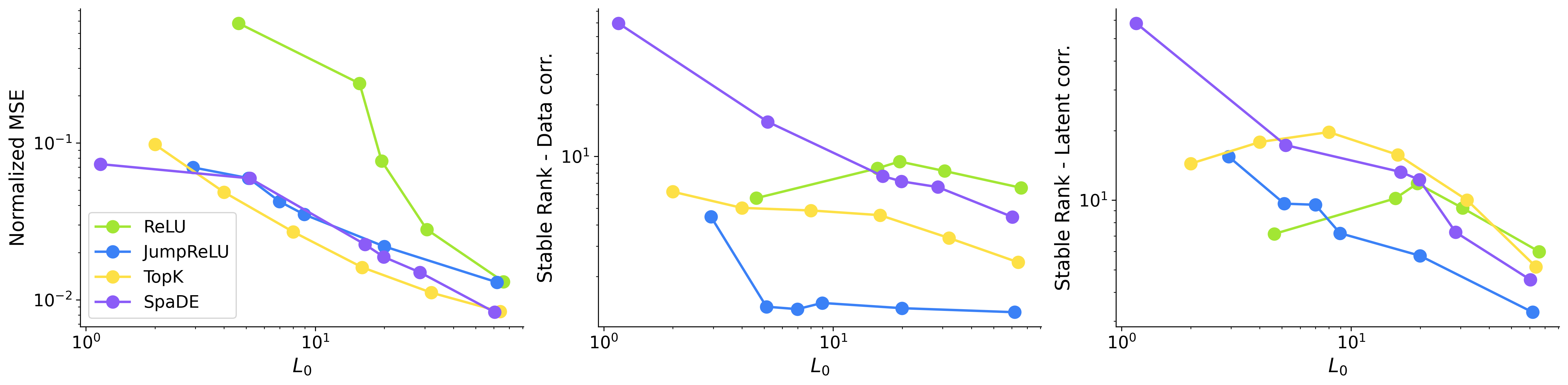}
    \caption{Normalized MSE and Stable ranks as a function of sparsity in the Formal Language setup.
    \vspace{20pt}}
    \label{fig:fl_fidelity}
\end{figure}

\begin{figure}[h!]
    \centering
    \includegraphics[width=\linewidth]{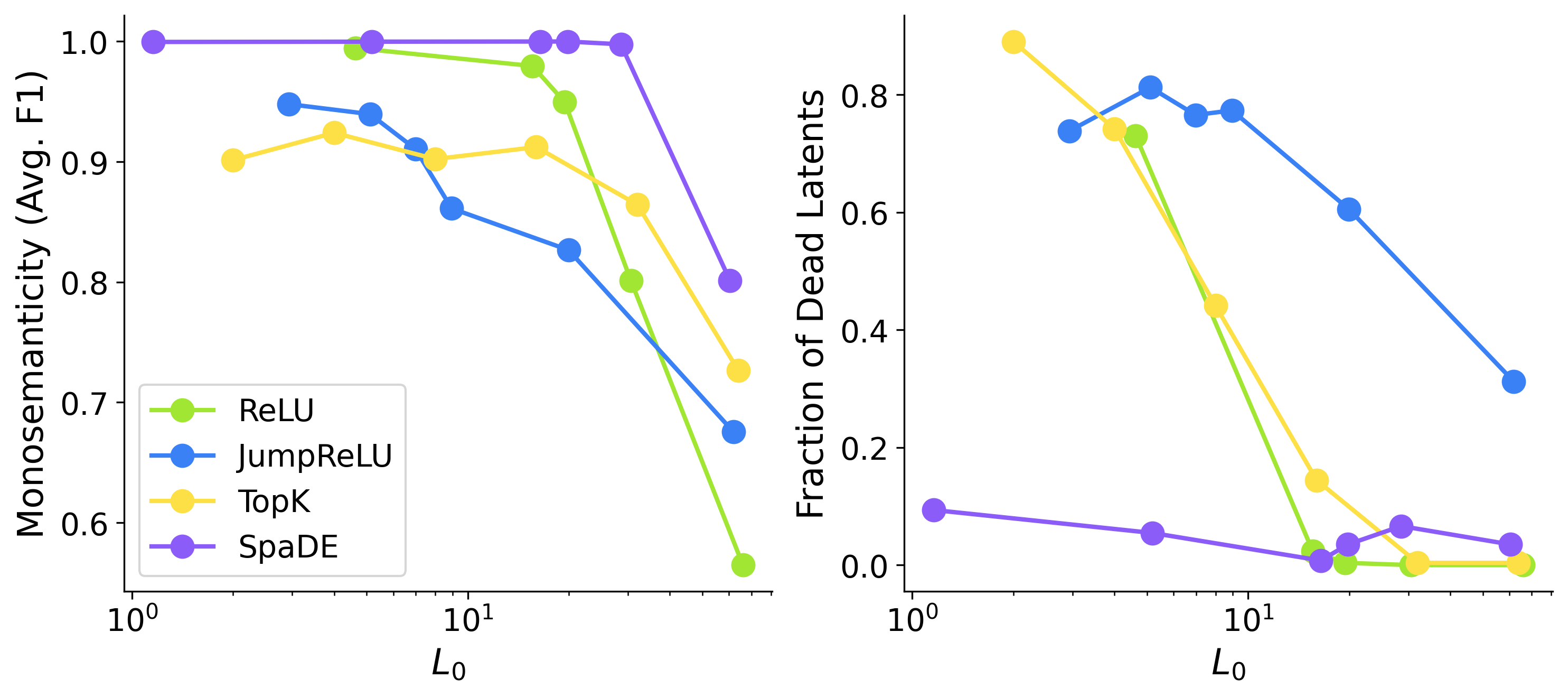}
    \caption{Monosemanticity (F1 scores averaged over latents) and fraction of dead latents as a function of sparsity for different SAEs in the Formal Language setup.
    \vspace{20pt}}
    \label{fig:fl_mono_dead}
\end{figure}

\begin{figure}[h!]
    \centering
    \includegraphics[width=\linewidth]{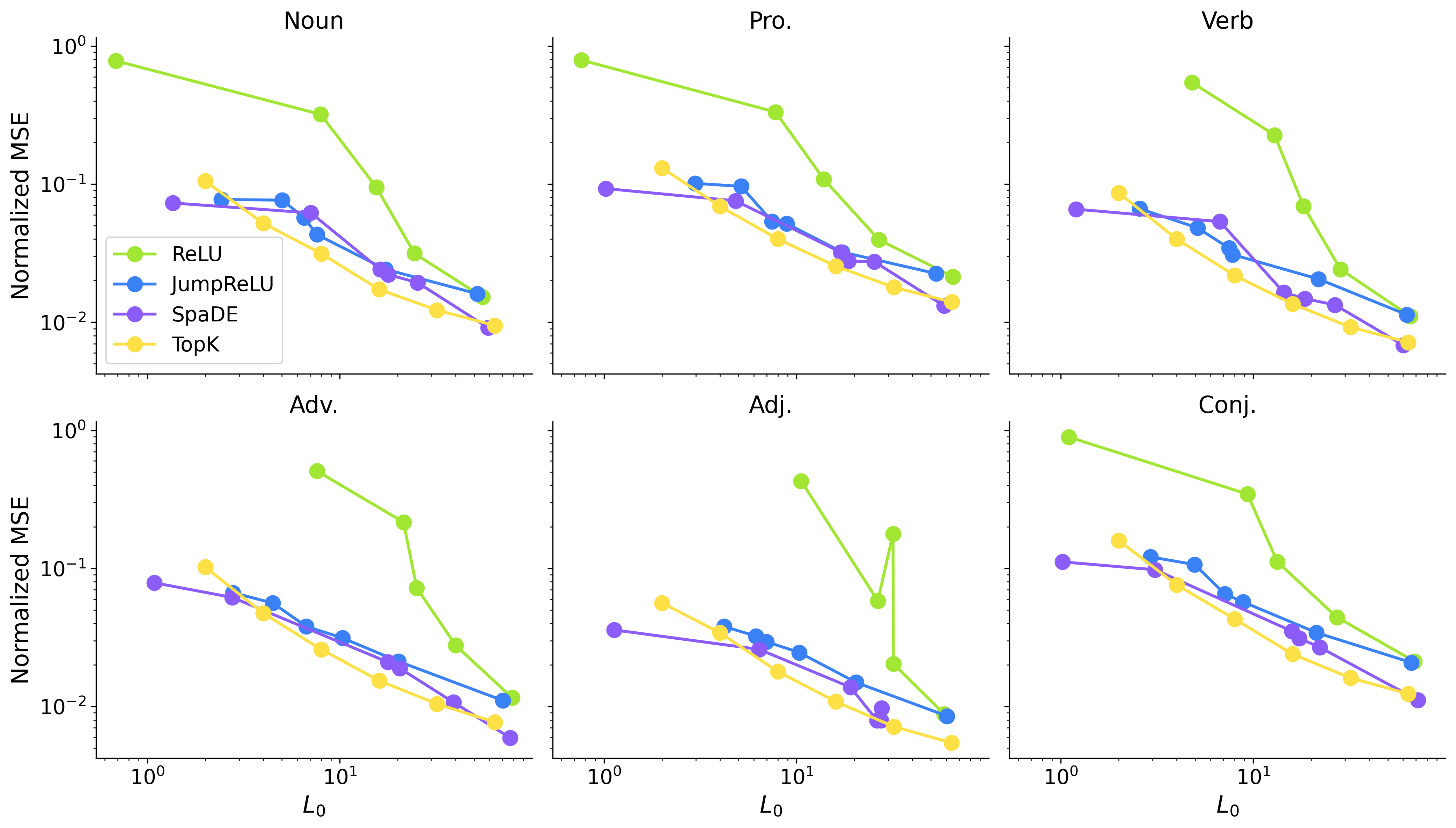}
    \caption{Normalized MSE decomposed by concepts (parts-of-speech) and plotted as a function of sparsity in the Formal Language setup.
    \vspace{10pt}}
    \label{fig:fl_per_pos_fidelity}
\end{figure}

\begin{figure}[h!]
    \centering
    \includegraphics[width=\linewidth]{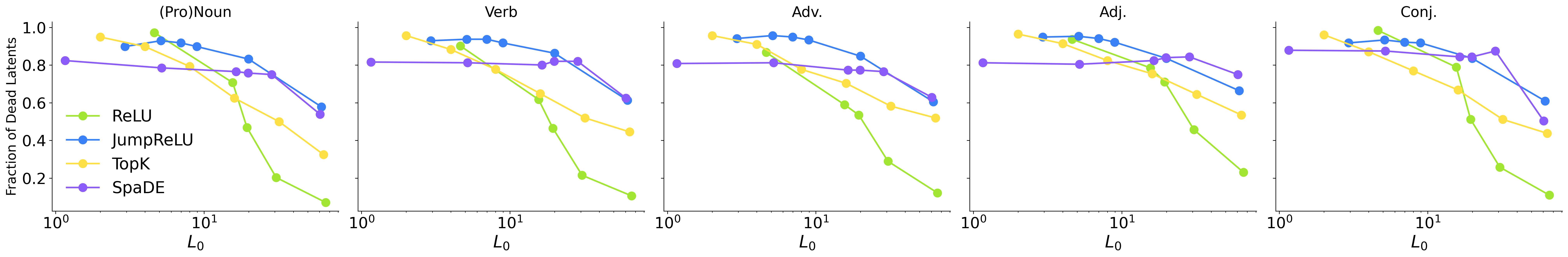}
    \caption{Percentage of Dead Latents decomposed by concepts (parts-of-speech) and plotted as a function of sparsity in the Formal Language setup. Note that in such a concept-conditioned count of dead latents, one ends up counting both the latents that are always inactive and ones that are inactive for the specific concept under consideration. 
    \vspace{10pt}}
    \label{fig:fl_per_pos_dead_latents}
\end{figure}

\begin{figure}[h!]
    \centering
    \includegraphics[width=\linewidth]{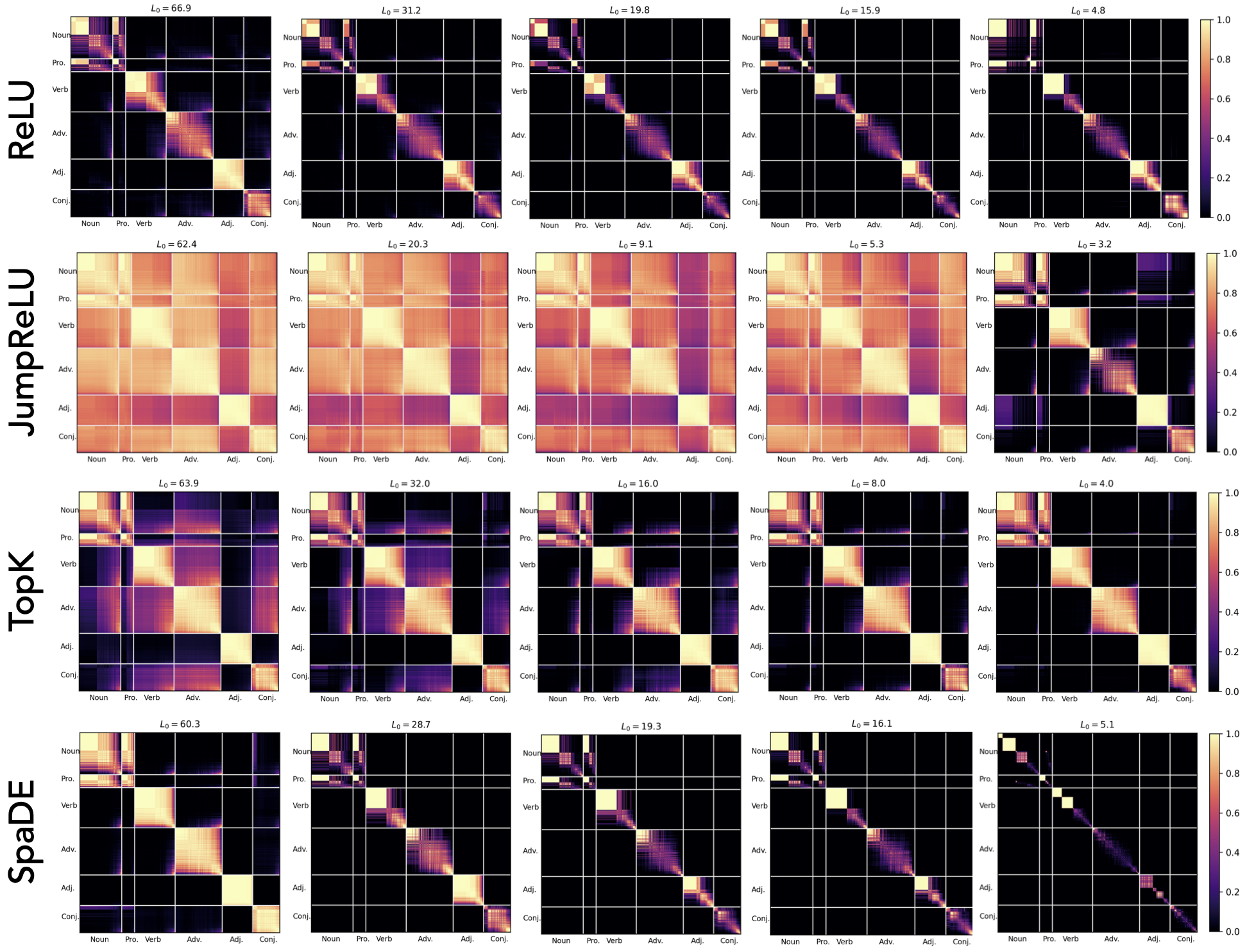}
    \caption{Correlation between sparse codes of different concepts (parts-of-speech) in the Formal Language setup. Datapoints for different concepts are sorted according to which concept they come from (using a predefined order on the parts-of-speech) and according to their position in a sentence, hence highlighting position dependence. Lines demarcate boundaries at which tokens corresponding to different concepts start / end. 
    \vspace{10pt}}
    \label{fig:fl_data_corrmap}
\end{figure}

\begin{figure}[h!]
    \centering
    \includegraphics[width=\linewidth]{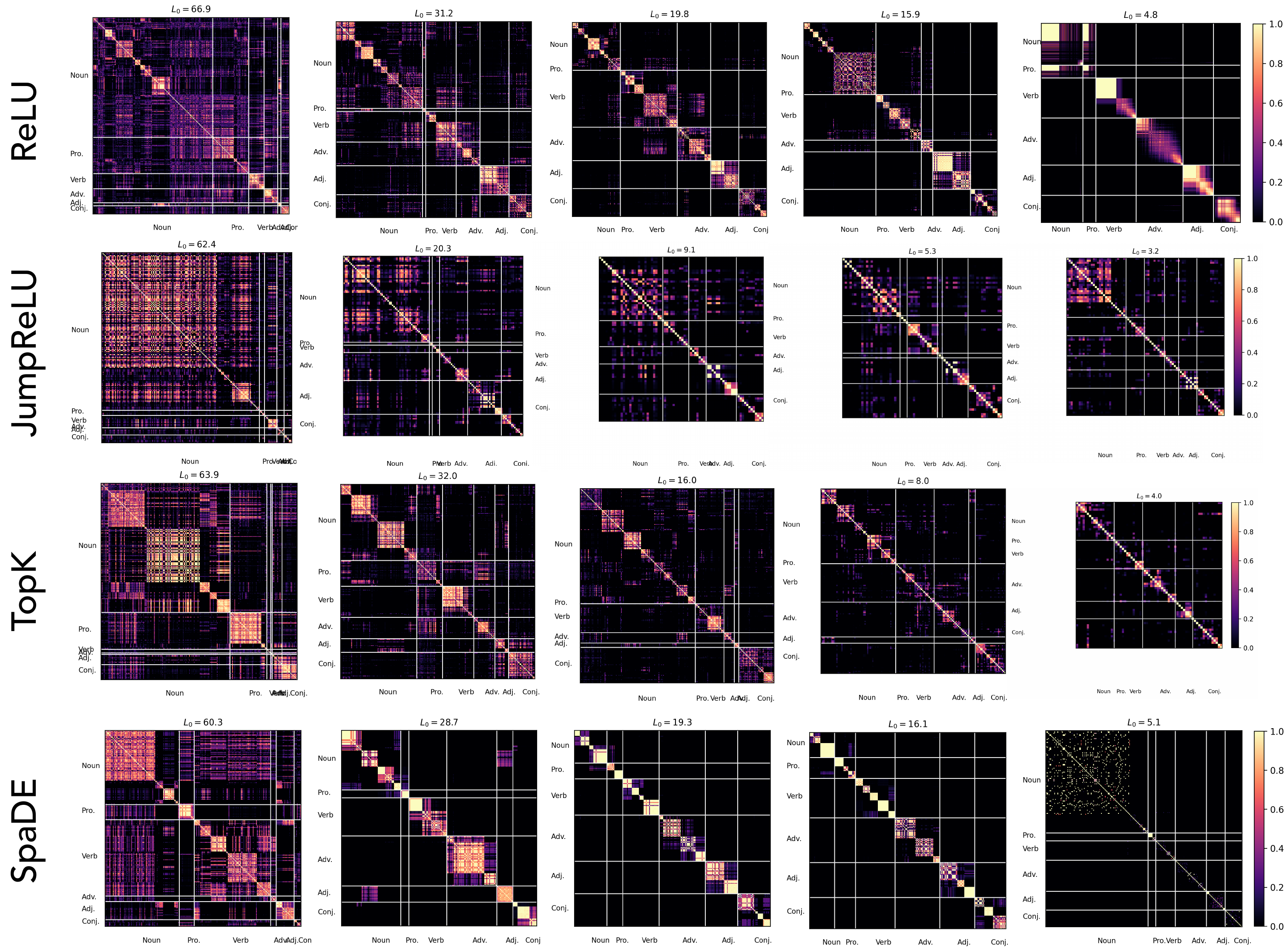}
    \caption{Correlation between which datapoints a latent activates for in the Formal Language setup. Latents are sorted according to which concept (part-of-speech) they most strongly activated for (as measured using F1-score). White lines demarcate boundaries at which latents of different concepts start / end.
    \vspace{10pt}}
    \label{fig:fl_latents_corrmap}
\end{figure}

\begin{figure}[h!]
    \centering
    \includegraphics[width=\linewidth]{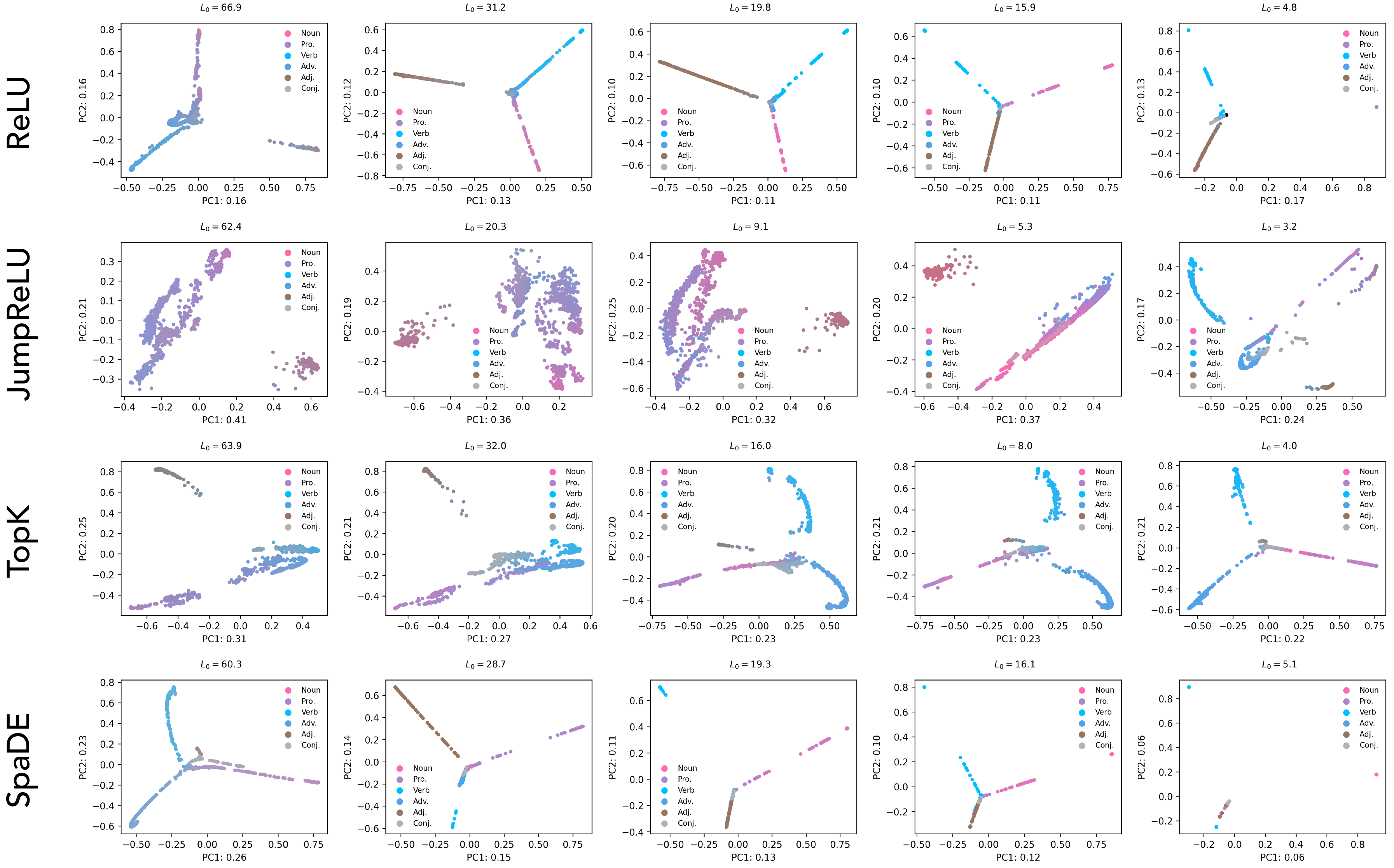}
    \caption{2D PCA visualization of sparse codes corresponding to different concepts (parts-of-speech). 
    \vspace{10pt}}
    \label{fig:fl_data_in_latent_space_2d}
\end{figure}

\begin{figure}[h!]
    \centering
    \includegraphics[width=\linewidth]{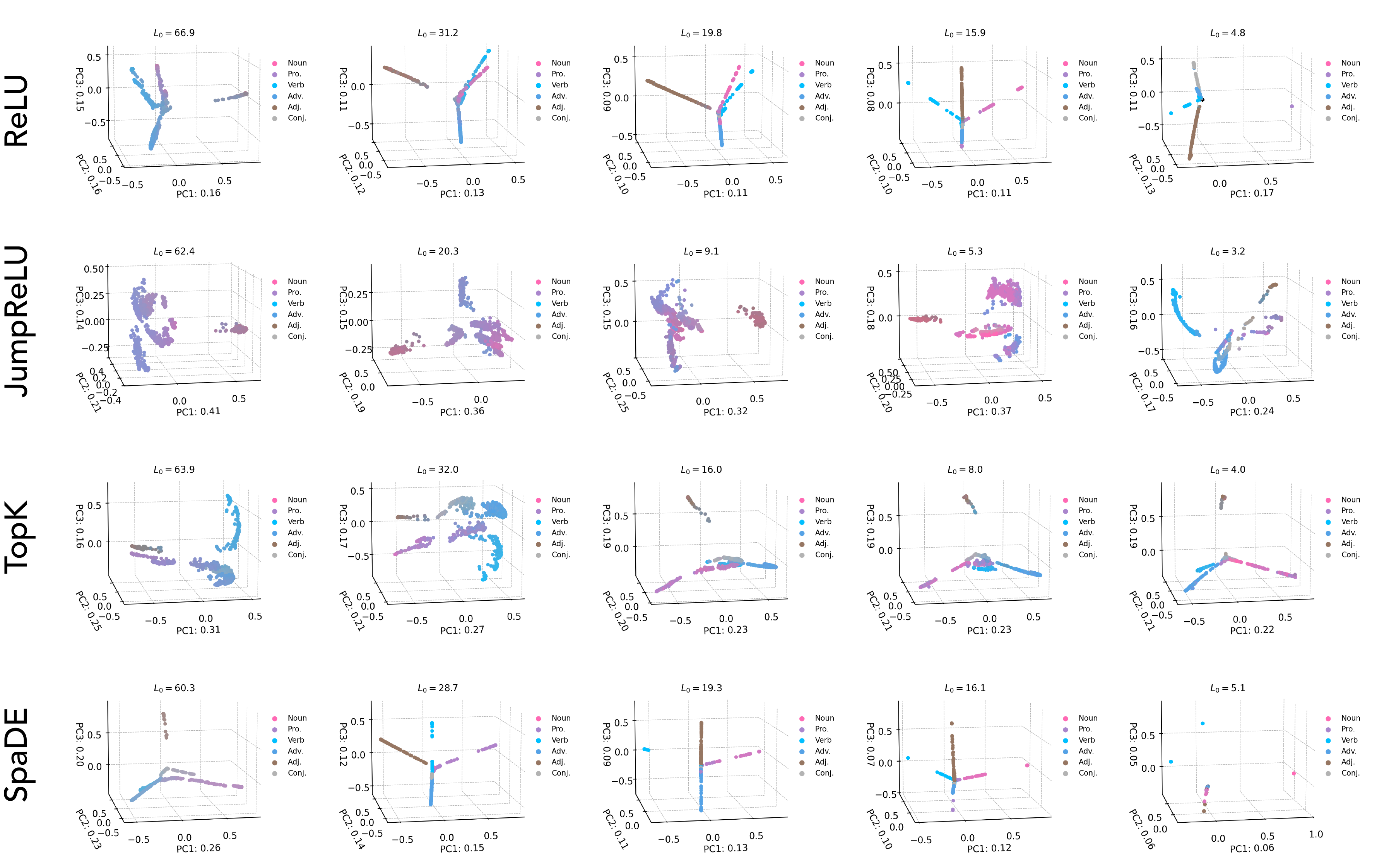}
    \caption{3D PCA visualization of sparse codes corresponding to different concepts (parts-of-speech). 
    \vspace{10pt}}
    \label{fig:fl_data_in_latent_space_3d}
\end{figure}

\begin{figure}[h!]
    \centering
    \includegraphics[width=\linewidth]{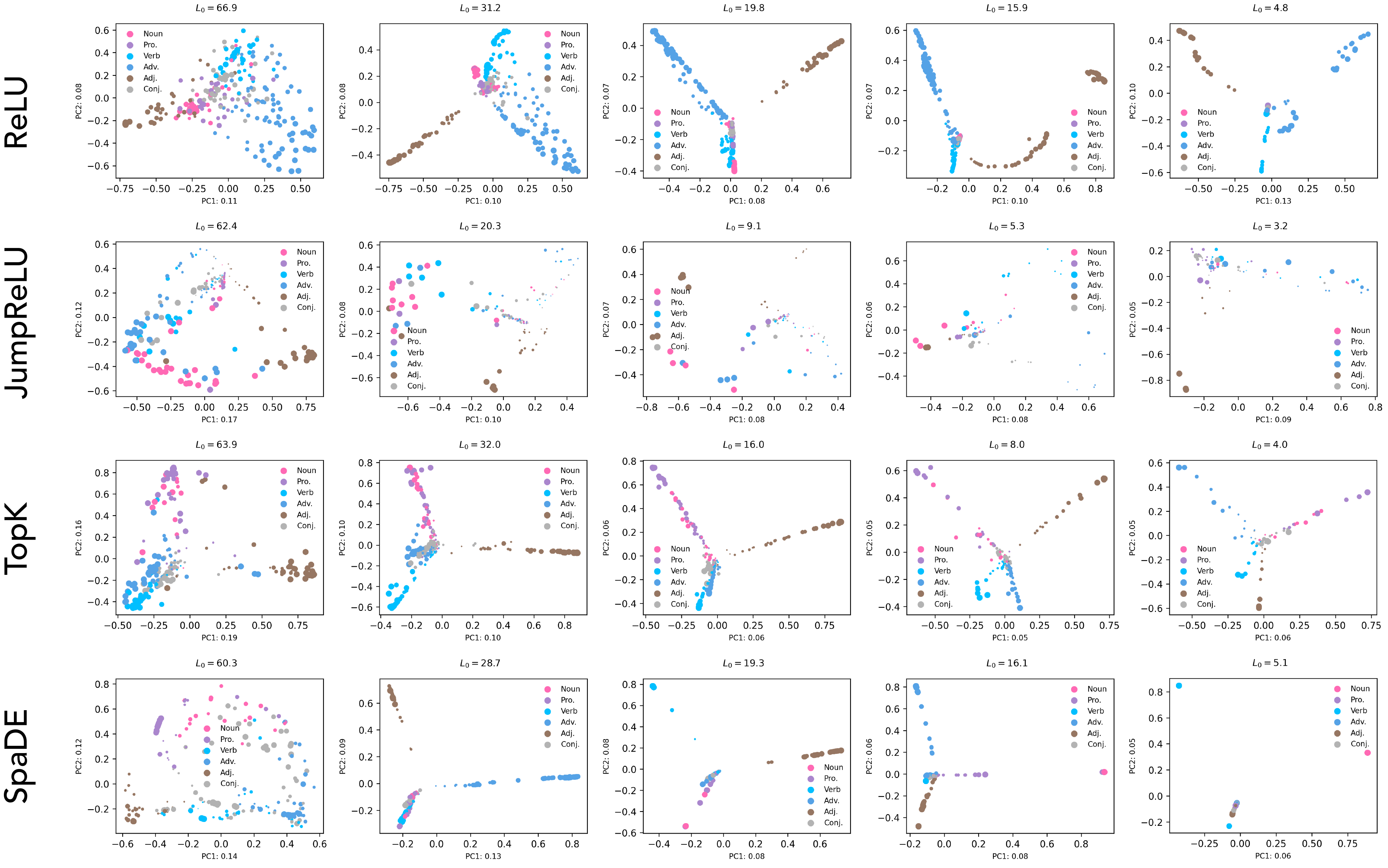}
    \caption{2D PCA visualization of a matrix whose elements capture which tokens a latent activates for. That is, which concepts (parts-of-speech) the latent is specialized towards, if any.
    \vspace{10pt}}
    \label{fig:fl_latents_in_data_space_2d}
\end{figure}

\begin{figure}[h!]
    \centering
    \includegraphics[width=\linewidth]{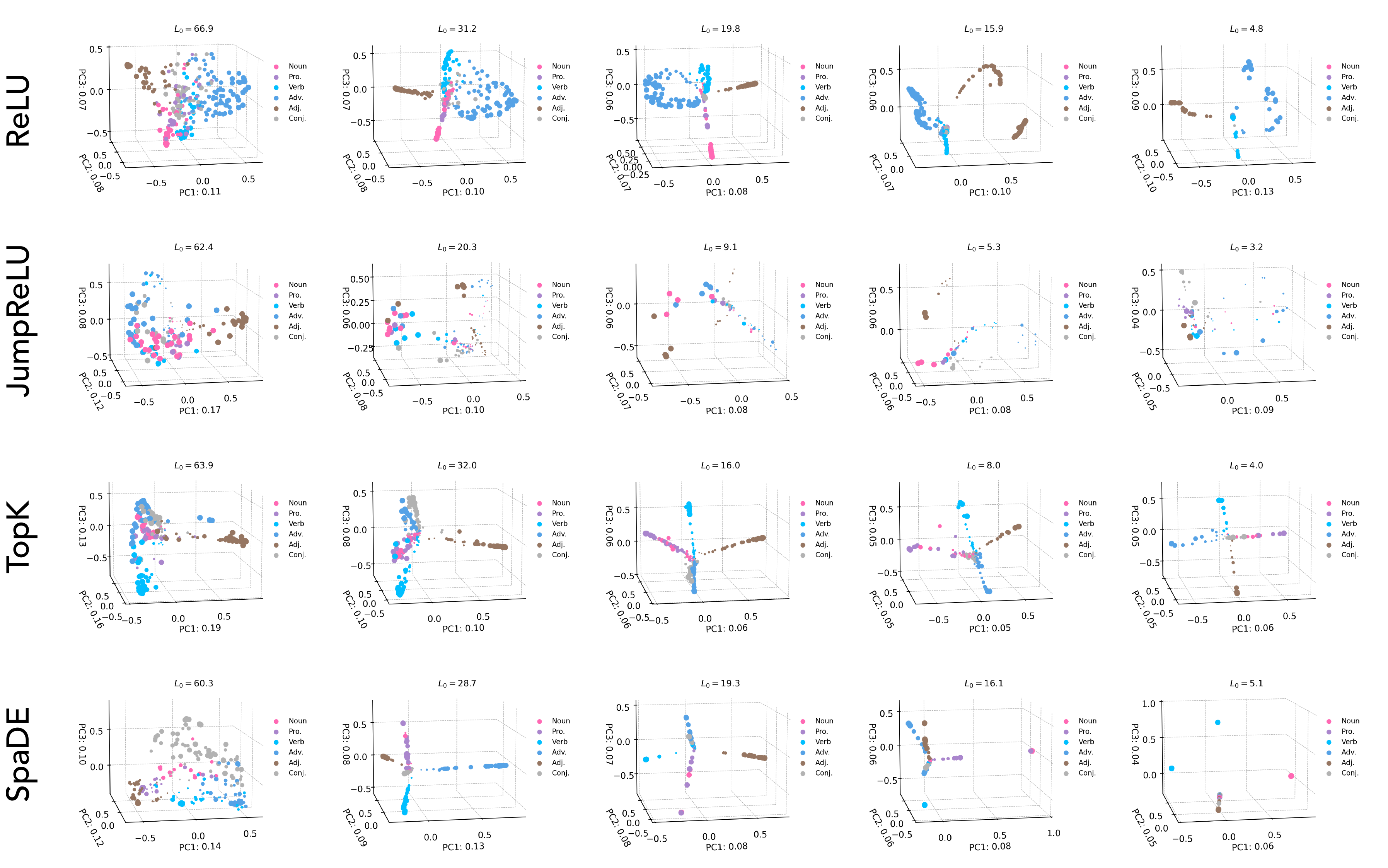}
    \caption{3D PCA visualization of a matrix whose elements capture which tokens a latent activates for. That is, which concepts (parts-of-speech) the latent is specialized towards, if any.
    \vspace{10pt}}
    \label{fig:fl_latents_in_data_space_3d}
\end{figure}
\clearpage

\newpage
\subsection{Vision Experiment}
\label{appendix-section-vision}

In this section, we show, visually, the concepts SpaDE has learnt in the vision experiment, by visualizing feature attribution maps for inputs from each class from Imagenette. We perform this visualization for the top concepts for each class for five classes- Tench (Fig. \ref{fig:App-vision-attribution-tench}), Chainsaw (Fig. \ref{fig:App-vision-attribution-chainsaw}), Church (Fig. \ref{fig:App-vision-attribution-church}), Golf (Fig. \ref{fig:App-vision-attribution-golf}) and Springer (Fig. \ref{fig:App-vision-attribution-springer})).  

\begin{figure}[h!]
    \centering
    \includegraphics[width=\linewidth]{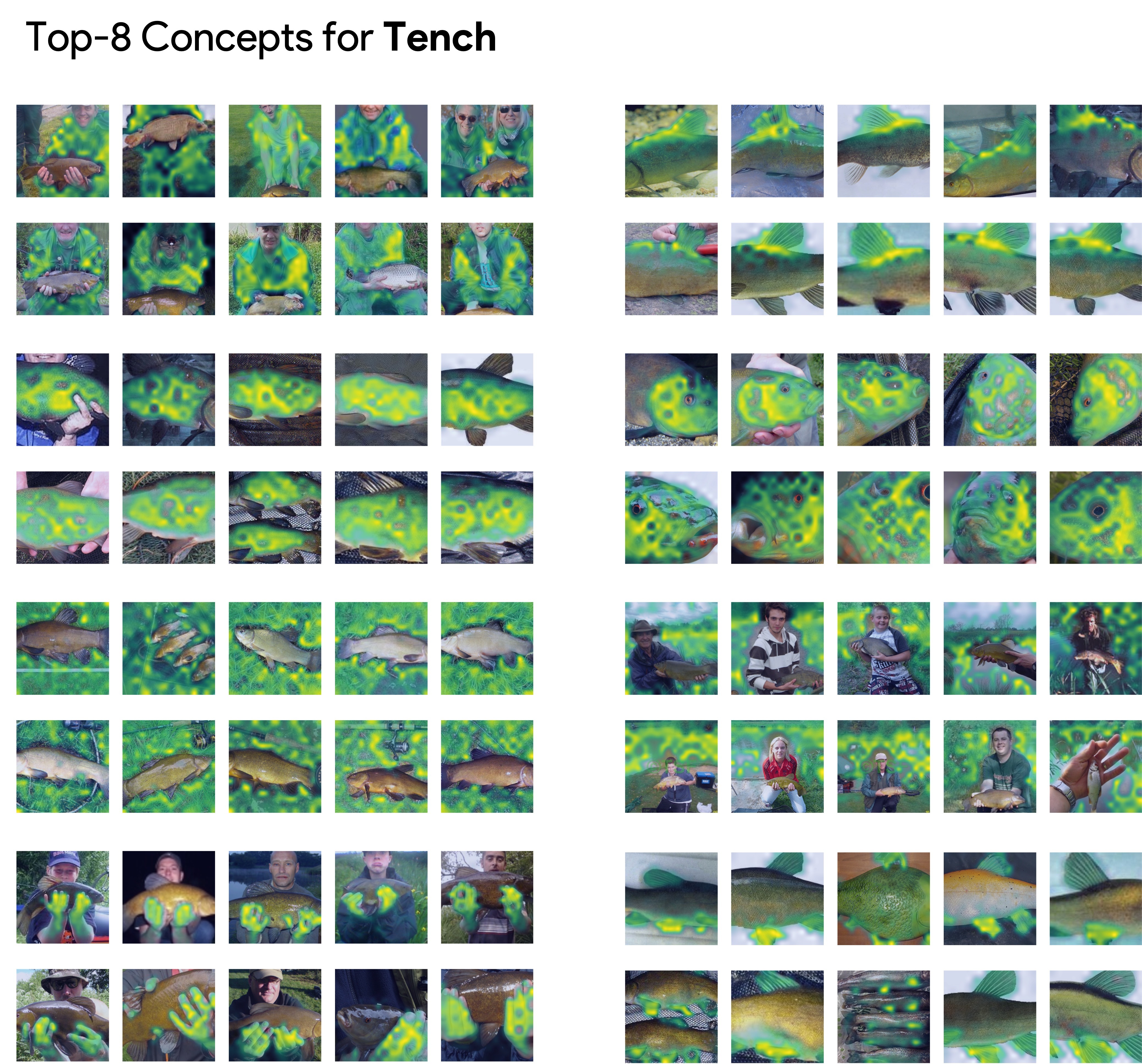}
    \caption{Feature Attribution maps for monosemantic latents from SpaDE on the Tench class}
    \label{fig:App-vision-attribution-tench}
\end{figure}

\begin{figure}[h!]
    \centering
    \includegraphics[width=\linewidth]{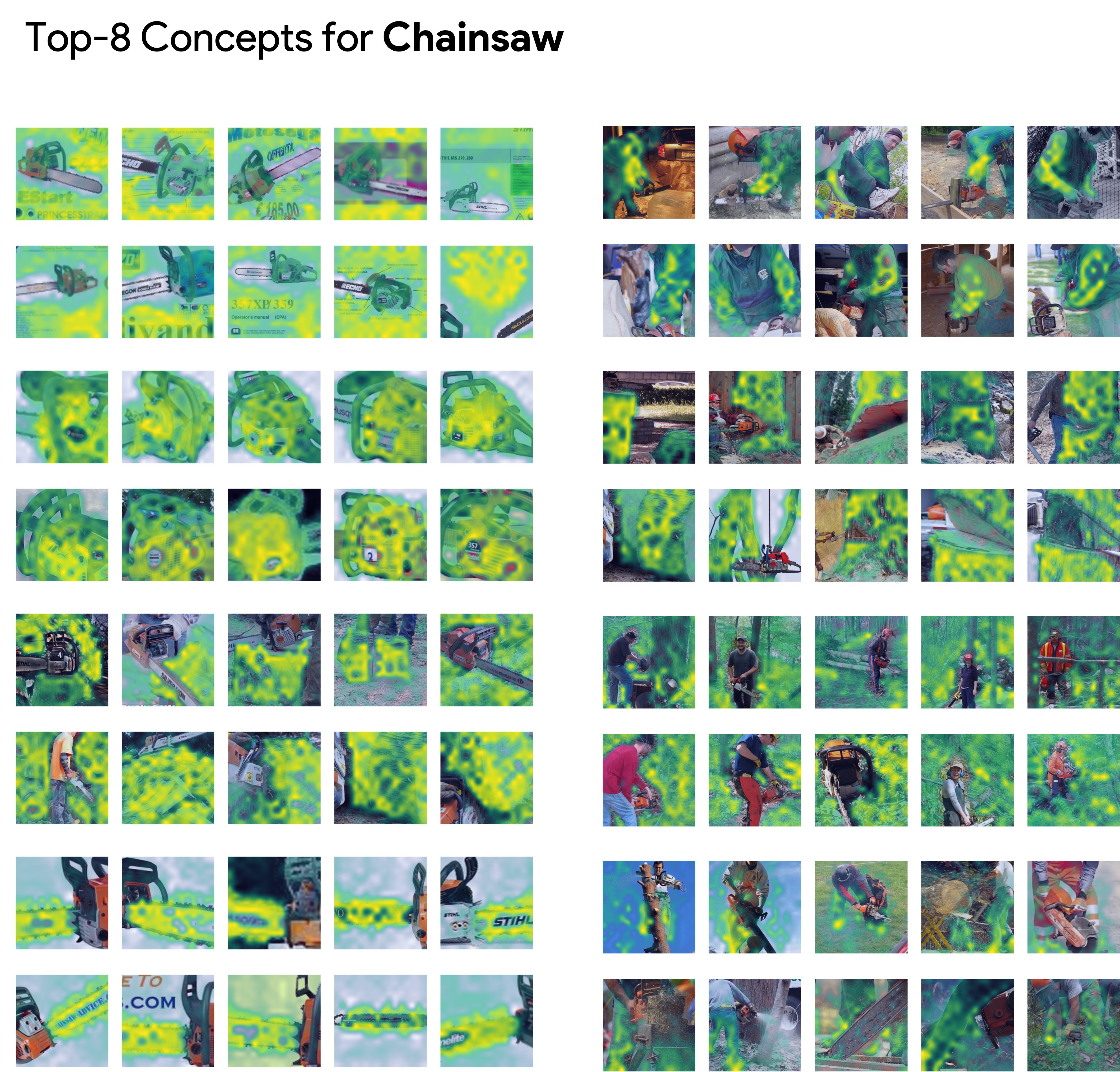}
    \caption{Feature Attribution maps for monosemantic latents from SpaDE on the Chainsaw class}
    \label{fig:App-vision-attribution-chainsaw}
\end{figure}

\begin{figure}[h!]
    \centering
    \includegraphics[width=\linewidth]{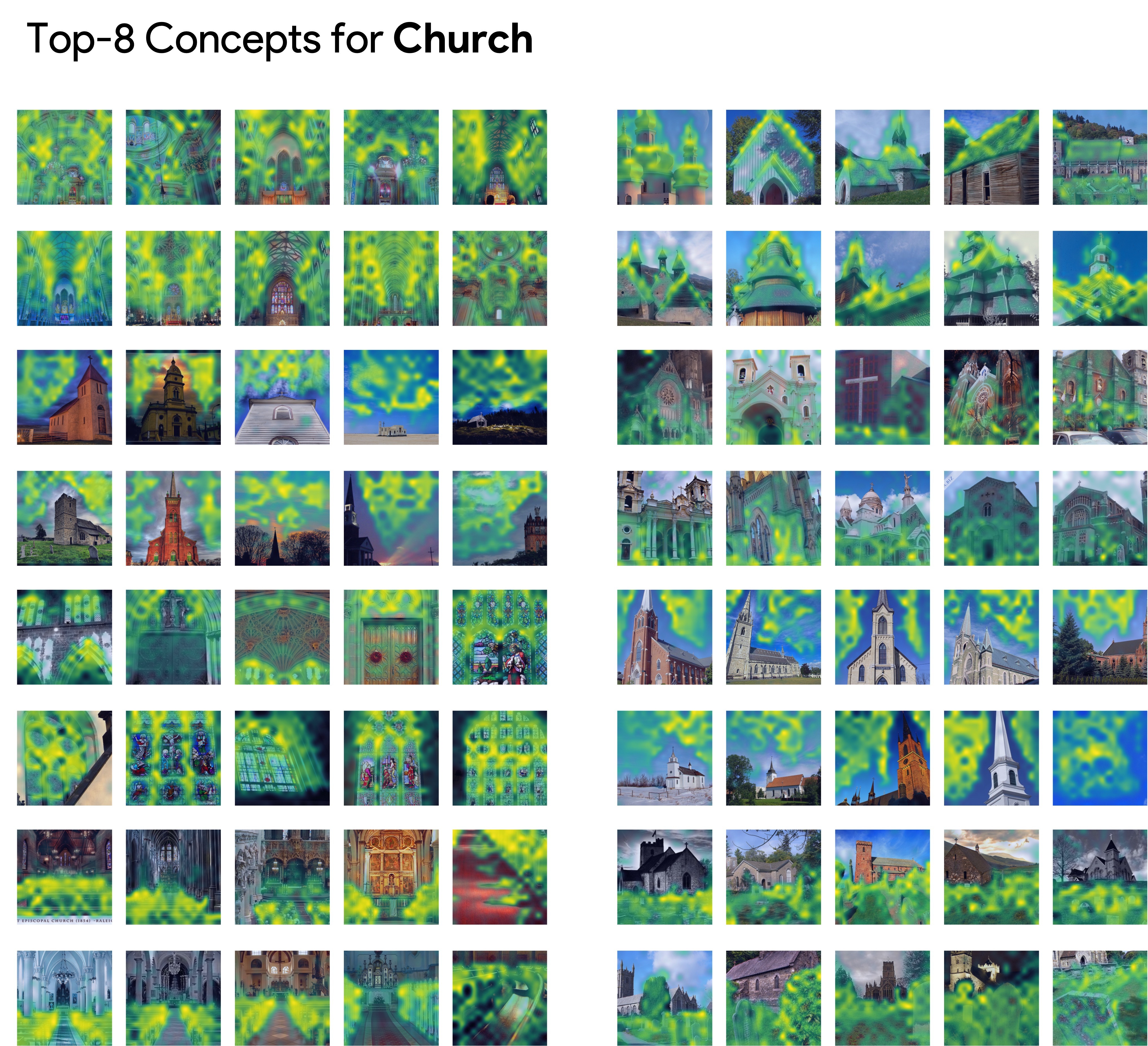}
    \caption{Feature Attribution maps for monosemantic latents from SpaDE on the Church class}
    \label{fig:App-vision-attribution-church}
\end{figure}

\begin{figure}[h!]
    \centering
    \includegraphics[width=\linewidth]{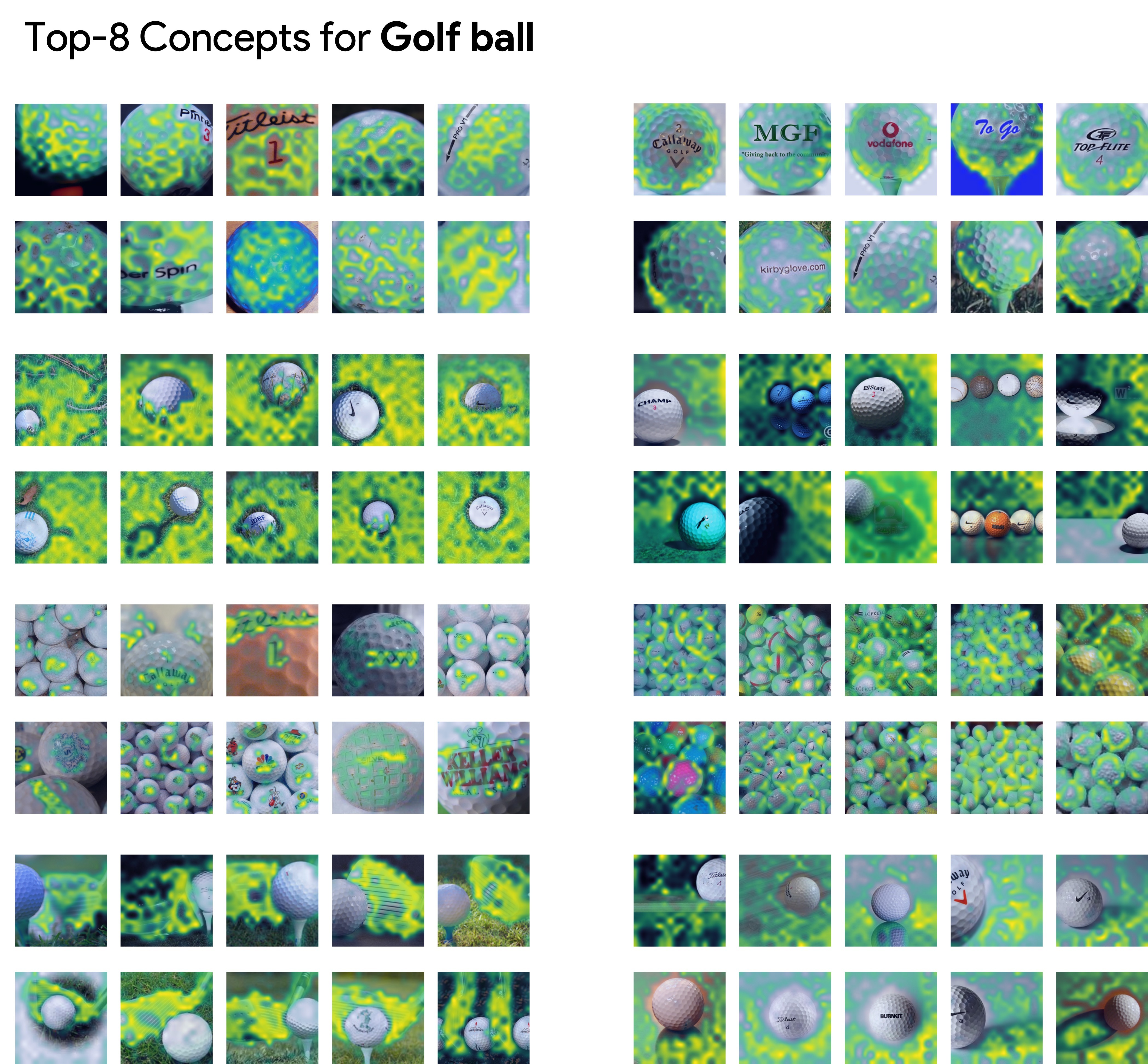}
    \caption{Feature Attribution maps for monosemantic latents from SpaDE on the Golf class}
    \label{fig:App-vision-attribution-golf}
\end{figure}

\begin{figure}[h!]
    \centering
    \includegraphics[width=\linewidth]{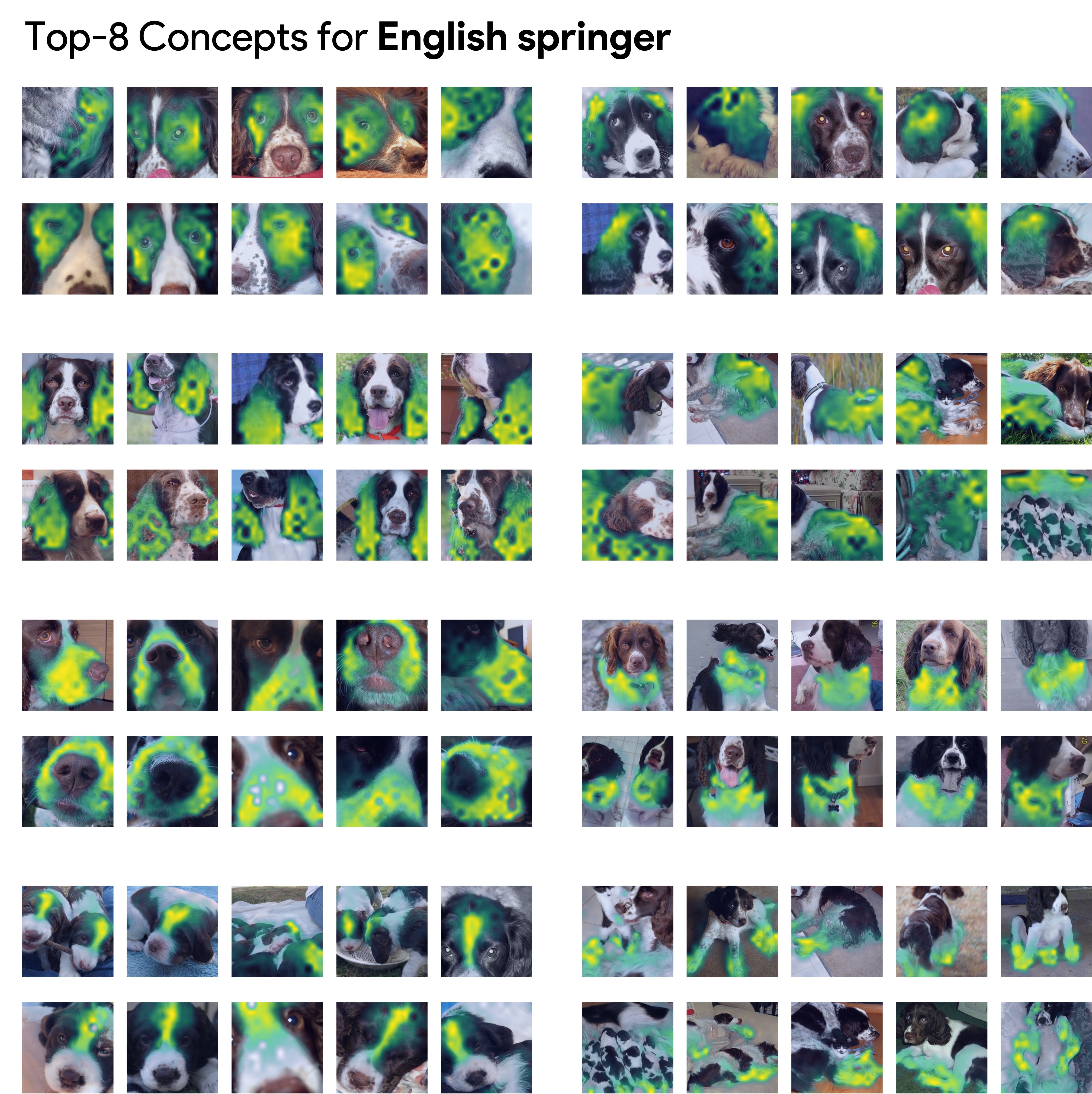}
    \caption{Feature Attribution maps for monosemantic latents from SpaDE on the Springer class}
    \label{fig:App-vision-attribution-springer}
\end{figure}

\clearpage
\newpage
\section*{NeurIPS Paper Checklist}

\begin{enumerate}

\item {\bf Claims}
    \item[] Question: Do the main claims made in the abstract and introduction accurately reflect the paper's contributions and scope?
    \item[] Answer: \answerYes{} 
    \item[] Justification: We provide both theoretical and empirical justification for our claims. Theory is in Sections 3 and 4 and in Appendix D, while experiments are in Section 5 and Appendix E.
    \item[] Guidelines:
    \begin{itemize}
        \item The answer NA means that the abstract and introduction do not include the claims made in the paper.
        \item The abstract and/or introduction should clearly state the claims made, including the contributions made in the paper and important assumptions and limitations. A No or NA answer to this question will not be perceived well by the reviewers. 
        \item The claims made should match theoretical and experimental results, and reflect how much the results can be expected to generalize to other settings. 
        \item It is fine to include aspirational goals as motivation as long as it is clear that these goals are not attained by the paper. 
    \end{itemize}

\item {\bf Limitations}
    \item[] Question: Does the paper discuss the limitations of the work performed by the authors?
    \item[] Answer: \answerYes{} 
    \item[] Justification: The limitations of our work are clearly discussed in Section 6.
    \item[] Guidelines:
    \begin{itemize}
        \item The answer NA means that the paper has no limitation while the answer No means that the paper has limitations, but those are not discussed in the paper. 
        \item The authors are encouraged to create a separate "Limitations" section in their paper.
        \item The paper should point out any strong assumptions and how robust the results are to violations of these assumptions (e.g., independence assumptions, noiseless settings, model well-specification, asymptotic approximations only holding locally). The authors should reflect on how these assumptions might be violated in practice and what the implications would be.
        \item The authors should reflect on the scope of the claims made, e.g., if the approach was only tested on a few datasets or with a few runs. In general, empirical results often depend on implicit assumptions, which should be articulated.
        \item The authors should reflect on the factors that influence the performance of the approach. For example, a facial recognition algorithm may perform poorly when image resolution is low or images are taken in low lighting. Or a speech-to-text system might not be used reliably to provide closed captions for online lectures because it fails to handle technical jargon.
        \item The authors should discuss the computational efficiency of the proposed algorithms and how they scale with dataset size.
        \item If applicable, the authors should discuss possible limitations of their approach to address problems of privacy and fairness.
        \item While the authors might fear that complete honesty about limitations might be used by reviewers as grounds for rejection, a worse outcome might be that reviewers discover limitations that aren't acknowledged in the paper. The authors should use their best judgment and recognize that individual actions in favor of transparency play an important role in developing norms that preserve the integrity of the community. Reviewers will be specifically instructed to not penalize honesty concerning limitations.
    \end{itemize}

\item {\bf Theory assumptions and proofs}
    \item[] Question: For each theoretical result, does the paper provide the full set of assumptions and a complete (and correct) proof?
    \item[] Answer: \answerYes{} 
    \item[] Justification: Our results about the limitations of existing SAEs are proved in Appendix D through derivations of the receptive field structure.
    \item[] Guidelines:
    \begin{itemize}
        \item The answer NA means that the paper does not include theoretical results. 
        \item All the theorems, formulas, and proofs in the paper should be numbered and cross-referenced.
        \item All assumptions should be clearly stated or referenced in the statement of any theorems.
        \item The proofs can either appear in the main paper or the supplemental material, but if they appear in the supplemental material, the authors are encouraged to provide a short proof sketch to provide intuition. 
        \item Inversely, any informal proof provided in the core of the paper should be complemented by formal proofs provided in appendix or supplemental material.
        \item Theorems and Lemmas that the proof relies upon should be properly referenced. 
    \end{itemize}

    \item {\bf Experimental result reproducibility}
    \item[] Question: Does the paper fully disclose all the information needed to reproduce the main experimental results of the paper to the extent that it affects the main claims and/or conclusions of the paper (regardless of whether the code and data are provided or not)?
    \item[] Answer: \answerYes{} 
    \item[] Justification: We include experimental setups in brief in Section 5, and provide extensive details in Appendix C.
    \item[] Guidelines:
    \begin{itemize}
        \item The answer NA means that the paper does not include experiments.
        \item If the paper includes experiments, a No answer to this question will not be perceived well by the reviewers: Making the paper reproducible is important, regardless of whether the code and data are provided or not.
        \item If the contribution is a dataset and/or model, the authors should describe the steps taken to make their results reproducible or verifiable. 
        \item Depending on the contribution, reproducibility can be accomplished in various ways. For example, if the contribution is a novel architecture, describing the architecture fully might suffice, or if the contribution is a specific model and empirical evaluation, it may be necessary to either make it possible for others to replicate the model with the same dataset, or provide access to the model. In general. releasing code and data is often one good way to accomplish this, but reproducibility can also be provided via detailed instructions for how to replicate the results, access to a hosted model (e.g., in the case of a large language model), releasing of a model checkpoint, or other means that are appropriate to the research performed.
        \item While NeurIPS does not require releasing code, the conference does require all submissions to provide some reasonable avenue for reproducibility, which may depend on the nature of the contribution. For example
        \begin{enumerate}
            \item If the contribution is primarily a new algorithm, the paper should make it clear how to reproduce that algorithm.
            \item If the contribution is primarily a new model architecture, the paper should describe the architecture clearly and fully.
            \item If the contribution is a new model (e.g., a large language model), then there should either be a way to access this model for reproducing the results or a way to reproduce the model (e.g., with an open-source dataset or instructions for how to construct the dataset).
            \item We recognize that reproducibility may be tricky in some cases, in which case authors are welcome to describe the particular way they provide for reproducibility. In the case of closed-source models, it may be that access to the model is limited in some way (e.g., to registered users), but it should be possible for other researchers to have some path to reproducing or verifying the results.
        \end{enumerate}
    \end{itemize}

\item {\bf Open access to data and code}
    \item[] Question: Does the paper provide open access to the data and code, with sufficient instructions to faithfully reproduce the main experimental results, as described in supplemental material?
    \item[] Answer: \answerYes{} 
    \item[] Justification: We provide anonymized code for a subset of the experiments (formal language experiment) for review. We will open source all our code after the review process.
    \item[] Guidelines:
    \begin{itemize}
        \item The answer NA means that paper does not include experiments requiring code.
        \item Please see the NeurIPS code and data submission guidelines (\url{https://nips.cc/public/guides/CodeSubmissionPolicy}) for more details.
        \item While we encourage the release of code and data, we understand that this might not be possible, so “No” is an acceptable answer. Papers cannot be rejected simply for not including code, unless this is central to the contribution (e.g., for a new open-source benchmark).
        \item The instructions should contain the exact command and environment needed to run to reproduce the results. See the NeurIPS code and data submission guidelines (\url{https://nips.cc/public/guides/CodeSubmissionPolicy}) for more details.
        \item The authors should provide instructions on data access and preparation, including how to access the raw data, preprocessed data, intermediate data, and generated data, etc.
        \item The authors should provide scripts to reproduce all experimental results for the new proposed method and baselines. If only a subset of experiments are reproducible, they should state which ones are omitted from the script and why.
        \item At submission time, to preserve anonymity, the authors should release anonymized versions (if applicable).
        \item Providing as much information as possible in supplemental material (appended to the paper) is recommended, but including URLs to data and code is permitted.
    \end{itemize}

\item {\bf Experimental setting/details}
    \item[] Question: Does the paper specify all the training and test details (e.g., data splits, hyperparameters, how they were chosen, type of optimizer, etc.) necessary to understand the results?
    \item[] Answer: \answerYes{} 
    \item[] Justification: We describe extensive details about experimental setup in Appendix C.
    \item[] Guidelines:
    \begin{itemize}
        \item The answer NA means that the paper does not include experiments.
        \item The experimental setting should be presented in the core of the paper to a level of detail that is necessary to appreciate the results and make sense of them.
        \item The full details can be provided either with the code, in appendix, or as supplemental material.
    \end{itemize}

\item {\bf Experiment statistical significance}
    \item[] Question: Does the paper report error bars suitably and correctly defined or other appropriate information about the statistical significance of the experiments?
    \item[] Answer: \answerYes{} 
    \item[] Justification: We show error bars whenever the experiments weren't too expensive.
    \item[] Guidelines:
    \begin{itemize}
        \item The answer NA means that the paper does not include experiments.
        \item The authors should answer "Yes" if the results are accompanied by error bars, confidence intervals, or statistical significance tests, at least for the experiments that support the main claims of the paper.
        \item The factors of variability that the error bars are capturing should be clearly stated (for example, train/test split, initialization, random drawing of some parameter, or overall run with given experimental conditions).
        \item The method for calculating the error bars should be explained (closed form formula, call to a library function, bootstrap, etc.)
        \item The assumptions made should be given (e.g., Normally distributed errors).
        \item It should be clear whether the error bar is the standard deviation or the standard error of the mean.
        \item It is OK to report 1-sigma error bars, but one should state it. The authors should preferably report a 2-sigma error bar than state that they have a 96\% CI, if the hypothesis of Normality of errors is not verified.
        \item For asymmetric distributions, the authors should be careful not to show in tables or figures symmetric error bars that would yield results that are out of range (e.g. negative error rates).
        \item If error bars are reported in tables or plots, The authors should explain in the text how they were calculated and reference the corresponding figures or tables in the text.
    \end{itemize}

\item {\bf Experiments compute resources}
    \item[] Question: For each experiment, does the paper provide sufficient information on the computer resources (type of compute workers, memory, time of execution) needed to reproduce the experiments?
    \item[] Answer: \answerYes{} 
    \item[] Justification: We include relevant details on compute resources in Appendix C.
    \item[] Guidelines:
    \begin{itemize}
        \item The answer NA means that the paper does not include experiments.
        \item The paper should indicate the type of compute workers CPU or GPU, internal cluster, or cloud provider, including relevant memory and storage.
        \item The paper should provide the amount of compute required for each of the individual experimental runs as well as estimate the total compute. 
        \item The paper should disclose whether the full research project required more compute than the experiments reported in the paper (e.g., preliminary or failed experiments that didn't make it into the paper). 
    \end{itemize}
    
\item {\bf Code of ethics}
    \item[] Question: Does the research conducted in the paper conform, in every respect, with the NeurIPS Code of Ethics \url{https://neurips.cc/public/EthicsGuidelines}?
    \item[] Answer: \answerYes{} 
    \item[] Justification: We are confident that the paper conforms with the NeurIPS Code of Ethics in every respect.
    \item[] Guidelines:
    \begin{itemize}
        \item The answer NA means that the authors have not reviewed the NeurIPS Code of Ethics.
        \item If the authors answer No, they should explain the special circumstances that require a deviation from the Code of Ethics.
        \item The authors should make sure to preserve anonymity (e.g., if there is a special consideration due to laws or regulations in their jurisdiction).
    \end{itemize}

\item {\bf Broader impacts}
    \item[] Question: Does the paper discuss both potential positive societal impacts and negative societal impacts of the work performed?
    \item[] Answer: \answerNA{} 
    \item[] Justification: Our work describes limitations in existing approaches to interpret large models and potential ways to mitigate these, and does not directly have any potential harmful impacts.
    \item[] Guidelines:
    \begin{itemize}
        \item The answer NA means that there is no societal impact of the work performed.
        \item If the authors answer NA or No, they should explain why their work has no societal impact or why the paper does not address societal impact.
        \item Examples of negative societal impacts include potential malicious or unintended uses (e.g., disinformation, generating fake profiles, surveillance), fairness considerations (e.g., deployment of technologies that could make decisions that unfairly impact specific groups), privacy considerations, and security considerations.
        \item The conference expects that many papers will be foundational research and not tied to particular applications, let alone deployments. However, if there is a direct path to any negative applications, the authors should point it out. For example, it is legitimate to point out that an improvement in the quality of generative models could be used to generate deepfakes for disinformation. On the other hand, it is not needed to point out that a generic algorithm for optimizing neural networks could enable people to train models that generate Deepfakes faster.
        \item The authors should consider possible harms that could arise when the technology is being used as intended and functioning correctly, harms that could arise when the technology is being used as intended but gives incorrect results, and harms following from (intentional or unintentional) misuse of the technology.
        \item If there are negative societal impacts, the authors could also discuss possible mitigation strategies (e.g., gated release of models, providing defenses in addition to attacks, mechanisms for monitoring misuse, mechanisms to monitor how a system learns from feedback over time, improving the efficiency and accessibility of ML).
    \end{itemize}
    
\item {\bf Safeguards}
    \item[] Question: Does the paper describe safeguards that have been put in place for responsible release of data or models that have a high risk for misuse (e.g., pretrained language models, image generators, or scraped datasets)?
    \item[] Answer: \answerNA{} 
    \item[] Justification: We describe limitations of tools used to interpret large models (SAEs)-- our work poses no risk of misuse.
    \item[] Guidelines:
    \begin{itemize}
        \item The answer NA means that the paper poses no such risks.
        \item Released models that have a high risk for misuse or dual-use should be released with necessary safeguards to allow for controlled use of the model, for example by requiring that users adhere to usage guidelines or restrictions to access the model or implementing safety filters. 
        \item Datasets that have been scraped from the Internet could pose safety risks. The authors should describe how they avoided releasing unsafe images.
        \item We recognize that providing effective safeguards is challenging, and many papers do not require this, but we encourage authors to take this into account and make a best faith effort.
    \end{itemize}

\item {\bf Licenses for existing assets}
    \item[] Question: Are the creators or original owners of assets (e.g., code, data, models), used in the paper, properly credited and are the license and terms of use explicitly mentioned and properly respected?
    \item[] Answer: \answerYes{} 
    \item[] Justification: We include all relevant citations for existing models/ datasets that we use.
    \item[] Guidelines:
    \begin{itemize}
        \item The answer NA means that the paper does not use existing assets.
        \item The authors should cite the original paper that produced the code package or dataset.
        \item The authors should state which version of the asset is used and, if possible, include a URL.
        \item The name of the license (e.g., CC-BY 4.0) should be included for each asset.
        \item For scraped data from a particular source (e.g., website), the copyright and terms of service of that source should be provided.
        \item If assets are released, the license, copyright information, and terms of use in the package should be provided. For popular datasets, \url{paperswithcode.com/datasets} has curated licenses for some datasets. Their licensing guide can help determine the license of a dataset.
        \item For existing datasets that are re-packaged, both the original license and the license of the derived asset (if it has changed) should be provided.
        \item If this information is not available online, the authors are encouraged to reach out to the asset's creators.
    \end{itemize}

\item {\bf New assets}
    \item[] Question: Are new assets introduced in the paper well documented and is the documentation provided alongside the assets?
    \item[] Answer: \answerYes{} 
    \item[] Justification: New synthetic datasets created are included in the code and described in Appendix C. New SAE (SpaDE) is included in the code.
    \item[] Guidelines:
    \begin{itemize}
        \item The answer NA means that the paper does not release new assets.
        \item Researchers should communicate the details of the dataset/code/model as part of their submissions via structured templates. This includes details about training, license, limitations, etc. 
        \item The paper should discuss whether and how consent was obtained from people whose asset is used.
        \item At submission time, remember to anonymize your assets (if applicable). You can either create an anonymized URL or include an anonymized zip file.
    \end{itemize}

\item {\bf Crowdsourcing and research with human subjects}
    \item[] Question: For crowdsourcing experiments and research with human subjects, does the paper include the full text of instructions given to participants and screenshots, if applicable, as well as details about compensation (if any)? 
    \item[] Answer: \answerNA{} 
    \item[] Justification: Our research does not involve human subjects or crowdsourcing.
    \item[] Guidelines:
    \begin{itemize}
        \item The answer NA means that the paper does not involve crowdsourcing nor research with human subjects.
        \item Including this information in the supplemental material is fine, but if the main contribution of the paper involves human subjects, then as much detail as possible should be included in the main paper. 
        \item According to the NeurIPS Code of Ethics, workers involved in data collection, curation, or other labor should be paid at least the minimum wage in the country of the data collector. 
    \end{itemize}

\item {\bf Institutional review board (IRB) approvals or equivalent for research with human subjects}
    \item[] Question: Does the paper describe potential risks incurred by study participants, whether such risks were disclosed to the subjects, and whether Institutional Review Board (IRB) approvals (or an equivalent approval/review based on the requirements of your country or institution) were obtained?
    \item[] Answer: \answerNA{} 
    \item[] Justification: Our research does not involve human subjects or crowdsourcing.
    \item[] Guidelines:
    \begin{itemize}
        \item The answer NA means that the paper does not involve crowdsourcing nor research with human subjects.
        \item Depending on the country in which research is conducted, IRB approval (or equivalent) may be required for any human subjects research. If you obtained IRB approval, you should clearly state this in the paper. 
        \item We recognize that the procedures for this may vary significantly between institutions and locations, and we expect authors to adhere to the NeurIPS Code of Ethics and the guidelines for their institution. 
        \item For initial submissions, do not include any information that would break anonymity (if applicable), such as the institution conducting the review.
    \end{itemize}

\item {\bf Declaration of LLM usage}
    \item[] Question: Does the paper describe the usage of LLMs if it is an important, original, or non-standard component of the core methods in this research? Note that if the LLM is used only for writing, editing, or formatting purposes and does not impact the core methodology, scientific rigorousness, or originality of the research, declaration is not required.
    \item[] Answer: \answerNA{} 
    \item[] Justification: The core contributions of our research- theoretical limitations of SAEs, design of a new SAE incorporating data properties, and demonstrating our claims through experiments- does not involve LLMs.
    \item[] Guidelines:
    \begin{itemize}
        \item The answer NA means that the core method development in this research does not involve LLMs as any important, original, or non-standard components.
        \item Please refer to our LLM policy (\url{https://neurips.cc/Conferences/2025/LLM}) for what should or should not be described.
    \end{itemize}

\end{enumerate}

\end{document}